\newcommand{\clip}{\textup{clip}}
\newcommand{\ours}{Sophia}
\newcommand{\Cat}{\textup{Cat}}
\def\1{\bm{1}}
\DeclareMathAlphabet{\mathsfit}{\encodingdefault}{\sfdefault}{m}{sl}
\SetMathAlphabet{\mathsfit}{bold}{\encodingdefault}{\sfdefault}{bx}{n}
\newcommand{\R}{\mathbb{R}}
\newcommand{\diag}{{\rm diag}}
\newcommand{\norm}[1]{\left\|#1\right\|}
\def\abs#1{\left| #1 \right|}
\newcommand{\inner}[2]{\left\langle #1,#2 \right\rangle}
\newcommand{\diff}{\mathrm{d}}
\newcommand{\eps}{\epsilon}
\newtheorem{theorem}{Theorem}[section]
\newtheorem{lemma}[theorem]{Lemma}
\newtheorem{assumption}[theorem]{Assumption}
\def\shownotes{1}  %set 1 to show author notes
\newcommand{\authnote}[2]{[#1: #2]}
\newcommand{\authnote}[2]{}
\def\shownotes{0}  %set 1 to show author notes
\newcommand{\authnotev}[2]{[#1: #2]}
\newcommand{\authnotev}[2]{}
\title{\ours: A Scalable Stochastic Second-order Optimizer for Language Model Pre-training}
\author{Hong Liu$\ \ \ \ \ $Zhiyuan Li$\ \ \ \ \ $David Hall$\ \ \ \ \ $Percy Liang$\ \ \ \ \ $Tengyu Ma \\
\\
	Stanford University \\
	\texttt{\{hliu99, zhiyuanli, dlwh, pliang, tengyuma\}@cs.stanford.edu}
}
\begin{document}
\maketitle
\newcommand{\name}{{Sophia}}
\begin{abstract}
  Given the massive cost of language model pre-training,
  a non-trivial improvement of the optimization algorithm would lead to a material reduction on the time and cost of training.
  Adam and its variants have been state-of-the-art for years, and more sophisticated second-order (Hessian-based) optimizers often incur too much per-step overhead. In this paper, we propose \ours, \textbf{S}econd-\textbf{o}rder Cli\textbf{p}ped Stoc\textbf{h}astic Opt\textbf{i}miz\textbf{a}tion, a simple scalable second-order optimizer that uses a light-weight estimate of the diagonal Hessian as the pre-conditioner. The update is the moving average of the gradients divided by the moving average of the estimated Hessian, followed by element-wise clipping. The clipping controls the worst-case update size and tames the negative impact of non-convexity and rapid change of Hessian along the trajectory.
  Sophia only estimates the diagonal Hessian every handful of iterations, which has negligible average per-step time and memory overhead. On language modeling with GPT models of sizes ranging from 125M to 1.5B, Sophia achieves a 2x speed-up compared to Adam in the number of steps, total compute, and wall-clock time, achieving the same perplexity with 50$\%$ fewer steps, less total compute, and reduced wall-clock time. 
  Theoretically, we show that Sophia, in a much simplified setting, adapts to the heterogeneous curvatures in different parameter dimensions, and thus has a run-time bound that does not depend on the condition number of the loss. 
\end{abstract}

\section{Introduction}
Language models (LLMs) have gained phenomenal capabilities as their scale grows~\citep{radford2019language,kaplan2020scaling,brown2020language,  zhang2022opt,   
 touvron2023llama,openai2023gpt}. However, pre-training LLMs is incredibly time-consuming due to the massive datasets and model sizes---hundreds of thousands of updates to the model parameters are required. For example, PaLM was trained for two months on 6144 TPUs, which costed 10 million dollars~\citep{chowdhery2022palm}. 

Pre-training efficiency is thus a major bottleneck in scaling up LLMs. 
This work aims to improve pre-training efficiency with a faster optimizer, which either reduces the time and cost to achieve the same pre-training loss, or alternatively achieves better pre-training loss with the same budget.

Adam~\citep{kingma2014Adam} (or its variants~\citep{loshchilov2017decoupled,shazeer2018adafactor,you2019large}) is the dominantly used optimizer for training LLMs, such as GPT~\citep{radford2019language,brown2020language}, OPT~\citep{zhang2022opt}, Gopher~\citep{rae2021scaling} and LLAMA~\citep{touvron2023llama}. 
Designing faster optimizers for LLMs is challenging. First, the benefit of the first-order (gradient-based) pre-conditioner in Adam is not yet well understood~\citep{ liu2020understanding,zhang2020adaptive,kunstner2023noise}. Second, the choice of pre-conditioners is constrained because we can only afford light-weight options whose overhead can be offset by the speed-up in the number of iterations. For example, the block-diagonal Hessian pre-conditioner in K-FAC is  expensive for LLMs~\citep{martens2015optimizing,grosse2016kronecker, ba2017distributed,martens2018kronecker}. On the other hand, ~\citet{chen2023symbolic} automatically search among the light-weight gradient-based pre-conditioners and identify Lion, which is substantially faster than Adam on vision Transformers and diffusion models but only achieves limited speed-up on LLMs~\citep{chen2023symbolic}.

This paper introduces \ours, \textbf{S}econd-\textbf{o}rder Cli\textbf{p}ped Stoc\textbf{h}ast\textbf{i}c Optimiz\textbf{a}tion, a light-weight second-order optimizer that uses an inexpensive stochastic estimate of the diagonal of the Hessian as a pre-conditioner and a clipping mechanism to control the worst-case update size. On pre-training language models such as GPT-2, {\ours} achieves the same validation pre-training loss with 50$\%$ fewer number of steps than Adam. Because {\ours} maintains almost the memory and average time per step, the speedup also translates to 50$\%$ less total compute and 50$\%$ less wall-clock time (See Figure~\ref{fig:head} (a)\&(b)). We also note that comparing the run-time to achieve the same loss is a correct way to compare the speed of optimizers for LLMs; see Section~\ref{sec:evaluation} for more details.

Moreover, the scaling law based on model size from 125M to 770M is in favor of Sophia over Adam---the gap between Sophia and Adam with 100K steps increases as the model size increases (Figure~\ref{fig:head} (c)). 
In particular, Sophia on a 540M-parameter model with 100K steps gives the same validation loss as  Adam on a 770M-parameter model with 100K steps. Note that the latter model needs 40\% more training time and 40\% more inference cost.  

\begin{figure}[t]
\begin{center}
\includegraphics[width=0.4\textwidth]{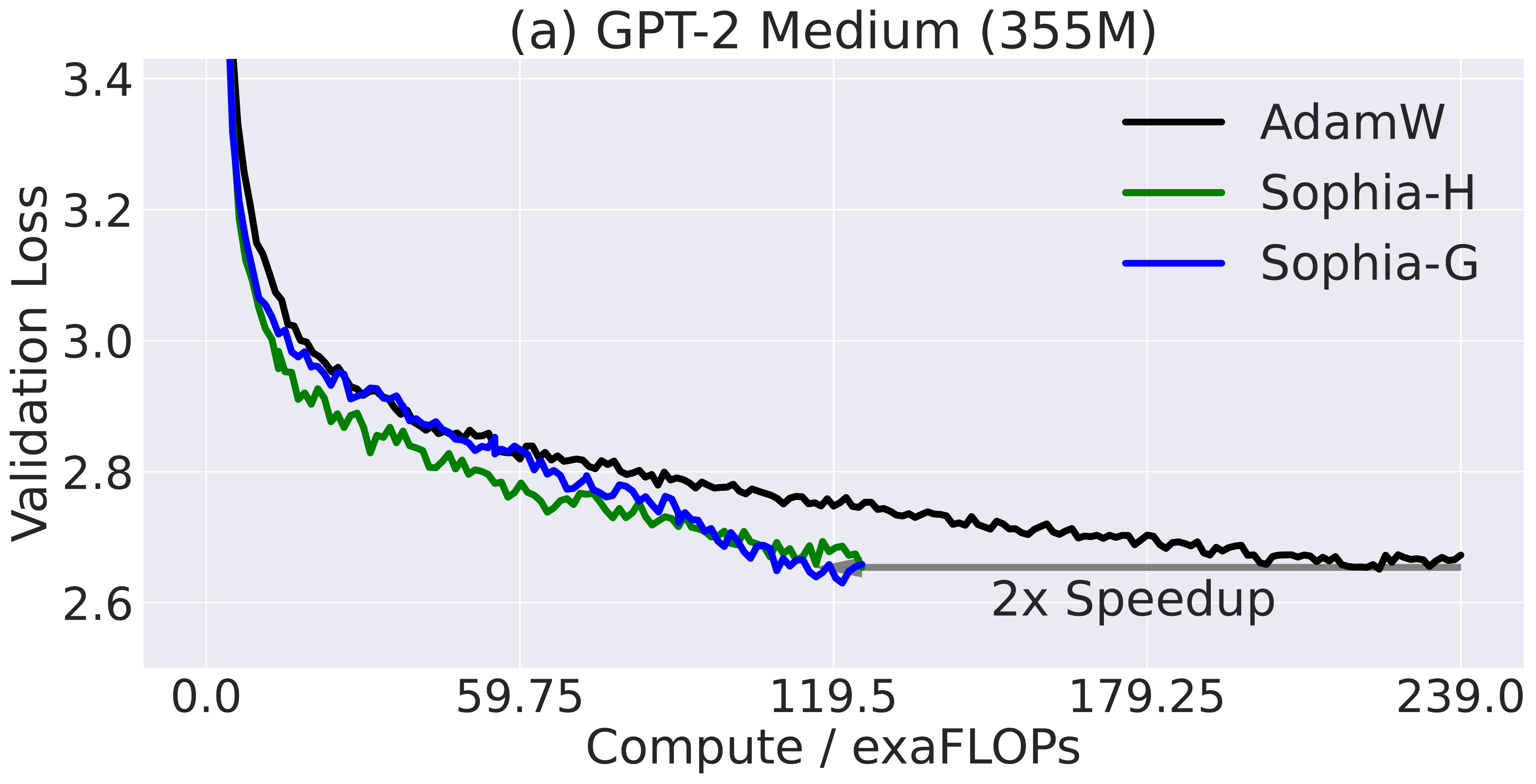}
\hspace{15pt}
\includegraphics[width=0.4\textwidth]{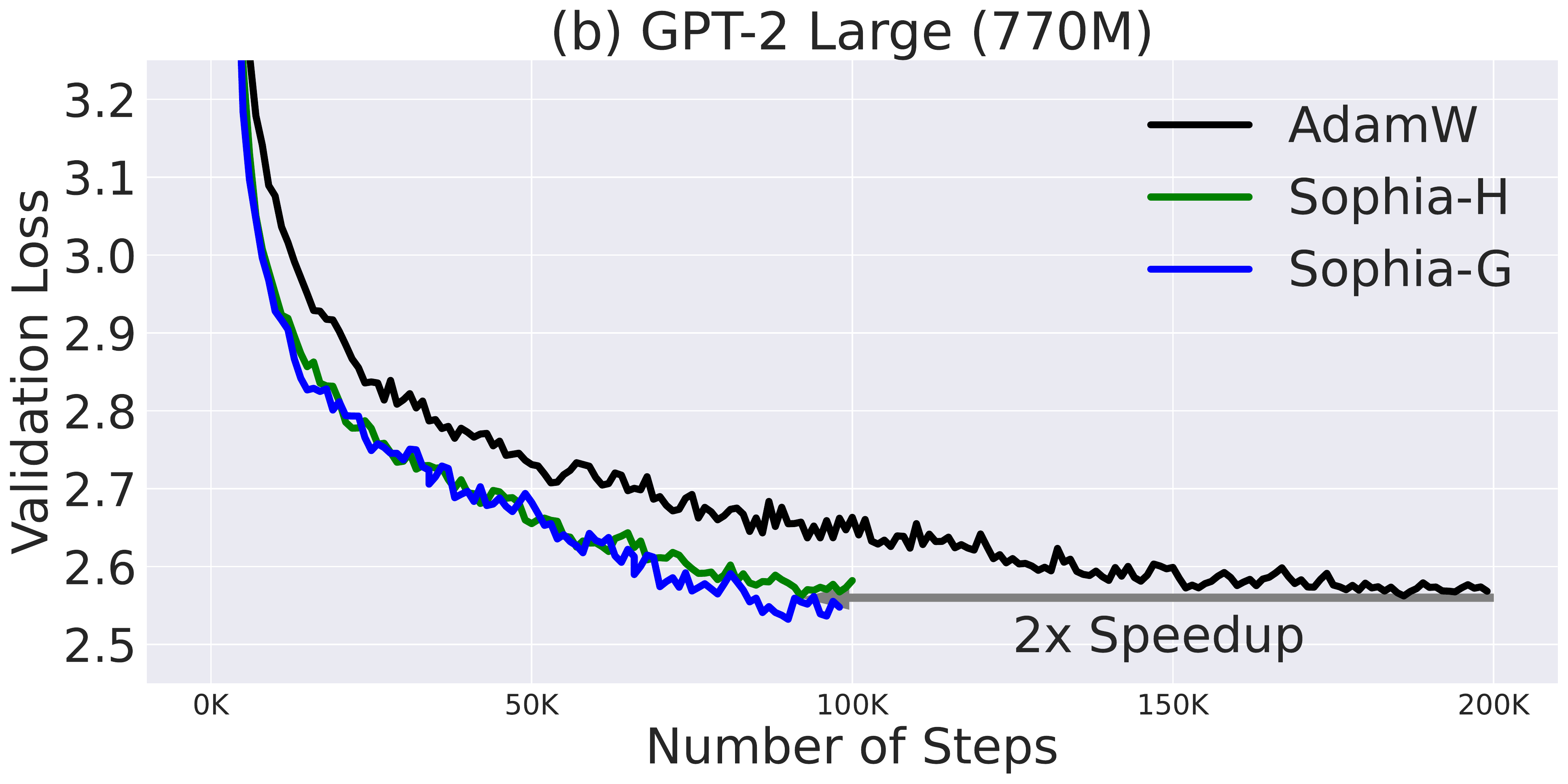}
\includegraphics[width=0.4\textwidth]{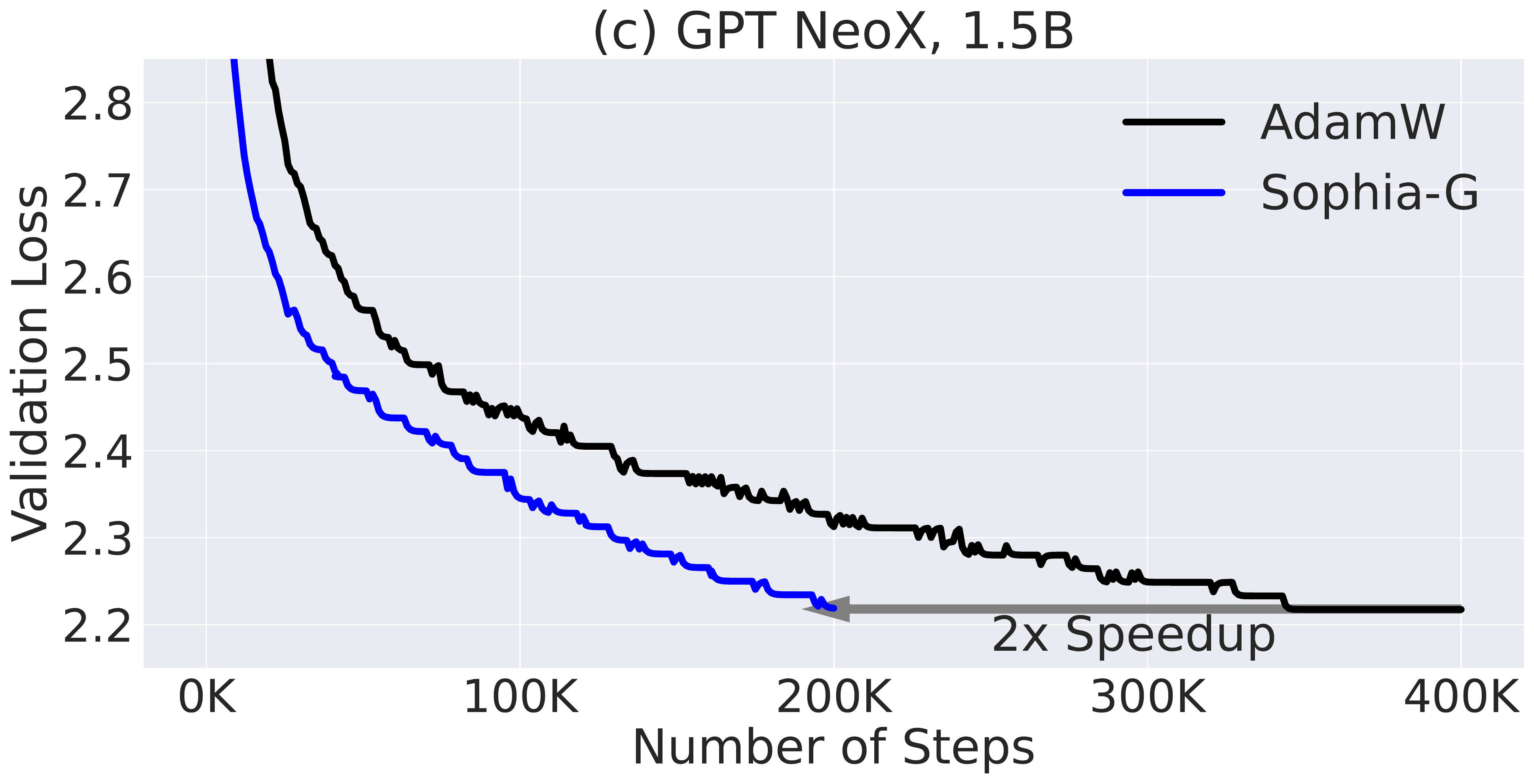}
\hspace{15pt}
\includegraphics[width=0.4\textwidth]{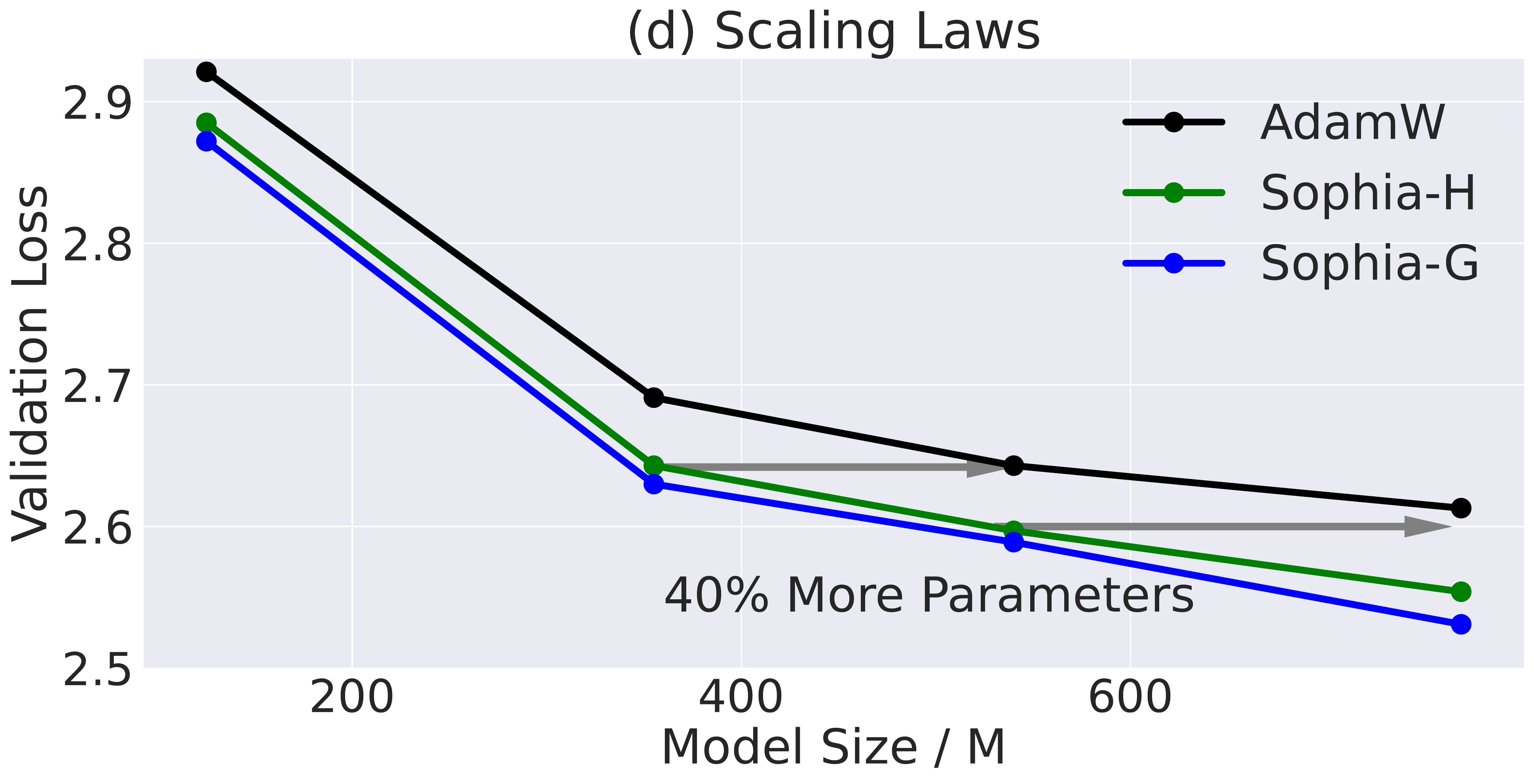}
\caption{{{\ours} achieves a 2x speedup over AdamW in GPT-2 pre-trained on OpenWebText and GPT NeoX pre-trained on the Pile. (a) (b) (c) Comparison of the number of steps needed to achieve the same level of validation loss on (a) GPT-2-medium (355M), (b) GPT-2-large (770M) and (c) GPT NeoX 1.5B. Across all model sizes, {\ours} needs 50$\%$ less time to reach the same validation loss as AdamW. (d) Validation losses of models with different sizes pre-trained for 100K steps. The gap between {\ours} and AdamW gets larger as models size grows. Notably, using {\ours} on a 540M-parameter model for 100K steps results in the same loss as using AdamW on a 770M-parameter model for 100K steps. See Section~\ref{sec:experiments} for details and more results. \label{fig:head}
}
}
\end{center}
\end{figure}

\begin{figure}
\begin{minipage}[b]{.46\textwidth}
\begin{minipage}[b]{\textwidth}
\begin{algorithm}[H]
	\caption{$\textup{Hutchinson}(\theta)$}
	\label{alg:sub}
	\begin{algorithmic}[1]
		\STATE {\bfseries Input:} parameter $\theta$.
            \STATE Compute mini-batch loss $L(\theta)$.
		\STATE Draw $u$ from $\mathcal{N}(0,\mathrm{I}_d)$.
        \RETURN $u \odot \nabla (\langle\nabla L(\theta), u\rangle)$.
	\end{algorithmic}
\end{algorithm}
\end{minipage}
\centering
\includegraphics[width=1.0\textwidth]{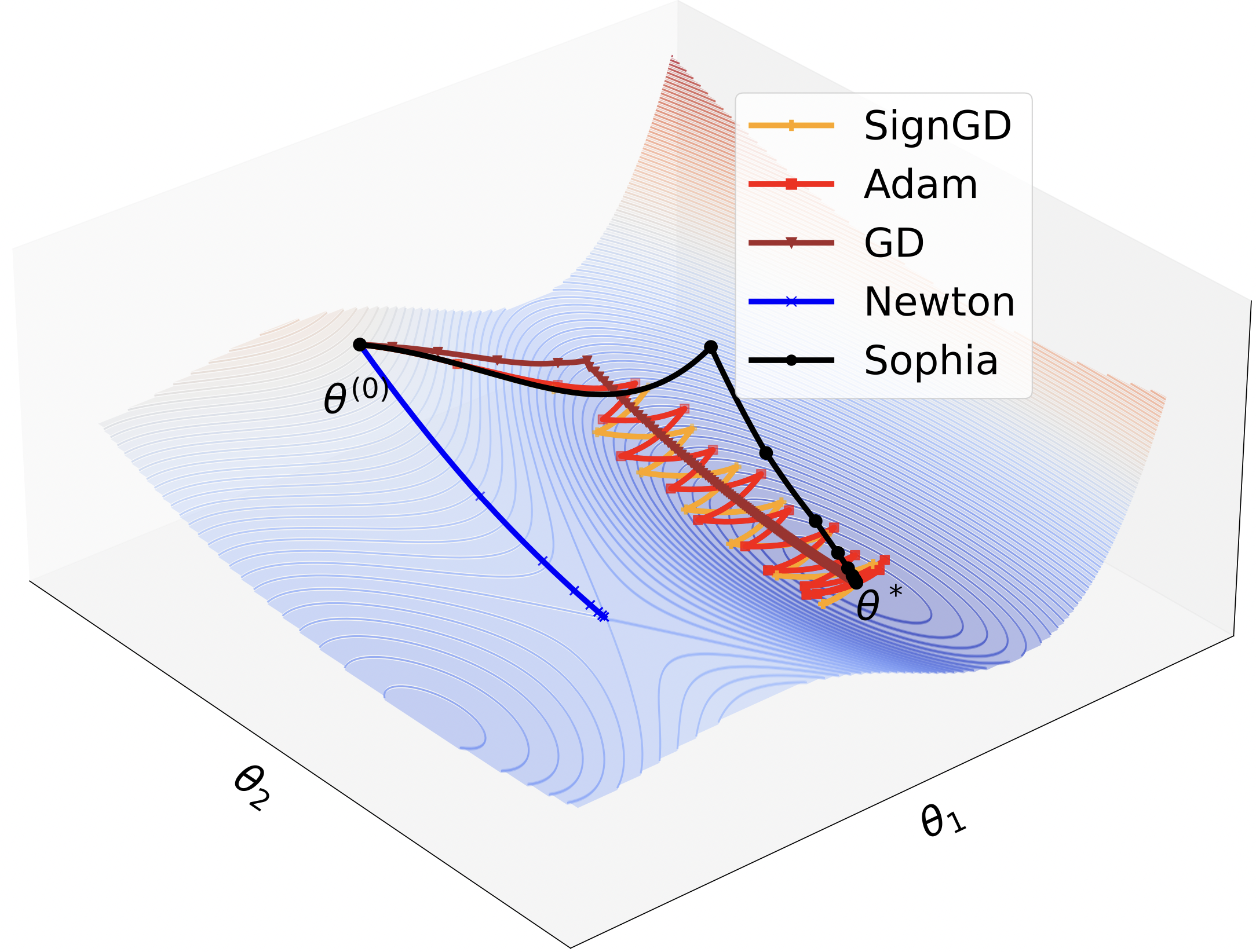}
\vspace{-20pt}
\caption{The motivating toy example. $\theta_{[1]}$ is the sharp dimension and $\theta_{[2]}$ is the flat dimension. GD's learning rate is limited by the sharpness in $\theta_1$, and makes slow progress along $\theta_{[2]}$. Adam and SignGD bounce along $\theta_{[1]}$ while making slow progress along $\theta_{[2]}$. Vanilla Newton's method converges to a saddle point. {\ours} makes fast progress in both dimensions and converges to the minimum with a few steps.\label{fig:toy}}

\vspace{-10pt}

\end{minipage}\hfill
\begin{minipage}[b]{.49\textwidth}
\begin{minipage}[b]{\textwidth}
\begin{algorithm}[H]
	\caption{ $\textup{Gauss-Newton-Bartlett}(\theta)$}
	\label{alg:sub_gn}
	\begin{algorithmic}[1]
		\STATE {\bfseries Input:} parameter $\theta$.
		\STATE Draw a mini-batch of input $\{x_b\}_{b=1}^{B}$.
            \STATE Compute logits on the mini-batch: $\{f(\theta, x_b)\}_{b=1}^{B}$.
            \STATE Sample $\hat{y}_b\sim \textup{softmax}(f(\theta, x_b)), \forall b\in [B]$.
            \STATE Calculate $\hat{g} = \nabla(1/B\sum\ell(f(\theta, x_b),\hat{y}_b))$.
        \RETURN $B\cdot \hat{g} \odot \hat{g}$.
	\end{algorithmic}
\end{algorithm}
\end{minipage}
\begin{minipage}[b]{\textwidth}
\begin{algorithm}[H]
	\caption{\ours~}
	\label{alg:hess_opt}
	\begin{algorithmic}[1]
		\STATE {\bf Input:} $\theta_1$, learning rate $\{\eta_t\}_{t=1}^T$,  hyperparameters $\lambda,\gamma, \beta_{1},\beta_{2}$, $\epsilon$, and estimator choice Estimator $\in\{$Hutchinson, Gauss-Newton-Bartlett$\}$\STATE Set $m_{0} = 0$, $v_{0} = 0$, $h_{1-k} = 0$
		\FOR{$t=1$ {\bf to} $T$}
		\STATE Compute minibach loss $L_t(\theta_t)$.
        \STATE Compute $g_t = \nabla L_t(\theta_t)$.
		\STATE  $m_{t} = \beta_{1} m_{t-1} + (1 - \beta_{1}) g_{t}$ 
        \IF{ $t\ \mathrm{mod}\  k = 1$}
        \STATE Compute $\hat{h}_{t} = \textup{Estimator}(\theta_t)$.
		\STATE  $h_t = \beta_2 h_{t-k} + (1 - \beta_2) \hat{h}_{t}$
        \ELSE \STATE  $h_t = h_{t-1}$
        \ENDIF
        \STATE $\theta_t = \theta_t - \eta_t\lambda \theta_t$ (weight decay)
		\STATE $\theta_{t+1} = \theta_{t} - \eta_t \cdot \clip(m_t / \max\{\gamma \cdot h_t,\epsilon\},1)$
		\ENDFOR
	\end{algorithmic}
\end{algorithm}
\end{minipage}
\end{minipage}
\end{figure}

Concretely, {\ours} estimates the diagonal entries of the Hessian of the loss using a mini-batch of examples every $k$ step (with $k = 10$ in our implementation). 
We consider two options for diagonal Hessian estimators: (a) an unbiased estimator that uses a Hessian-vector product with the same run-time as a mini-batch gradient up to a constant factor, and (b) a biased estimator that uses one mini-batch gradient calculated with resampled labels. Both the two estimators only introduce 5\% overheads per step (in average).
At every step, {\ours} updates the parameter with an exponential moving average (EMA) of the gradient divided by the EMA of the diagonal Hessian estimate, subsequently clipped by a scalar. (All operations are element-wise.) See Algorithm~\ref{alg:hess_opt} for the pseudo-code.

Additionally, {\ours} can be seamlessly integrated into existing training pipelines, without any special requirements on the model architecture or computing infrastructure. With the either of the Hessian estimators, {\ours} only require either standard mini-batch gradients, or Hessian-vector products which are supported in auto-differentiation frameworks such as PyTorch~\citep{pytorch2019} and JAX~\citep{jax2018github}.

Thanks to the Hessian-based pre-conditioner, {\ours} adapts more efficiently, than Adam does, to the heterogeneous curvatures in different parameter dimensions, which can often occur in the landscape of LLMs losses and cause instability or slowdown. 
Sophia has a more aggressive pre-conditioner than Adam---{\ours} applies a stronger penalization to updates in sharp dimensions (where the Hessian is large) than the flat dimensions (where the Hessian is small), ensuring a uniform \textit{loss} decrease across all parameter dimensions. In contrast, Adam's updates are mostly uniform across all parameter dimensions, leading to a slower loss decrease in flat dimensions. (See Section~\ref{sec:motivation} for more discussions.) 
These make Sophia converge in fewer iterations. Thanks to the light-weight diagonal Hessian estimate, the speed-up in the number of steps translates to a speed-up in total compute and wall-clock time.

\ours's clipping mechanism controls the worst-case size of the updates in all directions, safeguarding against the negative impact of inaccurate Hessian estimates, rapid Hessian changes over time, and non-convex landscape (with which the vanilla Newton’s method may converge to local maxima or saddle points instead of local minima). The safeguard allows us to estimate Hessian infrequently (every $k=10$ step) and stochastically. In contrast, prior second-order methods often update Hessian estimates every step~\citep{martens2015optimizing,grosse2016kronecker,  anil2020scalable,yao2021adahessian}.

We provide theoretical analyses of much simplified versions of Sophia on convex functions. 
The runtime bound does not depend on the local condition number (the ratio between maximum and minimum curvature at the local minimum) and the worst-case curvature (that is, the smoothness parameter), demonstrating the advantage of Sophia in adapting to heterogeneous curvatures across parameter dimensions. 
\section{Method}\label{sec:method}
We first instantiate gradient descent (GD) and Adam on a simplified 2D problem and motivate the use of second-order information and per-coordinate clipping in Section~\ref{sec:motivation}. Then, we present \ours~in detail in Section~\ref{sec:sophia}, and the pseudo-code in Algorithm~\ref{alg:hess_opt}. We introduce two choices of estimators of diagonal Hessian used in Sophia in Section~\ref{sec:hess_estimator}. 

\subsection{Motivations}\label{sec:motivation}

\begin{wrapfigure}{r}{5.5cm}
	\centering
\includegraphics[width=0.32\columnwidth]{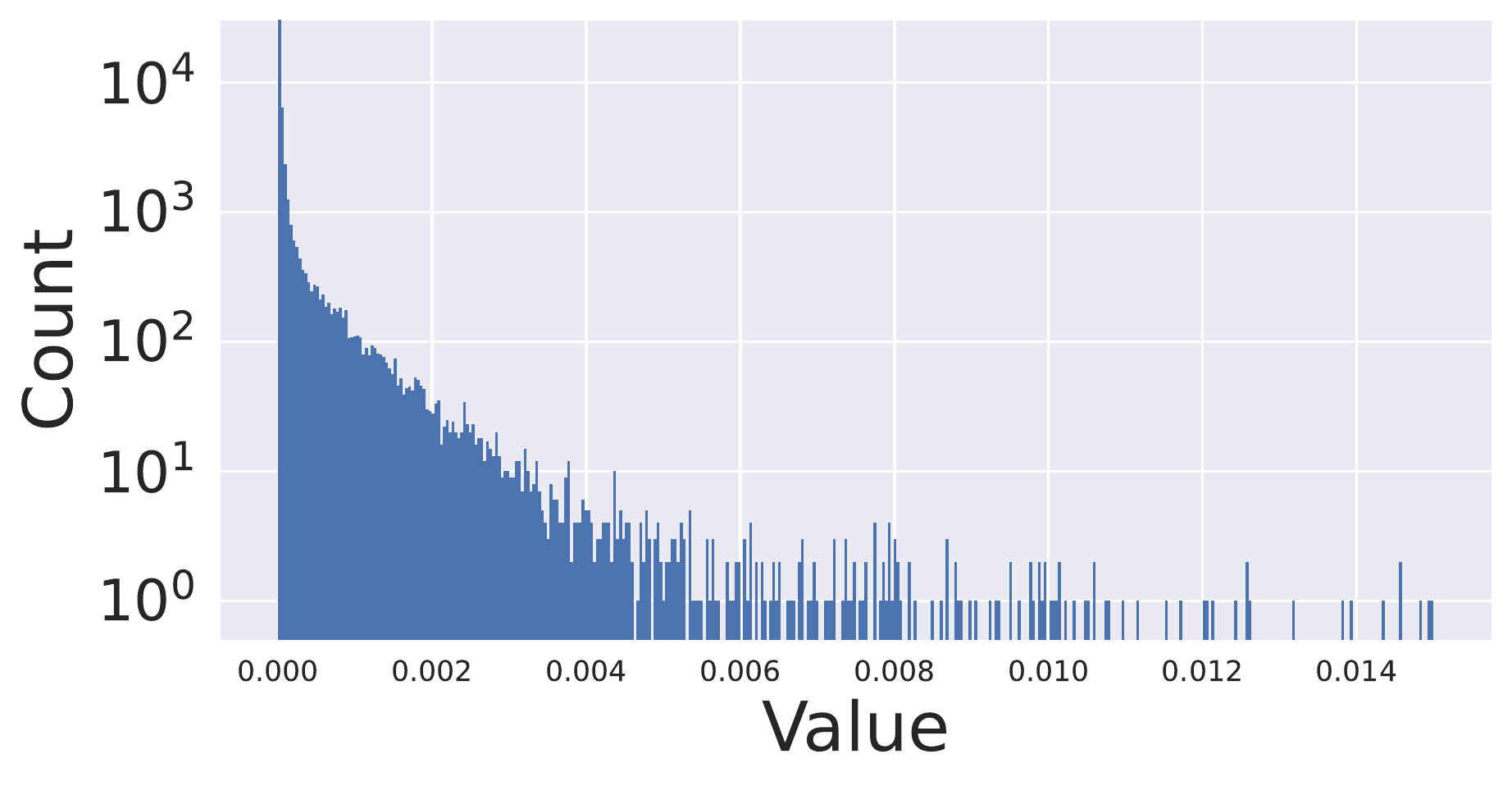}
 \caption{Histogram of positive entries of the diagonal Hessian of a 125M-parameter GPT-2.\label{fig:lt} 
 }
\end{wrapfigure}
\textbf{Heterogeneous curvatures.} The loss functions of modern deep learning problems often have different curvatures across different parameter dimensions~\citep{sagun2016eigenvalues,ghorbani2019investigation,zhang2020adaptive,yao2020pyhessian}. E.g., on a 125M-parameter GPT-2 model, Figure~\ref{fig:lt} shows that the distribution of positive diagonal entries of the Hessian is dispersed.

We demonstrate the limitations of Adam and GD on heterogeneous landscapes by considering a two-dimensional loss function $L(\theta_{[1]},\theta_{[2]}) = L_1(\theta_{[1]}) + L_2(\theta_{[2]})$ where $L_1$ is much sharper than $L_2$. We plot the loss landscape of $L(\theta_{[1]},\theta_{[2]})$ in Figure~\ref{fig:toy}.\footnote{Concretely, in Figure~\ref{fig:toy}, $L_{1}(\theta_{[1]}) = 8(\theta_{[1]} - 1)^2(1.3\theta_{[1]}^2 + 2\theta_{[1]} + 1)$ and $L_2(\theta_{[2]}) = 1/2 (\theta_{[2]} - 4)^2$.}  
For simplicity, we discuss GD and deterministic versions of Adam. Recall that GD’s update in this setting is:
\begin{align}
\theta_{[1]} \leftarrow \theta_{[1]} - \eta \cdot L'_1(\theta_{[1]})\ \ \textup{and}\ \ \theta_{[2]} \leftarrow \theta_{[2]} - \eta\cdot L'_2(\theta_{[2]})\,.\label{eqn:rule_gd_toy}
\end{align}
A common simplification of Adam that is more amenable to analysis~\citep{balles2018dissecting,bernstein2018signsgd,zhuang2020adabelief,kunstner2023noise} is SignGD, which dates back to RProp~\citep{braun1992rprop} that motivated RMSProp~\citep{hinton2012neural} and Adam. 
Observe that without using the EMA (for both the gradient and second moments of the gradient), Adam's update is simplified to $\eta \cdot \nabla L(\theta)/|\nabla L(\theta)| = \eta\cdot \textup{sign}(\nabla L(\theta))$ (where all operations are entry-wise), which is called SignGD. Applying the update rule to our setting gives: 
\begin{align}
\theta_{[1]} \leftarrow \theta_{[1]} - \eta \cdot \textup{sign}(L'_1(\theta_{[1]}))\ \ \textup{and}\ \ \theta_{[2]} \leftarrow \theta_{[2]} - \eta \cdot \textup{sign}(L'_2(\theta_{[2]}))\,.
\end{align}

\textbf{Limitations of GD and SignGD (Adam).} It is well known that the optimal learning rate of GD should be proportional to the inverse of the curvature, that is, the Hessian/second derivative at the local minimum. More precisely, let $h_1$ and $h_2$ be the curvatures of $L_1$ and $L_2$  at the local minimum (and thus $h_1 \gg h_2$). The optimal learning rate for the update of $\theta_{[1]}$ in equation~\eqref{eqn:rule_gd_toy} is $\asymp 1/h_1$, which is much smaller than the optimal learning rate that the update of $\theta_{[2]}$ needs, which is $\asymp 1/h_2$. As a result, the largest shared learning rate can only be $1/h_1$; consequently, the convergence in $\theta_{[2]}$ dimension is slow as demonstrated in the brown curve in Figure~\ref{fig:toy}.

The update size of SignGD is the learning rate $\eta$ in all dimensions. The same update size translates to less progress in decreasing the loss in the flat direction than in the sharp direction. As observed from the yellow curve in Figure~\ref{fig:toy}, the progress of SignGD in the flat dimension $\theta_{[2]}$ is slow because each step only decreases the loss $L_2(\theta_{[2]})$ slightly. On the other hand, along the direction $\theta_{[1]}$, the iterate quickly travels to the valley in the first three steps and then starts to bounce. To fully converge in the sharp dimension, the learning rate $\eta$ needs to decay to 0, which will exacerbate the slow convergence in the flat dimension $\theta_{[2]}$. The trajectory of Adam in this example is indeed similar to SignGD, which is also plotted as the red curve in Figure~\ref{fig:toy}.

The behavior of SignGD and Adam above indicates that a more aggressive pre-conditioning is needed---sharp dimensions should have relatively smaller updates than flat dimensions so that the decrease of loss is equalized in all dimensions. As suggested by well-established literature on second-order optimization~\citep{boyd2004convex}  for convex functions, the optimal pre-conditioner should be the Hessian, which captures the curvature on each dimension; as in Newton's method, the update is the gradient divided by the Hessian in each dimension: 
\begin{align}
\theta_{[1]} \leftarrow \theta_{[1]} - \eta \cdot L'_1(\theta_{[1]}) / h_1\ \ \textup{and}\ \ \theta_{[2]} \leftarrow \theta_{[2]} - \eta \cdot L'_2(\theta_{[2]})/ h_2\,.
\end{align}

\textbf{Limitations of Newton's method.} Nevertheless, Newton’s method has known limitations as well. For non-convex functions, vanilla Newton’s method could converge to a global maximum when the local curvature is negative. In the blue curve of Figure~\ref{fig:toy}, Newton’s method quickly converges to a saddle point instead of a local minimum. The curvature might also change rapidly along the trajectory, making the second-order information unreliable. To address these limitations, we propose considering only pre-conditioners that capture positive curvature, and introduce a pre-coordinate clipping mechanism to mitigate the rapid change of Hessian (more detail in Section~\ref{sec:sophia}). Applying our algorithm on the toy case results in the following update: 
\begin{align}
\theta_{[1]} \leftarrow \theta_{[1]} - \eta \cdot \clip(\nicefrac{ L'_1(\theta_{[1]})}{\max\{h_1,\epsilon\}} ,\rho)\ \textup{and}\ \theta_{[2]} \leftarrow \theta_{[2]} - \eta \cdot \clip(\nicefrac{ L'_2(\theta_{[2]})}{\max\{h_2,\epsilon\}},\rho)\,,\label{eqn:ours_update_toy}
\end{align}
where $\rho$ is a constant to control the worst-case update size, $\epsilon$ is a very small constant (e.g., 1e-12), which avoids dividing by 0. When the curvature of some dimension is rapidly changing or negative and thus the second-order information is misleading and possibly leads to a huge update before clipping, the clipping mechanism kicks in and the optimizer defaults to SignGD (even though this is sub-optimal for benign situations).
Numerous prior methods such as trust region~\citep{conn2000trust}, backtracking line search~\citep{boyd2004convex}, and cubic regularization~\citep{nesterov2006cubic} also tackle the same issue of Newton's method, but the clipping mechanism is much simpler and more efficient. 

As shown in the black curve in Fig.~\ref{fig:toy}, the update in equation~\eqref{eqn:ours_update_toy} starts off similarly to SignGD due to the clipping mechanism in the non-convex region, making descent opposed to converging to a local maximum. Then, in the convex valley, it converges to the global minimum with a few steps. Compared with SignGD and Adam, it makes much faster progress in the flat dimension $\theta_{[2]}$ (because the update is bigger in dimension $\theta_{[2]}$), while avoiding boucing in the sharp dimension $\theta_{[1]}$ (because the update is significantly shrunk in the sharp dimension $\theta_{[1]}$).

\subsection{Sophia: \textit{S}econd-\textit{o}rder Cli\textit{p}ped Stoc\textit{h}astic Opt\textit{i}miz\textit{a}tion}\label{sec:sophia}

Section~\ref{sec:motivation} demonstrates that Adam does not sufficiently adapt to the heterogeneous curvatures. On the other hand, vanilla Newton’s method has a pre-conditioner optimal for convex functions, but is vulnerable to negative curvature and rapid change of Hessian. With these insights, we design a new optimizer, \ours, which is more adaptive to heterogeneous curvatures than Adam, more resistant to non-convexity and rapid change of Hessian than Newton's method, and also uses a low-cost pre-conditioner.

We use $\theta_t$ to denote the parameter at time step $t$. At each step, we sample a mini-batch from the data distribution and calculate the mini-batch loss, denoted by $L_t(\theta_t)$. We denote by $g_t$ the gradient of $L_t(\theta_t)$, i.e. $g_t = \nabla L_t(\theta_t)$. Let $m_t$ be the EMA of gradients, $m_{t} \leftarrow \beta_{1} m_{t-1} + (1 - \beta_{1}) g_{t}$, which is the numerator of the update. 

\textbf{EMA of diagonal Hessian estimates.} \ours~uses a diagonal Hessian-based pre-conditioner, which directly adjusts the update size of different parameter dimensions according to their curvatures. We will present two options in detail in Section~\ref{sec:hess_estimator} for estimating the diagonal Hessian efficiently. 
To mitigate the overhead, we only estimate the Hessian every $k$ steps ($k=10$ in our implementation). At time step $t$ with $t \ \mathrm{mod} \ k = 1$, the estimator returns an estimate $\hat h_t$ of the diagonal of the Hessian of the mini-batch loss. 

Similar to the gradient of the mini-batch loss function, the estimated diagonal Hessian can also have large noise. Inspired by the EMA of moments of gradients in Adam, we also denoise the diagonal Hessian estimates with EMA across iterations. We update the EMA every $k$ steps, resulting in the following update rule for the diagonal Hessian estimate:
\begin{align}
h_t = \beta_2 h_{t-k} + (1 - \beta_2) \hat{h}_{t} \ \textup{ if } t \ \mathrm{mod} \ k = 1 ; \textup{ else } h_t = h_{t-1}\,.
\end{align}

\textbf{Per-coordinate clipping.} As discussed in Section~\ref{sec:motivation}, on nonconvex functions, vanilla Newton's method, which uses Hessian as the pre-conditioner, may converge to local maxima instead of local minima. In addition, the inaccuracy of Hessian estimates and the change of Hessian along the trajectory can make the second-order information unreliable. To this end, we (1) only consider the positive entries of the diagonal Hessian and (2) introduce per-coordinate clipping to the update. For a clipping threshold $\rho > 0$, let the clipping function be $\clip(z,\rho) = \max\{\min\{z,\rho\},-\rho\}$ where all operations are applied coordinate-wise. The update rule is written as: 
\begin{align}
    \theta_{t+1} \leftarrow  \theta_{t} - \eta_t \cdot \clip(m_t / \max\{\gamma \cdot h_t,\epsilon\}, 1),
\end{align}
where $\epsilon > 0$ is a very small constant to avoid dividing by $0$. Note that $\eta_t \cdot \clip(m_t / \max\{\gamma \cdot h_t,\epsilon\}, 1) = \left(\eta_t/\gamma\right) \cdot \clip(m_t / \max\{h_t,\epsilon/\gamma\}, \gamma)$, and thus we essentially clip the original update $m_t/h_t$ by $\gamma$ coordinate-wise, and then re-adjust the final update size by a factor of $\gamma$. The re-adjustment makes the scale of the update less dependent on $\gamma$, because now $\gamma$ only controls the fraction of clipped entries but all clipped entries will eventually be set to $\eta_t$ in the udpate---e.g., when $\gamma$ is extremely small, all entries of $\left(\eta_t/\gamma\right) \cdot \clip(m_t / \max\{h_t,\epsilon/\gamma\}, \gamma)$ will be $\eta_t$. In practice, the choice of $\gamma$ should be tuned based on the the fraction of clipped entries (See Section~\ref{sec:exp_setup} for details). 
We present the pseudo-code of the \ours~in Algorithm~\ref{alg:hess_opt}.

When any entry of $h_t$ is negative, e.g., $h_t[i] < 0$, the corresponding entry in the pre-conditioned gradient $m_t[i]/\max\{\gamma \cdot h_t[i],\epsilon\} = m_t[i]/\epsilon$ is extremely large and has the same sign as $m_t[i]$, and thus $\eta\cdot \clip(m_t[i] / \max\{\gamma\cdot h_t[i],\epsilon\},1) = \eta\cdot \textup{sign}(m_t[i])$, which is the same as stochastic momentum SignSGD.
In other words, Sophia uses stochastic momentum SignSGD as a backup when the Hessian is negative (or mistakenly estimated to be negative or very small.)  We also note that the clipping mechanism controls the worst-case size of the updates in all parameter dimensions to be at most $\rho$, which also improves the stability (which could be a severe issue for second-order methods). Moreover, because for many parameter dimensions, the clipping is not activated and the update is automatically adjusted, our worst-case update size $\eta\rho$ can be chosen to be larger than the worst update size $\eta$ in stochastic momentum SignSGD.

Several previous works~\citep{becker1988improving, chapelle2011improved,schaul2013no}, including the recent work AdaHessian~\citep{yao2021adahessian}, use diagonal Hessian as a pre-conditioner in optimizers for training neural networks. However, they use more frequent Hessian estimations, which leads to significant per-step computation overhead (more than two gradient computations), most likely because of the lack of the clipping mechanism that safeguards against inaccurate and changing Hessian. In general, to the best of our knowledge, there has not been previous reports that showed second-order optimizers achieve a speed-up on decoder-only large language models in wall-clock time or total compute (see more related work and discussions in Section~\ref{sec:rw}).

\subsection{Diagonal Hessian Estimators}\label{sec:hess_estimator}
 We introduce two diagonal Hessian estimators, both of which have memory and run-time costs similar to computing a gradient (up to constant factors).

\textbf{Option 1: Hutchinson's unbiased estimator.} For any loss function $\ell(\theta)$ on parameters $\theta\in \R^d$,  the Hutchinson's estimator  ~\citep{hutchinson1989stochastic,roosta2015improved,yao2021adahessian} first draws $u\in \R^d$ from the spherical Gaussian distribution $\mathcal{N}(0,\mathrm{I}_d)$, and then outputs $\hat{h} = u \odot (\nabla^2 \ell(\theta) u)$, where $\odot$ denotes the element-wise product, and $\nabla^2 \ell(\theta) u$ is the product of the Hessian with the vector $u$. The Hutchinson's estimator is an unbiased estimator for the diagonal of the Hessian, because
\begin{align}
\mathbb{E}[\hat h] = \mathrm{diag}(\nabla^2 \ell(\theta))\,.
\end{align}
The estimator only requires a Hessian-vector product (i.e., $\nabla^2 \ell(\theta) u$), which have efficient implementations in PyTorch and JAX, instead of the full Hessian matrix. 

\newcommand{\ce}{\ell_{\textup{ce}}}
\textbf{Option 2: Gauss-Newton-Bartlett (GNB) estimator.} We leverage the structure of the loss to design a biased stochastic estimator for the diagonal Hessian, following~\citet{schraudolph2002fast,martens2020new,wei2020implicit}. Suppose $\ell(\theta, (x,y))$ is a loss function on an example $(x,y)$ of the form $\ell(\theta, (x,y)) = \ce(f(\theta,x), y)$ where $\ce$ is the cross-entropy loss and $f(\theta, x)\in \R^{V}$ is the logits, and $V$ is the number of items/classes in a multi-class classification problem (e.g., the vocabulary size in LLMs). First, the Hessian of $\ell(\theta, (x,y))$ (w.r.t to variable $\theta$) has the well-known Gauss-Newton decomposition~\citep{ortega2000iterative,schraudolph2002fast} (which is a simple consequence of the chain rule), 
\begin{align}
    \nabla_\theta^2~\ell(\theta) = J_\theta f(\theta,x) \cdot S \cdot  J_\theta f(\theta, x)^\top + J_{\theta\theta}f(\theta, x)[q] \label{eqn:GN}
\end{align}
where $J_\theta f(\theta,x)$ is the Jacobian of $f$ w.r.t to $\theta$ viewed as a matrix in $\R^{d\times V}$,  $S = \frac{\partial^2 \ce(t,y)}{\partial t^2}\Big\vert_{t=f(\theta, x)}\in \R^{V\times V}$ is the second-order derivatives of the loss w.r.t to the logits, $q = \frac{\partial \ce(t,y)}{\partial t}\Big\vert_{t=f(\theta, x)} \in \R^V$ is the first-order derivatives of the loss w.r.t to the logits, and $J_{\theta\theta}f(\theta,x)$ is the second-order derivatives of the multi-variate function $f(\theta, x)$ w.r.t $\theta$, viewed as a linear map from $\R^{V}$ to $\R^{d \times d}$, where $d$ is the dimension of the parameter $\theta$.

In the context of neural networks, past works have found that the second term $J_{\theta\theta}f(\theta, x)[q]$ in \Cref{eqn:GN} is often relative smaller than the first term $J_\theta f(\theta,x) \cdot S \cdot  J_\theta f(\theta, x)^\top$~\citep{sankar2021deeper}, which is often referred to as the Gauss-Newton matrix~\citep{dennis1996numerical,ortega2000iterative,schraudolph2002fast,chen2011hessian} and used as pre-conditioners in second-order optimizers~\citep{botev2017practical,martens2020new,gargiani2020promise}. 
Following this line of work, we build an unbiased estimator for \textit{the diagonal} of the Gauss-Newton matrix, which is a biased estimator for the diagonal of the Hessian. 

We first claim that $S$ only depends $f(\theta, x)$ but not $y$, even though the loss depends on $y$.\footnote{Denote by $p(\theta,x)=\textup{softmax}(f(\theta,x))\in \R^V$ the probability vector obtained by applying softmax on the logits. Indeed, a simple derivation shows that $S = \textup{diagonal}(p(\theta,x)) - p(\theta,x)p(\theta,x)^\top$, where $\textup{diagonal}(p(\theta,x))$ is the matrix with the vector $p(\theta,x)$ residing on the diagonal. In fact, this is a general property of exponential families---the Hessian of the negative log-likelihood of any exponential family distribution only depends on the parameters of that exponential family, but not on the example on which the likelihood is evaluated.} 
Thus, $S = \frac{\partial^2 \ce(t,\hat{y})}{\partial t^2}\Big\vert_{t=f(\theta, x)}$ for any $\hat{y} \in \{1,\dots, V\}$, which implies that
$S = \mathbb{E}_{\hat{y}\sim p(\theta,x)}\left[\frac{\partial^2 \ce(t,\hat{y})}{\partial t^2}\Big\vert_{t=f(\theta, x)}\right].$ Because $\ce(t,y)$ is the negative log-probability of the probabilistic model defined by the categorical distribution $\Cat(t)$ with parameter $t$, by Bartlett's second identity~\citep{bartlett1953approximate}, we have that, 
\begin{align}
S = \mathop{\mathbb{E}}_{\hat{y}\sim \Cat(t)}\left[\frac{\partial^2 \ce(t,\hat{y})}{\partial t^2}\right] = \mathop{\mathbb{E}}_{\hat{y}\sim \Cat(t)}\left[\frac{\partial \ce(t,\hat{y})}{\partial t}\frac{\partial \ce(t,\hat{y})}{\partial t}^\top\right]\,,
\end{align}
where the first equality holds for $t = f(\theta,x)$ and the second equality holds for all $t$ by Bartlett's second identity.
Therefore, the Gauss-Newton matrix satisfies

\begin{align}
    J_\theta f(\theta,x) \cdot S \cdot  J_\theta f(\theta, x)^\top & =  \mathop{\mathbb{E}}_{\hat{y}\sim \Cat(t)}\left[J_\theta f(\theta,x) \frac{\partial \ce(t,\hat{y})}{\partial t}\frac{\partial \ce(t,\hat{y})}{\partial t}^\top J_\theta f(\theta, x)^\top\right]   \nonumber\\
    & = \mathop{\mathbb{E}}_{\hat{y}\sim \Cat(t)}\left[
        \nabla_\theta \ce(f(\theta,x),\hat{y})   \nabla_\theta \ce(f(\theta,x),\hat{y})^\top 
    \right], \label{eqn:24}
\end{align}
which implies that $\textup{diag}(J_\theta f(\theta,x) \cdot S \cdot  J_\theta f(\theta, x)^\top) = \mathop{\mathbb{E}}_{\hat{y}\sim \Cat(t)}\left[
        \nabla_\theta \ce(f(\theta,x),\hat{y}) \odot  \nabla_\theta \ce(f(\theta,x),\hat{y})
    \right]$. Hence, the quantity $\ce(f(\theta,x),\hat{y}) \odot  \nabla_\theta \ce(f(\theta,x),\hat{y})$ is an unbiased estimator of the Gauss-Newton matrix for the Hessian of a one-example loss $\ell(f(\theta,x),y)$. 

\textit{Mini-batch version.} Given a mini-batch of inputs $\{(x_b, y_b)\}_{b=1}^B$. The most natural way to build an estimator for the diagonal of the Gauss-Newton matrix for the Hessian of the mini-batch loss is using 
\begin{align}
\frac{1}{B}\sum_{b=1}^B \nabla \ce(f(\theta,x_b),\hat{y}_b) \odot  \nabla_\theta \ce(f(\theta,x_b),\hat{y}_b)\,, \label{eqn:23}
\end{align}
where $\hat{y}_b$'s are labels sampled from the model on inputs $x_b$'s respectively. However, as noted by~\citet{grosse2022neural}, implementing this estimator is inconvenient under the current auto-differentiation frameworks, where the users only have access to the average gradient over a mini-batch (as opposed to the individual ones). Fortunately, by the Bartlett's first identity~\citep{bartlett1953approximate} (which generally holds for the negative log-likelihood loss of any probabilistic model), we have:
\begin{align}
\forall b, ~~\mathbb{E}_{\hat{y}_b}\nabla \ce(f(\theta,x_b),\hat{y}_b) = 0\,.
\end{align}
Let $\widehat L(\theta) = \frac{1}{B}\sum_{b=1}^B \ce(f(\theta, x_b), \hat{y}_b)$ be the mini-batch loss on the \textit{sampled} labels (as opposed to the original labels). 
Observing that $\hat{y}_b$'s are independent with each other, we have
\begin{align}
\mathbb{E}_{\hat{y}_b's} \left[B \cdot \nabla_\theta \widehat L(\theta) \odot \nabla_\theta \widehat L(\theta)\right] & = 
\mathbb{E}_{\hat{y}_b's} \left[\frac{1}{B} \sum_{b=1}^B \nabla \ce(f(\theta,x_b),\hat{y}_b) \odot \sum_{b=1}^B \nabla \ce(f(\theta,x_b),\hat{y}_b)\right] \nonumber \\
& = \mathbb{E}_{\hat{y}_b's} \left[\frac{1}{B} \sum_{b=1}^B \nabla \ce(f(\theta,x_b),\hat{y}_b) \odot \nabla \ce(f(\theta,x_b),\hat{y}_b)\right]\,.\label{eqn:20}
\end{align}
Note that the RHS of \Cref{eqn:20} is the same as the expectation of \Cref{eqn:23}, which, by \Cref{eqn:24}, also equals to the diagonal of the Gauss-Newton matrix for the mini-batch loss.
Hence, we use $B \cdot \nabla_\theta \widehat L(\theta) \odot \nabla_\theta \widehat L(\theta)$ as the estimator.

\textit{GNB estimator for exponential family.} If $y$ is drawn from an exponential family $p(y;\eta)$ where the natural parameter $\eta$ is set to be $f(\theta, x)$ and the loss function $\ell(f(\theta, x), y)$ is the negative log-likelihood loss for the corresponding probabilistic distribution, then all the derivations above still follow because (1) $S$ still only depends on $f(\theta, x)$ but not $y$, and (2) both the first and second Bartlett's identities still hold. 

\textit{GNB estimator for squared loss.} When $y, f(\theta,x)\in \R$ and $\ell(f(\theta, x), y)= \frac{1}{2}(f(\theta,x)-y)^2$, the $S$ matrix is identity, and thus one can simply use 
$J_\theta f(\theta, x) J_\theta f(\theta,x)^\top$ as the estimator.\footnote{One can verify that the GNB estimator gives the same quantity in expectation if we use a probabilistic model $y \sim \mathcal{N}(f(\theta, x), \sigma^2)$ and go through the derivation of the GNB estimator.}

To the best of our knowledge, ~\cite{wei2020implicit} is the first paper that uses this estimator of Gauss-Newton matrix. Given the use Bartlett's first and second identities that are central to the estimator, we call it Gauss-Newton-Bartlett (GNB) estimator.

\textbf{Comparisons of Hessian estimators.}
The Hutchinson's estimator does not assume any structure of the loss, but requires a Hessian-vector product. The GNB estimator only estimates the Gauss-Newton term but always gives a positive semi-definite (non-negative) diagonal Hessian estimate. The PSDness ensures that the pre-conditioned update is always a descent direction~\citep{dennis1996numerical}. The GNB estimator can also be easily extended to the negative log-likelihood loss of any exponential family distribution, and be adapted to estimating the trace of the Gauss-Newton matrix as in~\citet{wei2020implicit} or efficiently implementing the product of Gauss-Newton matrix with a vector. The authors suspect the GNB estimator has a smaller variance than the Hutchinson's estimator, but more empirical and theoretical investigation are needed to support the hypothesis.  

\section{Experiments}\label{sec:experiments}

We name the algorithm using the Hutchinson's estimator and the GNB estimator Sophia-H and Sophia-G, respectively. We evaluate \ours~on auto-regressive language modeling with GPT-2~\citep{radford2019language} of model sizes ranging from 125M to 770M, and GPT NeoX~\citep{black2022gpt} of sizes 1.5B and 6.6B. Results indicate that \ours~is 2x faster than AdamW~\citep{loshchilov2017decoupled} in number of steps, total compute, and wall-clock time across all model sizes. Moreover, the scaling law is in favor of \ours~over AdamW. 

\subsection{Experimental Setup} \label{sec:exp_setup}

\textbf{Language modeling.} We train autoregressive models on OpenWebText~\citep{Gokaslan2019OpenWeb} and the Pile~\citep{gao2022pile}. Following standard protocol, we set the context length of GPT-2 to 1024, and the context length of GPT-2 NeoX~\citep{black2022gpt} to 2048. We consider GPT-2 with 125M (small), 355M (medium), and 770M (large) parameters, and GPT NeoX with 1.5B and 6.6B parameters, respectively. Detailed model configurations are deferred to Section~\ref{sec:imp_detail}.

\textbf{Baselines.} We mainly compare {\ours} and Adam with decoupled weight decay (AdamW)~\citep{loshchilov2017decoupled} which is the dominantly used optimizer on language modeling tasks, AdaHessian~\citep{yao2021adahessian} which uses the EMA of the square of the diagonal Hessian estimate in its denominator, and Lion~\citep{chen2023symbolic}, which is an first-order adaptive optimizer discovered by symbolic search. For the 30M model, all hyperparameters are tuned with grid search. 
For other models, all hyperparmeters but the peak learning rate are configured as identical to those found on the 30M model. For models with size 125M and 355M, the peak learning rates are obtained through grid search. 
For larger models, we gradually increase the peak learning rate to search for the largest possible peak learning rate such that the training does not blow up, and ensure that the chosen learning rate is approximately the largest in the sense that 1.25 times the chosen learning rate will lead to a blow-up. For AdamW we found the well-established practice ($\beta_1 = 0.9$ and $\beta_2 = 0.95$) works consistently better than other choices~\citep{radford2019language,mistral}. For Lion, we use $\beta_1 = 0.95$ and $\beta_2 = 0.98$ following \citet{chen2023symbolic}. Although~\citet{chen2023symbolic} suggests using $0.1$ times the learning rate (LR) of AdamW for vision tasks, we find out the LR should be larger on LMs from the grid search. For AdaHessian, we found $\beta_1 = 0.92$ and $\beta_2 = 0.99$ works the best in the grid search. Details on hyperparameter tuning are deferred to Section~\ref{sec:hyper_param}.

\textbf{Implementation.} We set batch size to 480 for GPT-2 and 2048 for GPT NeoX. We use cosine LR schedule with the final LR equal to 0.05 times the peak LR,  following~\citet{rae2021scaling}. We use the standard gradient clipping (by norm) threshold 1.0. We adopt a fixed 2k steps of LR warm-up. For \ours, we use $\beta_1 = 0.96$, $\beta_2 = 0.99$, $\epsilon=$1e-12 and update diagonal Hessian every 10 steps. For Sophia-H, we use only a subset of 32 examples from the mini-batch to calculate the diagonal Hessian to further reduce overhead. For Sophia-G, we use a subset of 240 examples from the mini-batch to calculate the diagonal Gauss-Newton. We implement the algorithms in PyTorch~\citep{pytorch2019} and JAX~\citep{jax2018github} and train all the models in bfloat16. The 125M and 355M models are trained on A5000 GPUs, while the 770M models are trained on A100 GPUs. We use a TPU v3-128 slice to train the 1.5B and 6.6B GPT NeoX.

\paragraph{Hyperparamter tuning strategy.} We refer to Section~\ref{sec:hyper_param} for the details on hyperparameters and only discuss two key hyperparameters, $\gamma$ and the peak learning rate $\eta$ in the main text. Similar to the protocol of baselines, all other hyperparameters are tuned on a 30M model and remain fixed for all the model sizes. For the peak learning rate and $\gamma$, we found the following strategy general works well, and delivers almost the same performance as those found by grid search. 
\begin{itemize}
    \item On a small model, tune $\gamma$ to make the proportion of coordinates where the update is not clipped (i.e., $|m_t / \max\{\gamma \cdot h_t,\epsilon\}| < 1$) in the range of $10\%-50\%$. If the proportion is too large (or too small), multiply $\gamma$ by $0.5$ (or $2$) and restart. The same $\gamma$ likely can be transferred to models with the same architecture and data but different number of parameters. We use $\gamma = 0.01$ for Sophia-H and $\gamma=0.05$ for Sophia-G in this paper.
    \item Suppose we already find a suitable $\gamma$ following the above procedure. We can then set the learning rate of Sophia to be either 3-5 times the learning rate that one would have used for Lion, or 0.8 times the learning rate that one would have used for AdamW. 
\end{itemize}

\subsection{Evaluation} \label{sec:evaluation}

\paragraph{Methodology for comparing the optimizers for LLMs.} This paper argues that one correct (and arguably preferred) way of claiming optimizer $O_2$ is 2x faster than optimizer $O_1$ is  comparing the following two experiments (for a reasonable variety of $T$):
\begin{itemize}
    \item[1.] running optimizer $O_1$ (e.g. Adam) with $T$ steps, with the optimal learning rate and learning rate schedule (tuned for running for $T$ steps),
    \item[2.] running optimizer $O_2$ (e.g., Sophia) with $T/2$ steps, with any learning rate schedule.
\end{itemize}
If Experiment 2 achieves a loss that is smaller than or equal to the loss of Experiment 1 (for a reasonable sets of choices of $T$), then we say optimizer $O_2$ is 2x faster than optimizer $O_1$.

\newcommand{\eval}{\textup{Eval}}
In more formal language, suppose the loss (or any minimization metric) of an optimizer $O$ with runtime budget $T$ and hypeparameter $H$ (excluding the runtime $T$) is denoted by $\eval(O, T, H)$. Then, we say optimizer $O_2$ is $k$-times faster than $O_1$ if 
\begin{align}
    \exists H_2, ~~\min_{H_1} ~\eval(O_1, T, H_1) \ge \eval(O_2, T/k, H_2) \,. \label{eq:compare}
\end{align}
Note that the equation above implies $\min_{H_1}\eval(O_1, T, H_1) \ge \min_{H_2} \eval(O_2, T/k, H_2)$, but is simpler to verify than the latter. 
We note that many modern learning rate schedulers such as cosine learning rate~\citep{loshchilov2016sgdr} are highly sensitive to a pre-specific total number of steps. The optimal hyperparameters are also sensitive to the total budget $T$. Thus, tuning the hyperparameters of the baseline $O_1$ with a pre-specified budget $T$ (aka, minimizing $H_1$ given $T$) is critical to ensure fair comparison. (On the other hand, the tuning strategy for $H_2$ is less important because we just need to show the existence of a good $H_2$.)

Moreover, we note that, in equation~\eqref{eq:compare}, we need to insist $\ge$ without any approximations. The difference in losses of various experiments might be seemingly small--- $\eval(\textup{Adam}, T/2, H)$ is likely to be only slightly higher than $\eval(\textup{Adam}, T, H)$ (assuming the peak learning rate is a hyperparameter), and thus allowing any approximation in the criterion~\eqref{eq:compare} may lead to the fallacy statement that ``Adam is 2x faster than Adam''. In fact, Figure~\ref{fig:gpt2-1} (a) shows that even with the same peak learning rate, the learning rate in a run with $T/2$ steps decays faster than the learning rate in a run with $T$ steps. Moreover, the $T/2$-steps run tends to have a initial faster decay of loss than the $T$-steps run but stop at a higher final loss, and the latter is not a continuation of the former.  

Finally, a tempting comparison would be comparing the following experiment with Experiment 1:
\begin{itemize}
    \item[2'.] running optimizer $O_2$ (e.g., Sophia) with $T$ steps, with any learning rate schedule, and recording the performance of the $T/2$-th checkpoint. 
\end{itemize}

We note that if one were able to find an optimizer $O_2$ that is 2x faster than $O_1$ under the proposed criterion (comparing 1\&2), then one can also design another optimizer $O_2'$ (which run for $T$ steps) that is 2x faster than $O_1$ under the second criterion (comparing 1\&2'): first running optimizer $O_2$ for $T/2$ steps and do nothing for the rest of the $T/2$ steps. 
Therefore, even though we will also provide comparison between Experiment 1\&2' in Figure~\ref{fig:gpt2} as an alternative, we use the comparison between Experiment 1\&2 quantitatively in most of the paper.

\paragraph{Technical details.} Following the methodology above, we train baselines and Sophia for 100K, 200K, or 400K, and mainly compare 400k-steps baseline vs 200K-steps Sophia, and 200k-steps baseline vs 100k-steps Sophia. 
We primarily evaluate the models with their log perplexity and plot the loss curves. We also report in-context learning results (with 2-shot exemplars and greedy decoding) on SuperGLUE~\citep{wang2019superglue}. We average the results of 5 prompts (Section~\ref{sec:eval}).

\begin{figure}[t]
\begin{center}
\includegraphics[width=0.235\textwidth]{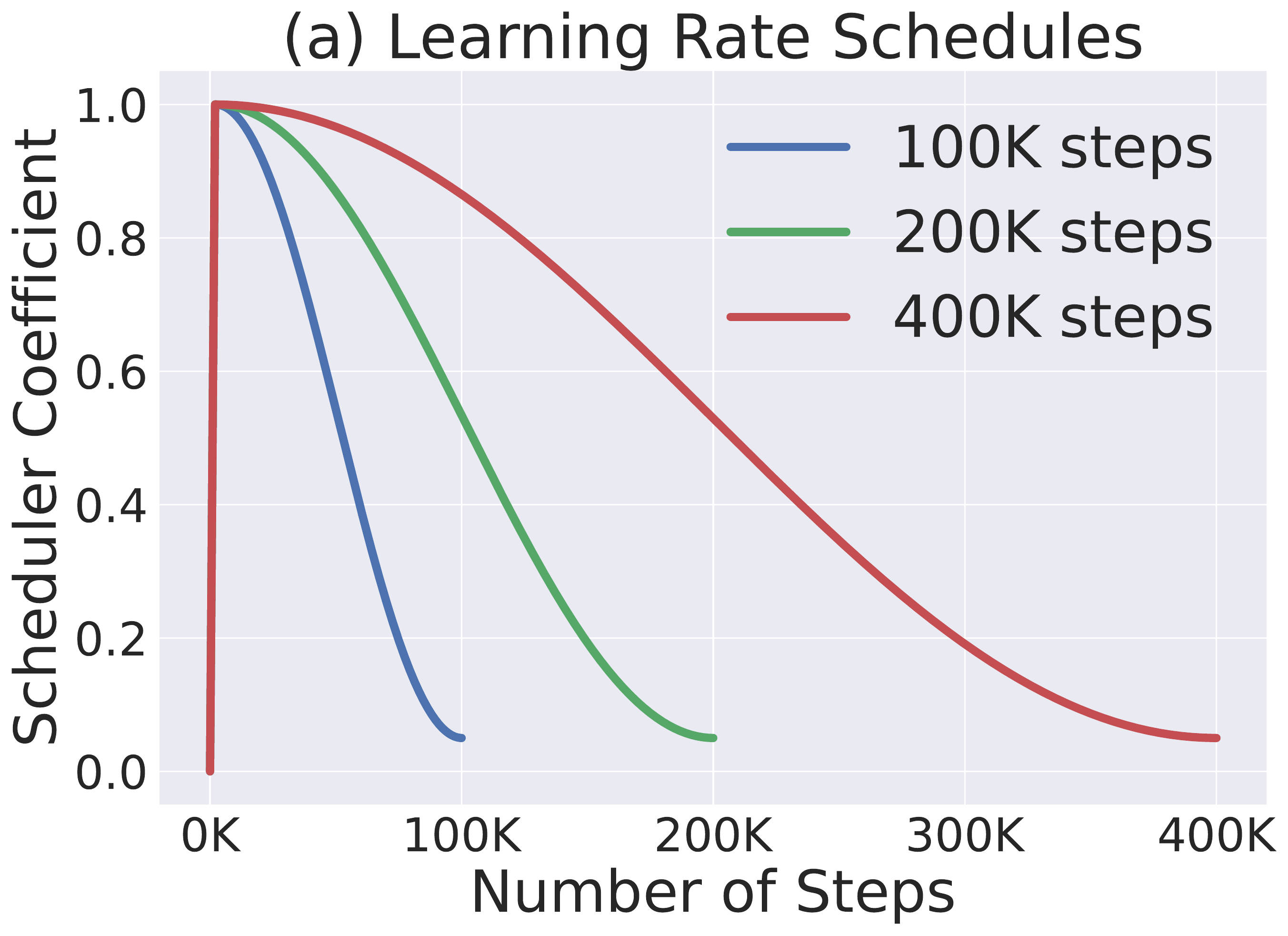}
\includegraphics[width=0.235\textwidth]{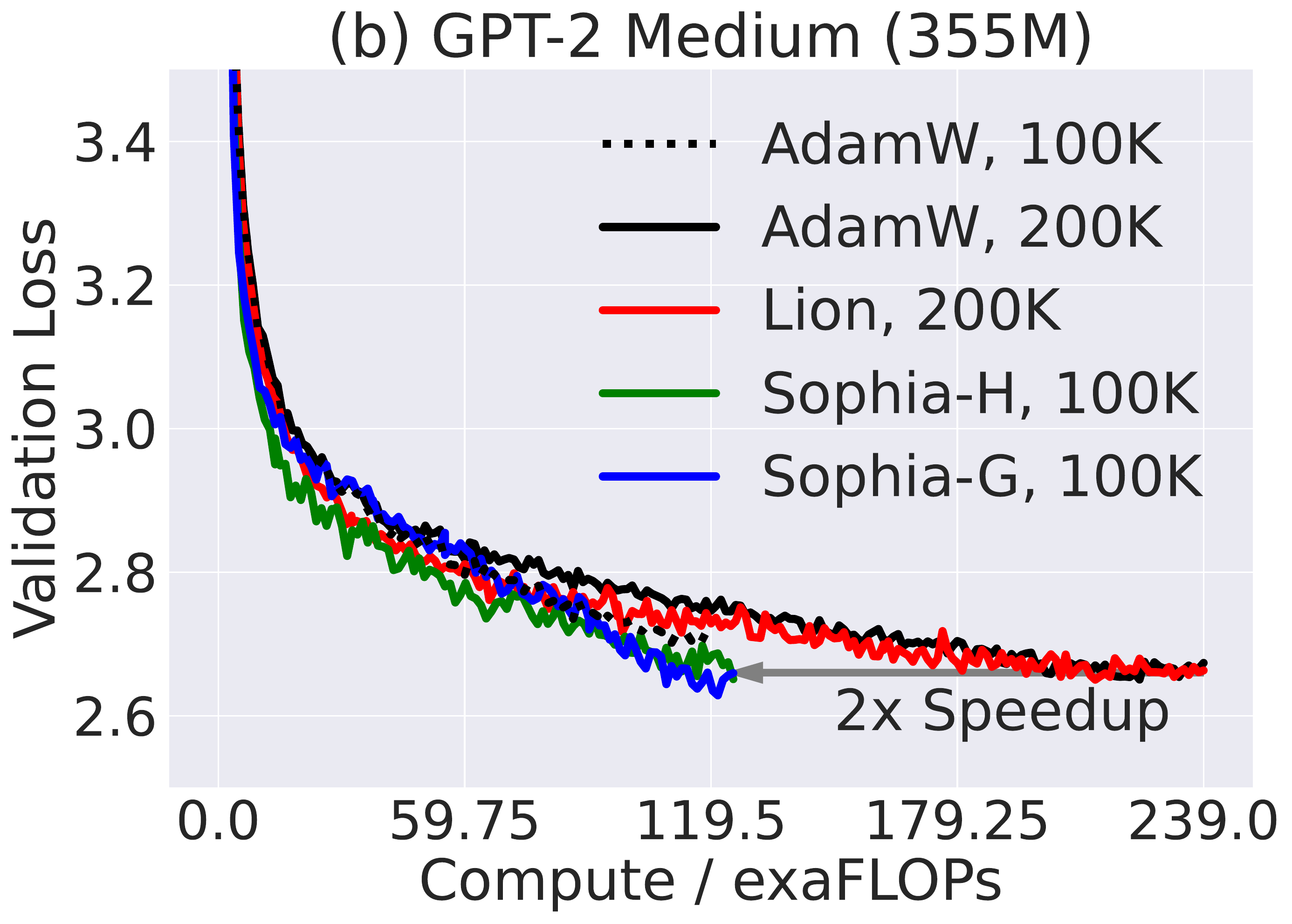}
\includegraphics[width=0.235\textwidth]{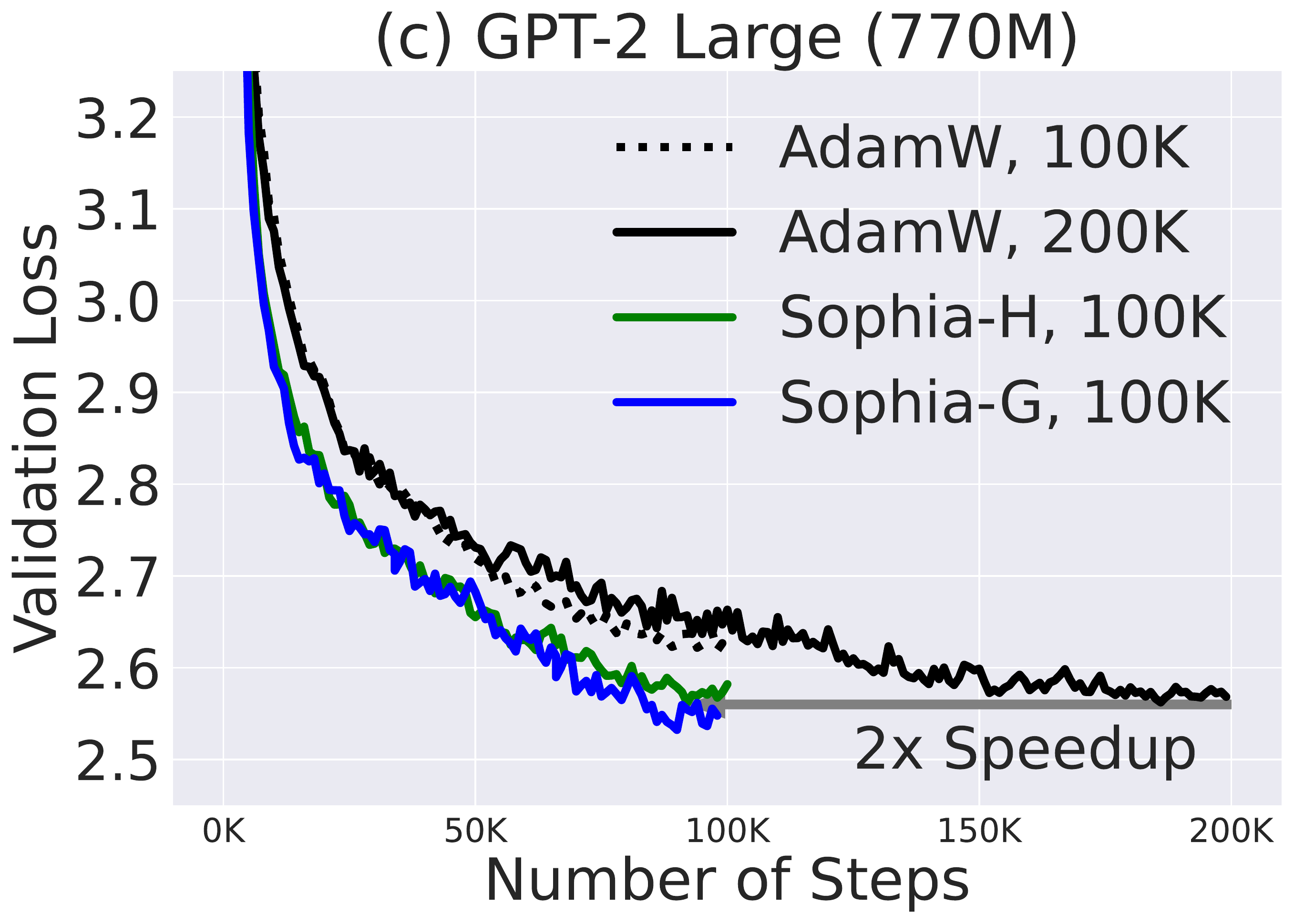}
\includegraphics[width=0.235\textwidth]{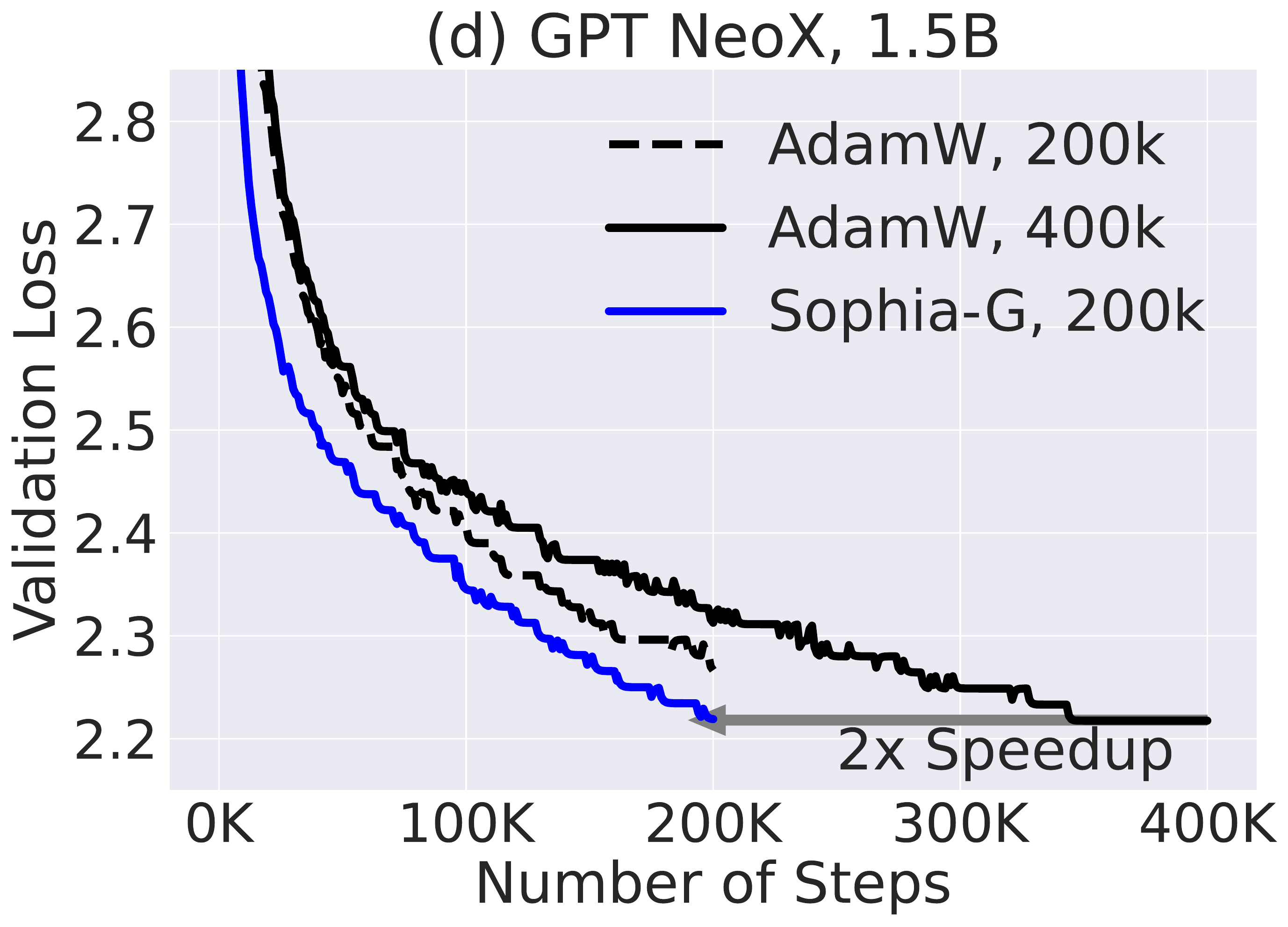}
\caption{ Comparison of numbers of steps to reach the same validation loss. (a) Learning rate schedules. (b) GPT-2 Medium (355M). (c) GPT-2 Large (770M). (d) GPT NeoX 1.5B. Across all model sizes, {\ours} achieves a 2x speedup. \label{fig:gpt2-1} 
}
\end{center}
\end{figure}

\begin{figure}[t]
\begin{center}
\includegraphics[width=0.19\textwidth]{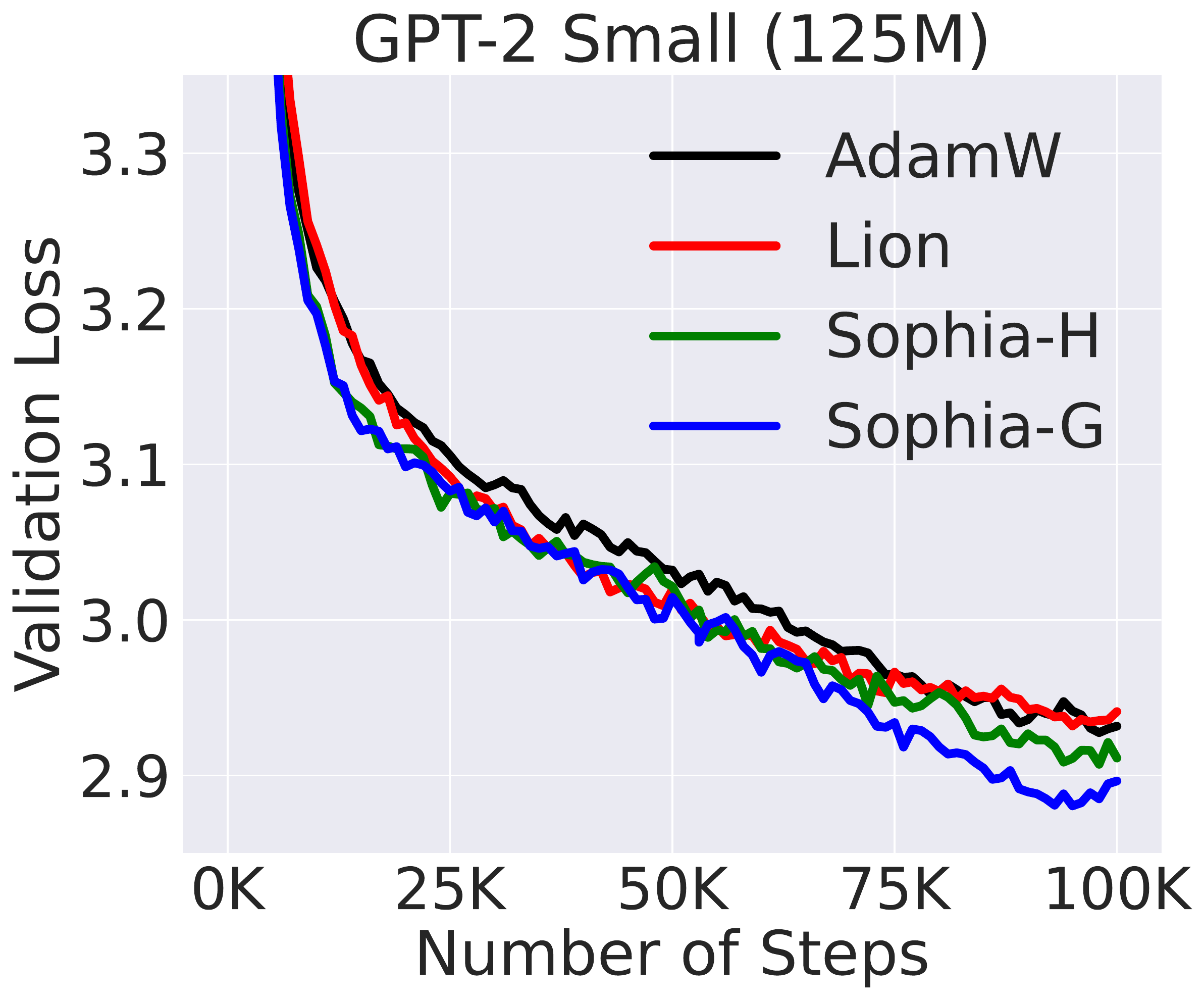}
\includegraphics[width=0.19\textwidth]{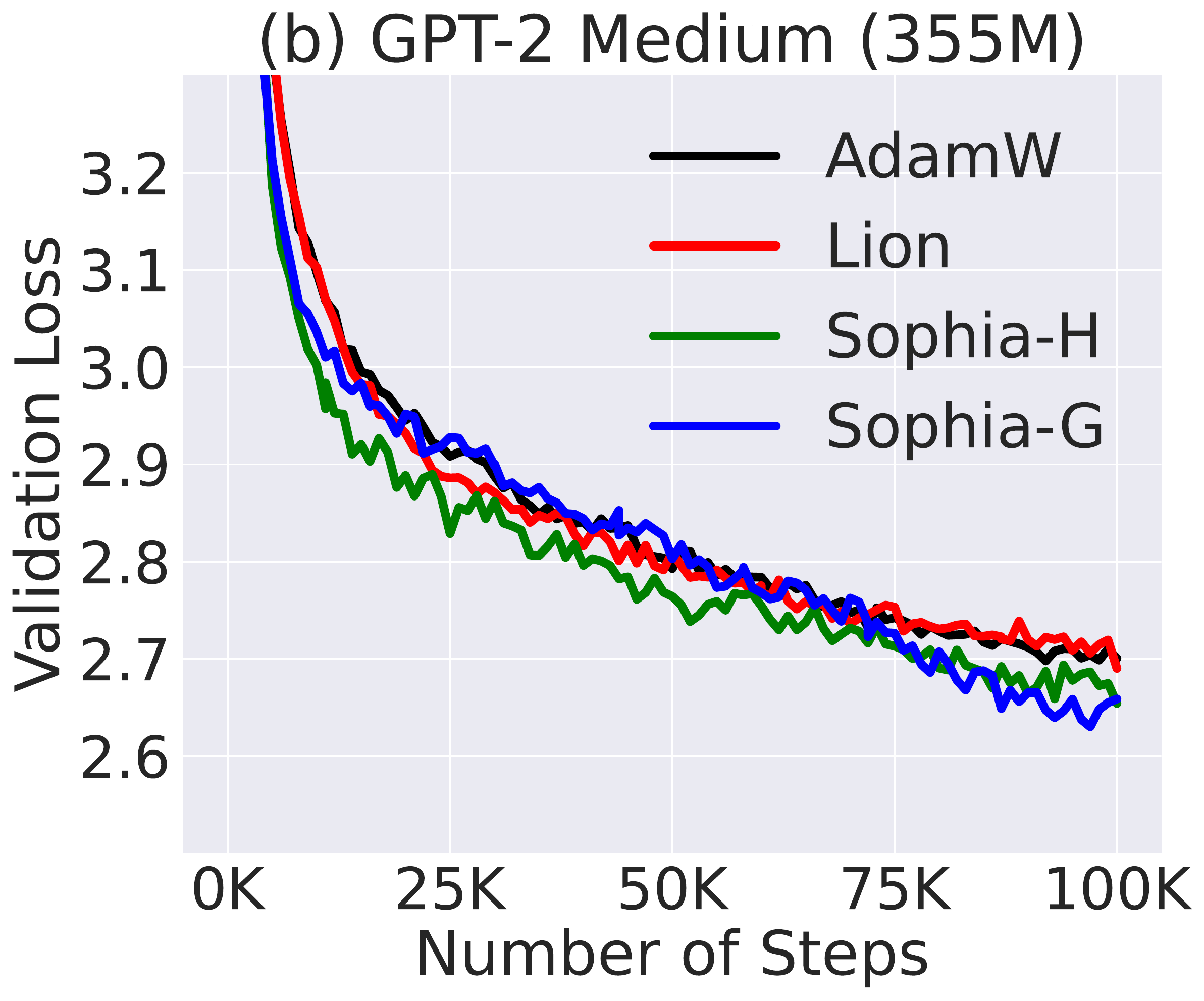}
\includegraphics[width=0.19\textwidth]{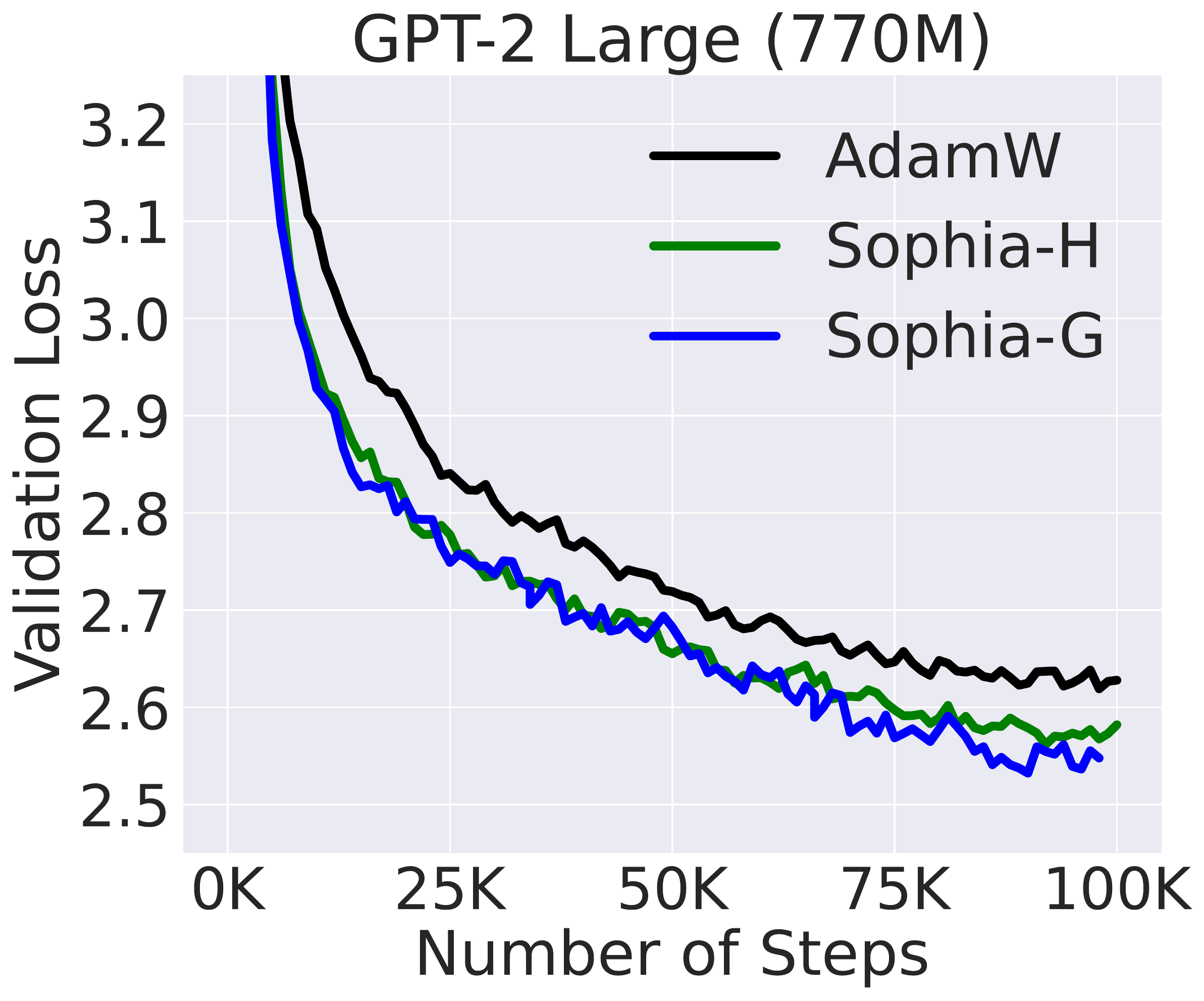}
\includegraphics[width=0.19\textwidth]{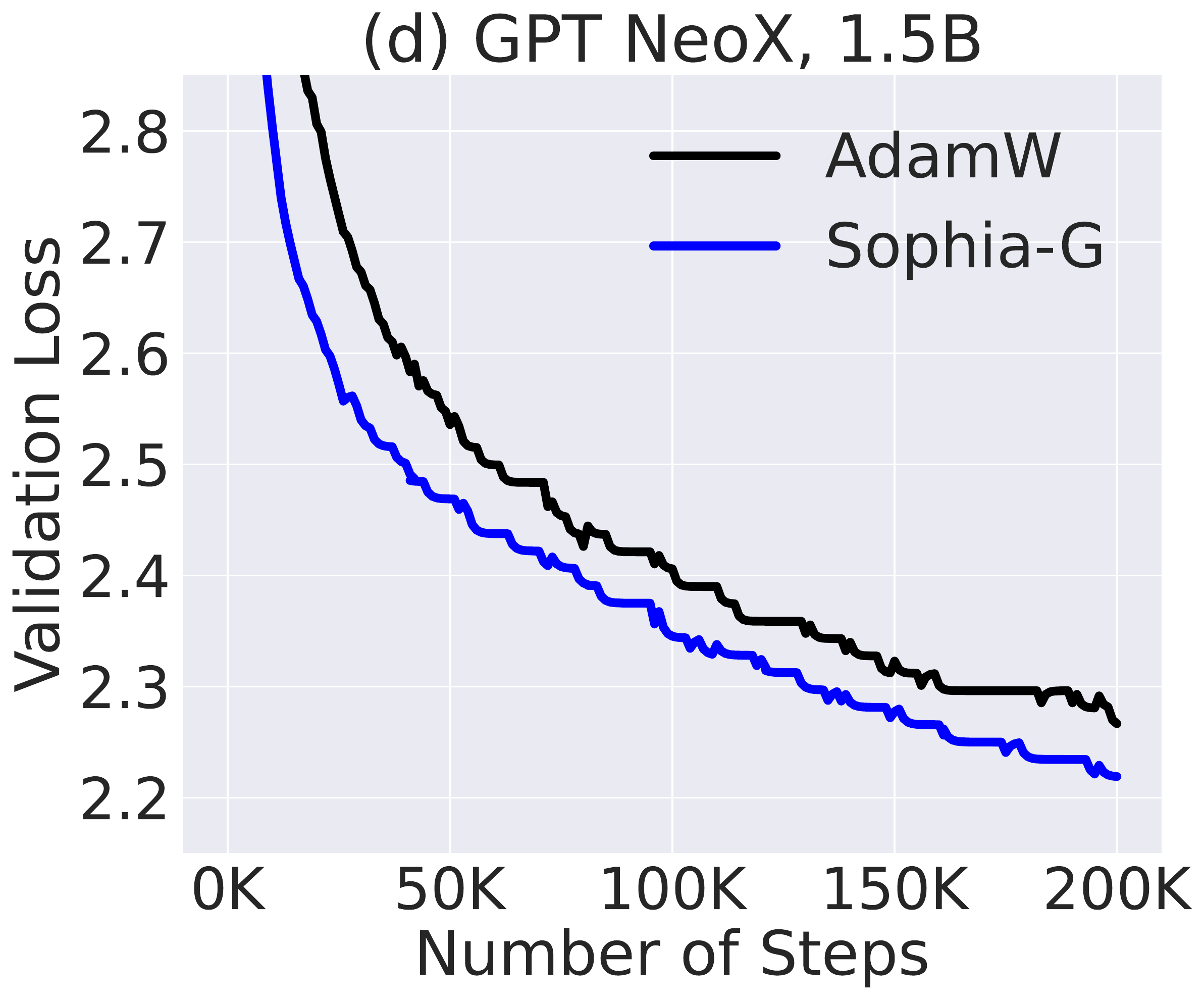}
\includegraphics[width=0.19\textwidth]{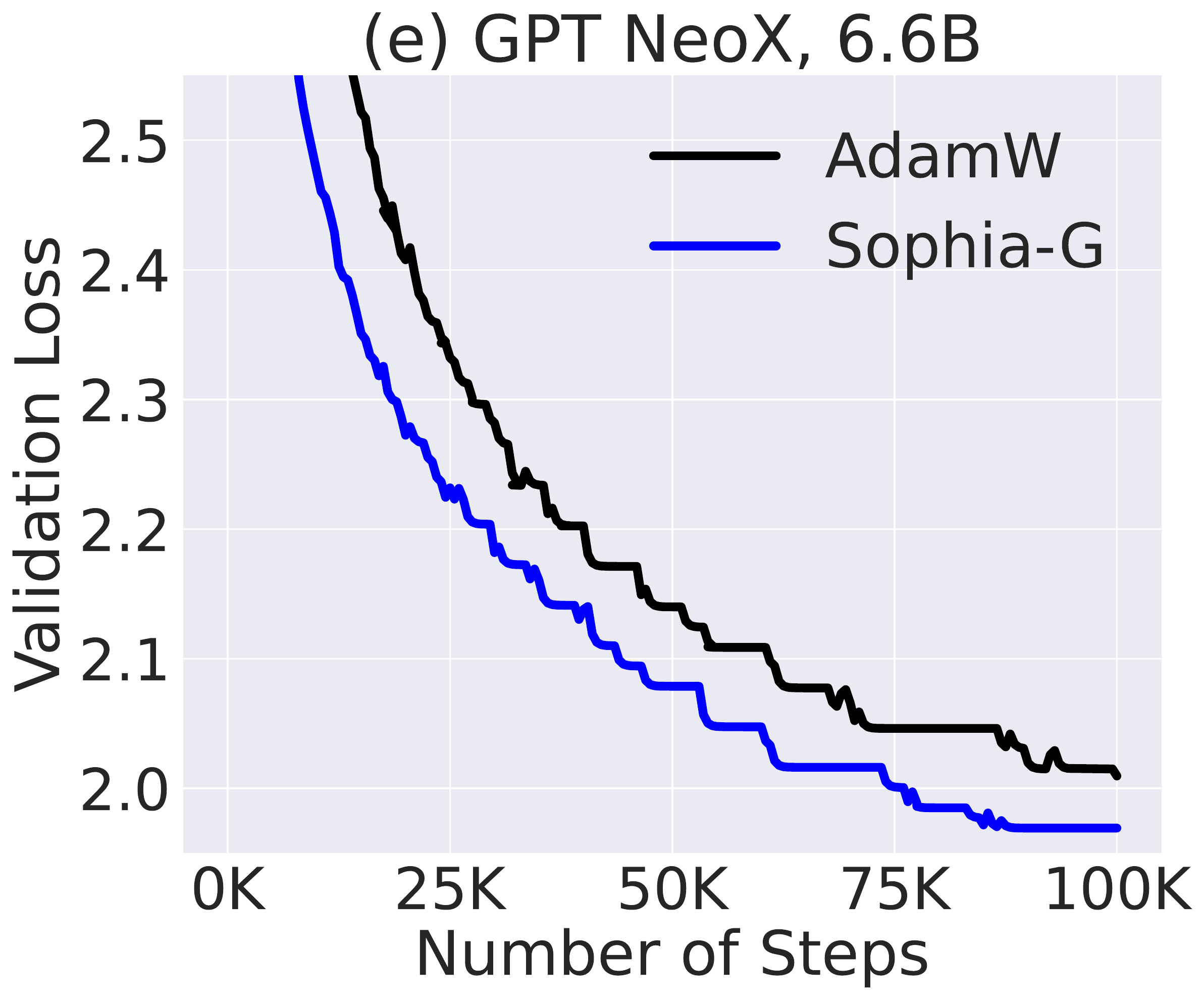}
\caption{ (a) (b) (c) Validation loss on OpenWebText. (a) GPT-2 Small (125M). AdamW: 2.921, Lion: 2.924, \ours-H: 2.901, Sophia-G: 2.875 (b) GPT-2 Medium (355M). Adam: 2.691, Lion: 2.678, \ours-H: 2.645, Sophia-G: 2.627. (c) GPT-2 Large (770M). AdamW: 2.613, \ours-H: 2.554, \ours-G: 2.524. (d) (e) Validation loss on the Pile (d) GPT NeoX 1.5B. AdamW: 2.250, \ours-G: 2.218.(e) GPT NeoX 6.6B. AdamW: 1.992, \ours-G: 1.969. \label{fig:gpt2}}
\end{center}
\end{figure}

\begin{figure}[t]
\begin{center}
\includegraphics[width=0.32\textwidth]{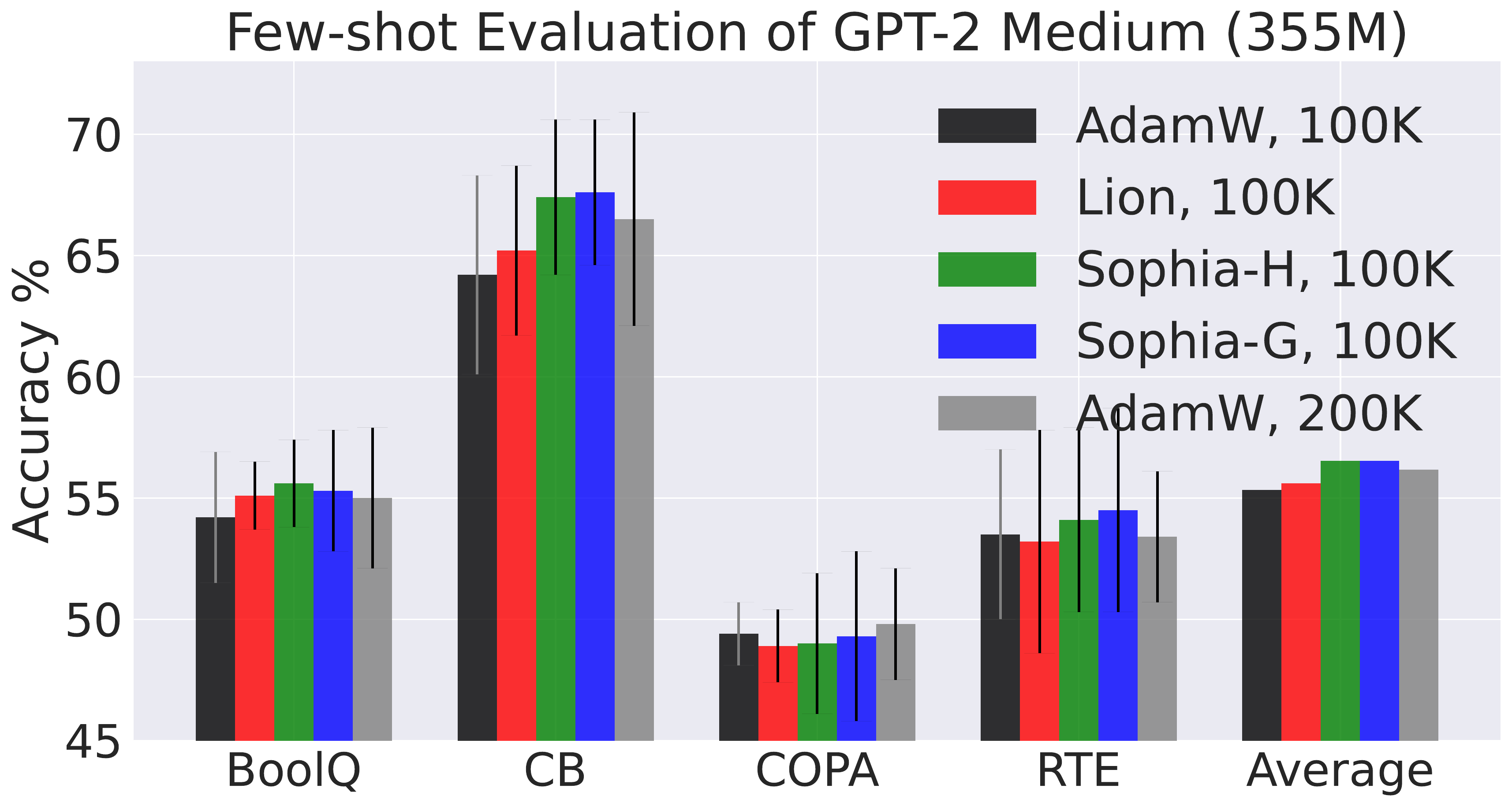}
\includegraphics[width=0.32\textwidth]{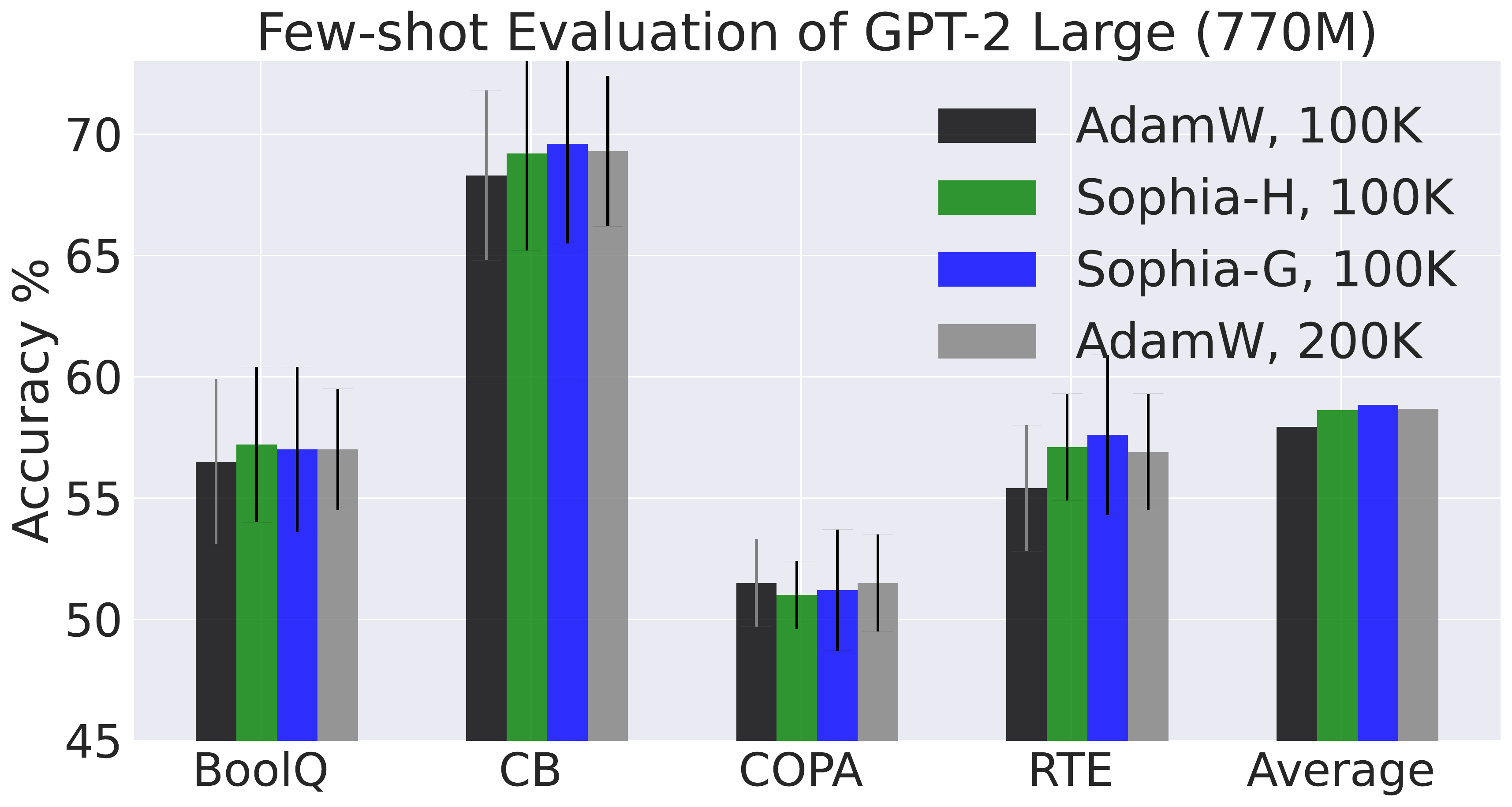}
\includegraphics[width=0.32\textwidth]{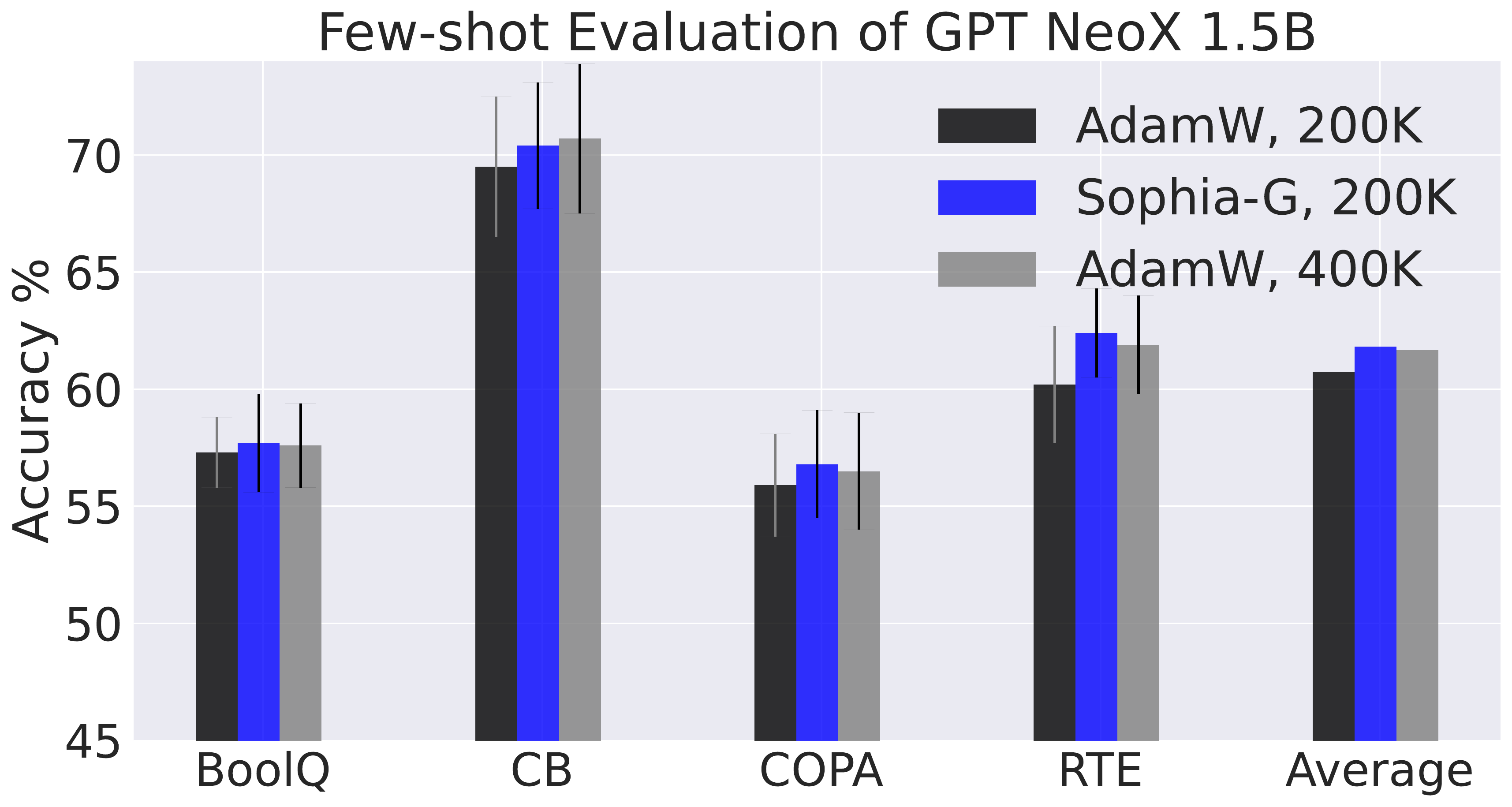}
\caption{{Few-shot evaluation on SuperGLUE. With the same number of steps, models pre-trained with \ours~outperforms models pre-trained with AdamW and Lion on most tasks. Models pre-trained with \ours~for 200K steps have comparable performance as models pre-trained with AdamW for 400K steps.\label{fig:fewshot}}
}
\end{center}
\end{figure}

\subsection{Results}\label{sec:results}

Figure~\ref{fig:gpt2} illustrates the validation loss curve (token-level log perplexity) on OpenWebText or the Pile with the same number of steps. Sophia-H and Sophia-G
consistently achieve better validation loss than AdamW, Lion, and AdaHessian, while Sophia-G is better than Sophia-H. As the model size grows, the gap between \ours~and baselines also becomes larger. Note that the cost of each step AdaHessian is more than 2x that of AdamW, while Sophia only introduces a per step compute overhead of less than 5$\%$. Sophia-H and Sophia-G both achieve a 0.04 smaller validation loss on the 355M model (Figure~\ref{fig:gpt2} (b)). Sophia-G and Sophia-H achieve a 0.05 smaller validation loss on the 770M model (Figure~\ref{fig:gpt2} (c)), with the same 100K steps. This is a significant improvement since according to scaling laws in this regime~\citep{kaplan2020scaling,hoffmann2022training} and results in Figure~\ref{fig:gpt2-1}, an improvement in loss of 0.05 is equivalent to 2x improvement in terms of number of steps or total compute to achieve the same validation loss. 

\textbf{\ours~is 2x faster in terms of number of steps, total compute and wall-clock time.} The improvement in validation loss brought by \ours~can be translated into reduction of number of steps or total compute. In Figure~\ref{fig:head} (a)(b)(c) and Figure~\ref{fig:gpt2-1}, we evaluate the optimizers by comparing the number of steps or total compute needed to achieve \textit{the same validation loss level}. As can be observed in Figure~\ref{fig:head} (a)(b)(c), Sophia-H and Sophia-G achieve a 2x speedup compared with AdamW and Lion across different model sizes.

\textbf{The scaling law is in favor of \ours~over AdamW.} In Figure~\ref{fig:head} (d), we plot the validation loss on OpenWebText of models of different sizes pre-trained for 100K steps. The gap between \ours~and AdamW grows as we scale up the models. Moreover, the 540M model trained by \ours-H has smaller loss than the 770M model trained by AdamW. The 355M model trained by \ours-H has comparable loss as the 540M model trained by AdamW. 

\textbf{Few-shot Evaluation on Downstream Tasks (SuperGLUE).} As shown in Figure~\ref{fig:fewshot}, as expected, the improvement in validation loss transfers to an improvement in downstream task accuracy. 
With the same number of steps in pre-training, GPT-2 medium, GPT-2 large and GPT NeoX 1.5B pre-trained with {\ours} have better few-shot accuracy on most subtasks. Also, models pre-trained with {\ours} have comparable few-shot accuracy as models pre-trained with AdamW for 2x number of steps. 

\subsection{Analysis}\label{sec:analysis}
\begin{wraptable}{r}{7.5cm}
%\vspace{-15pt}
	\centering
	%\vspace{-18pt}
	\caption{\small Wall-clock time and compute.}
	\label{table2}
	\begin{small}
 \addtolength{\tabcolsep}{-5pt} 
	\begin{tabular}{l|c|c|c|c}
	\toprule
	Algorithm & Model Size  & T(step) & T(Hessian) & Compute \\
	\midrule
	AdamW & 770M  & 3.25s & -- & 2550 \\
	\ours-H & 770M  & 3.40s & 0.12s & 2708\\
        Sophia-G & 770M  & 3.42s & 0.17s & 2678\\
        AdamW & 355M  & 1.77s & -- & 1195\\
	\ours-H & 355M  & 1.88s & 0.09s & 1249\\
        Sophia-G & 355M  & 1.86s & 0.09s & 1255\\
 \bottomrule
	\end{tabular}
	\end{small}
\end{wraptable}
\textbf{Comparison of wall-clock time and amount of compute.} We compare the total compute (TFLOPs) per step and the wall-clock time on A100 GPUs in Table~\ref{table2}. We report the average time per step (T(step)), the time spent in Hessian computation (T(Hessian)) and the total compute following~\citet{chowdhery2022palm}. Since we calculate the diagonal Hessian estimate with a reduced batch size every 10 steps, the computation of the Hessian accounts for 6$\%$ of the total compute, and the overall wall-clock time overhead is less than 5$\%$ compared with AdamW. In terms of memory usage, our optimizer has two states, $m$ and $h$, which results in the same memory cost as AdamW.

\begin{figure}[t]
\begin{center}
\includegraphics[width=0.31\textwidth]{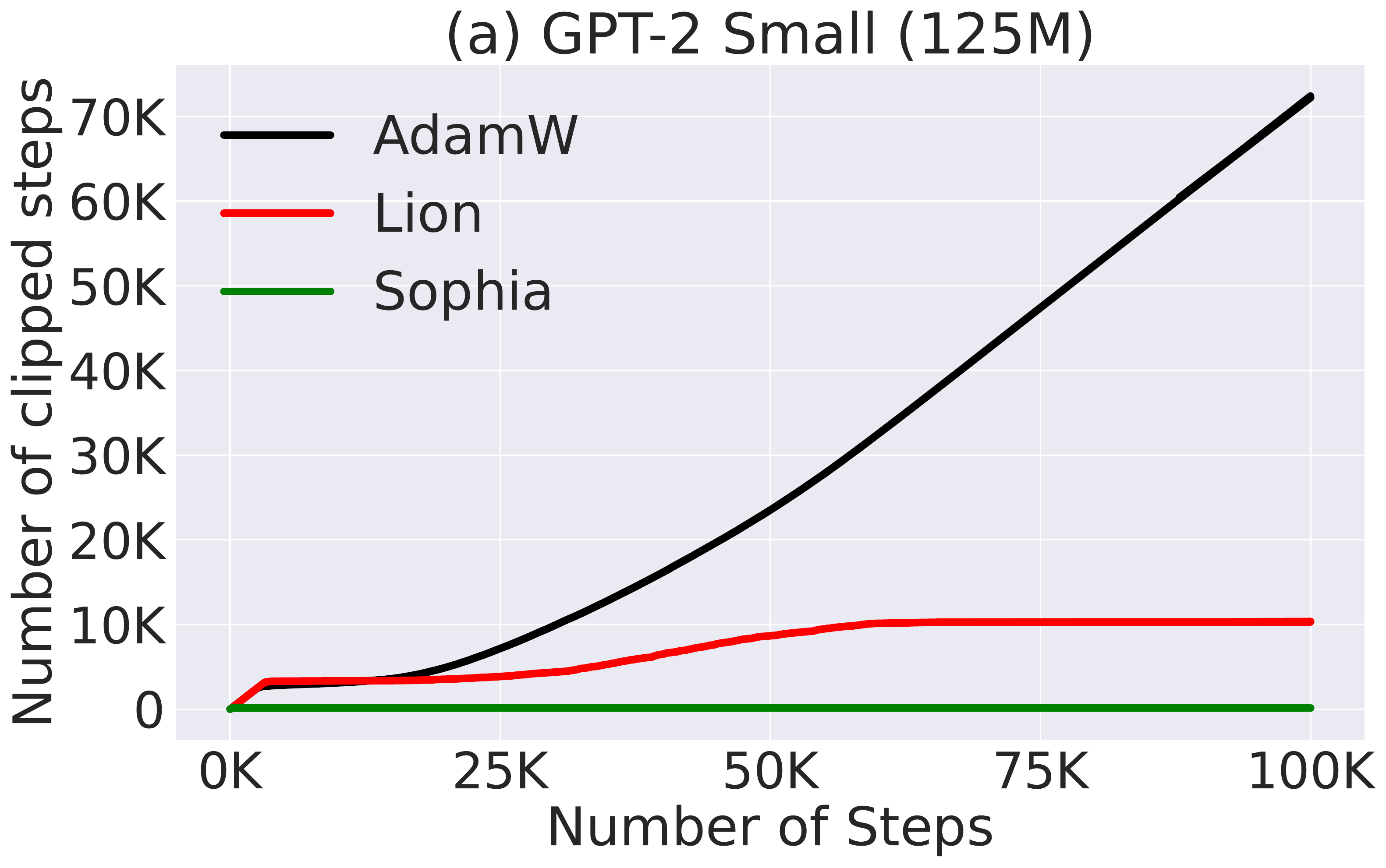}
\includegraphics[width=0.31\textwidth]{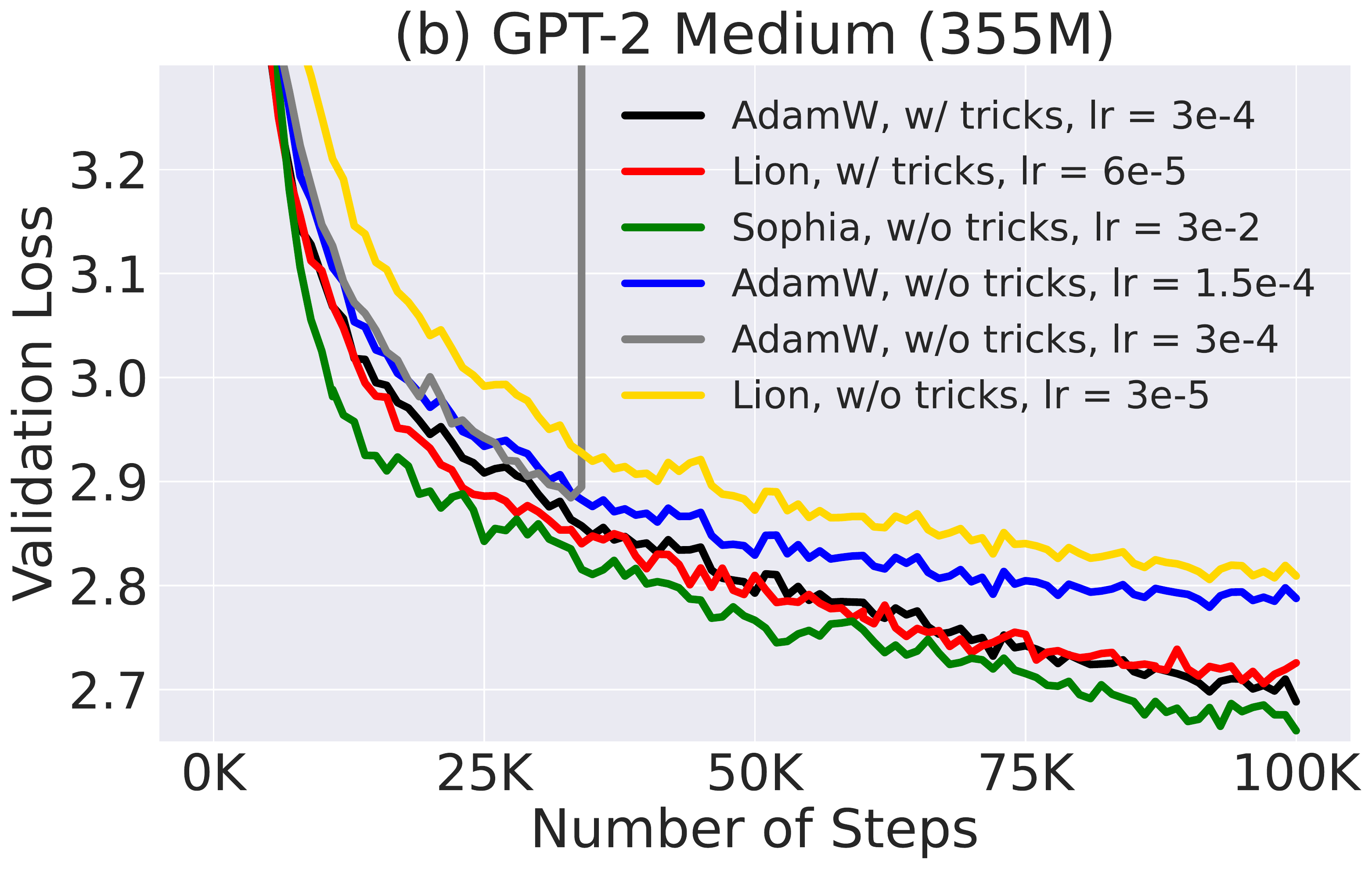}
\includegraphics[width=0.33\textwidth]{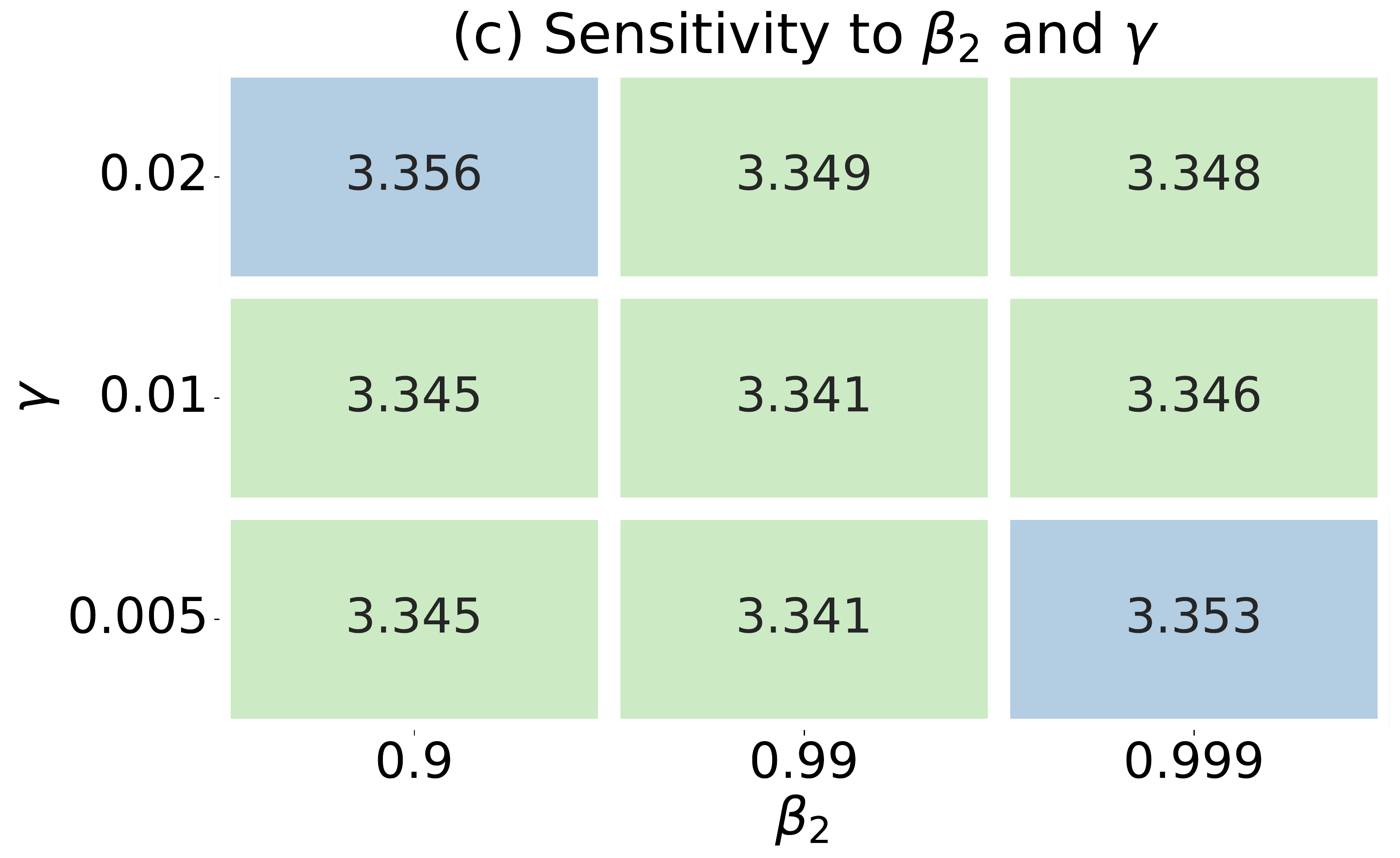}
\caption{\ours~improves pre-training stability and is insensitive to hyperparameters. (a) With AdamW and Lion, gradient clipping is triggered frequently. With \ours, gradient clipping rarely happens. (b) AdamW and Lion require the trick of re-parameterizing the attention with a temperature that is the inverse of the layer index~\citep{mistral}. The plot shows the largest LR that AdamW and Lion without the trick can use to be stable, which is much smaller than with the trick. 
In contrast, \ours~does not need this trick. (c) \ours~is not sensitive to hyperparameter choice. \label{fig:stability}}
\end{center}
\end{figure}

\textbf{Sensitivity to $\rho$ and $\beta_2$, and transferability of hyperparameters.} On a 30M model, we perform a grid search to test the sensitivity of \ours-H to hyperparamters (Figure~\ref{fig:stability} (c)). All combinations have a similar performance, while $\beta_2=0.99$ and $\gamma=0.01$ performs the best. Moreover, this hyperparameter choice is transferable across model sizes. For all the experiments on 125M, 355M and 770M, we use the hyperparameters searched on the 30M model, which is $\gamma=0.01$, $\beta_2=0.99$.

\textbf{Training Stability.} \ours-H has better stability in pre-training compared to AdamW and Lion. Gradient clipping (by norm) is an important technique in language model pre-training as it avoids messing up the moment of gradients with one mini-batch gradient computed from rare data~\citep{zhang2020adaptive}. In practice, the frequency that gradients clipping is triggered is related to the training stability---if the gradient is frequently clipped, the iterate can be at a very instable state. We compare the proportion of steps where gradient clipping is triggered on GPT-2 small (125M) in Figure~\ref{fig:stability} (a). Although all methods use the same clipping threshold 1.0, \ours-H seldomly triggers gradient clipping, while AdamW and Lion trigger gradient clipping in more than 10$\%$ of the steps.

Another common trick of pre-training deep Transformers is scaling the product of keys and values by the inverse of the layer index as implemented by Mistral~\citep{mistral} and Huggingface~\citep{wolf-etal-2020-transformers}. This stabilizes training and increases the largest possible learning rate. Without this trick, the maximum learning rate of AdamW and Lion on GPT-2 medium (355M) can only be 1.5e-4, which is much smaller than 3e-4 with the trick (the loss will blow up with 3e-4 without the trick). Moreover, the loss decreases much slower without the trick as shown in Figure~\ref{fig:stability} (b). In all the experiments, \ours-H does not require scaling the product of keys and values by the inverse of the layer index.

\begin{figure}[t]
\begin{center}
\includegraphics[width=0.315\textwidth]{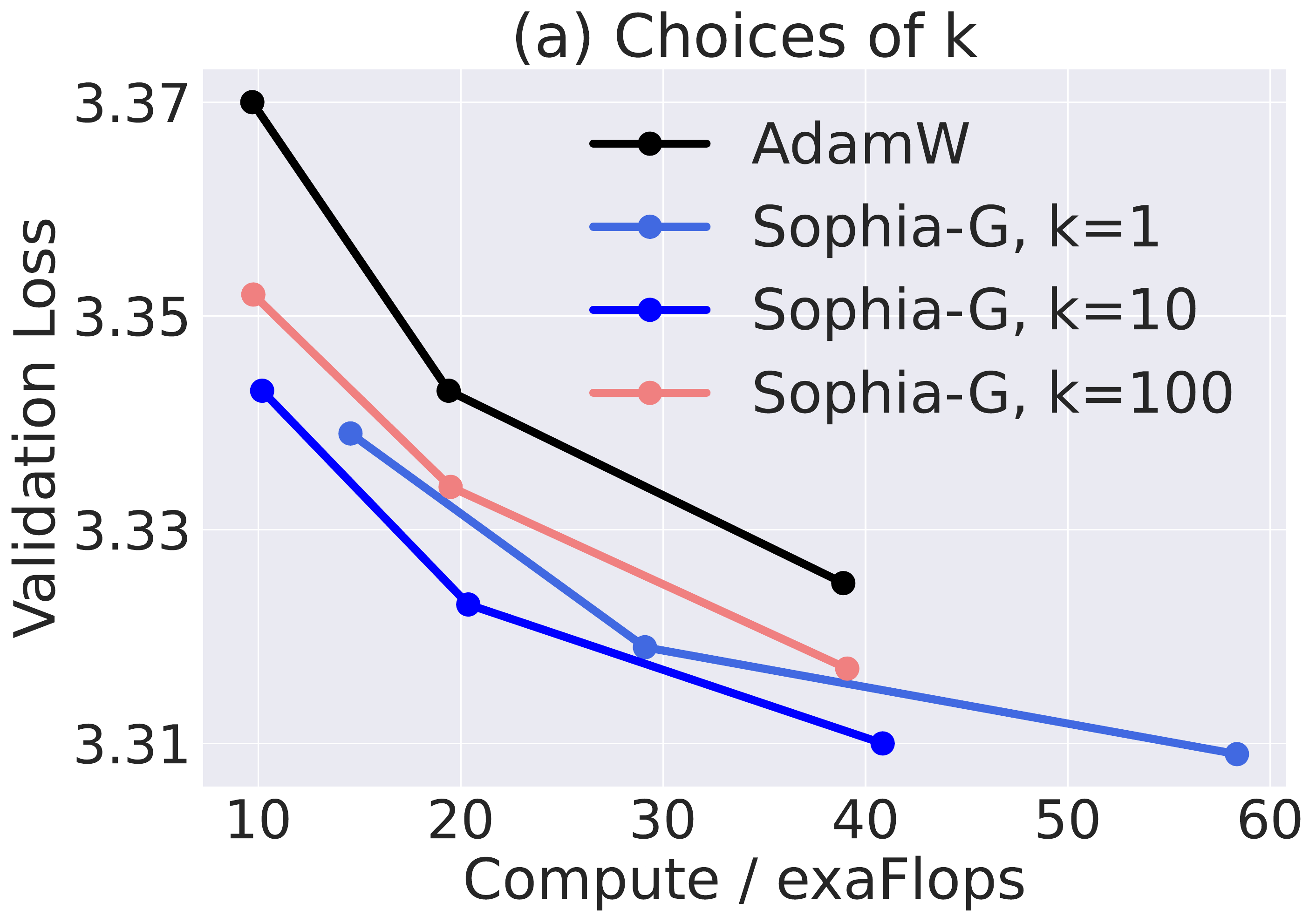}
\includegraphics[width=0.315\textwidth]{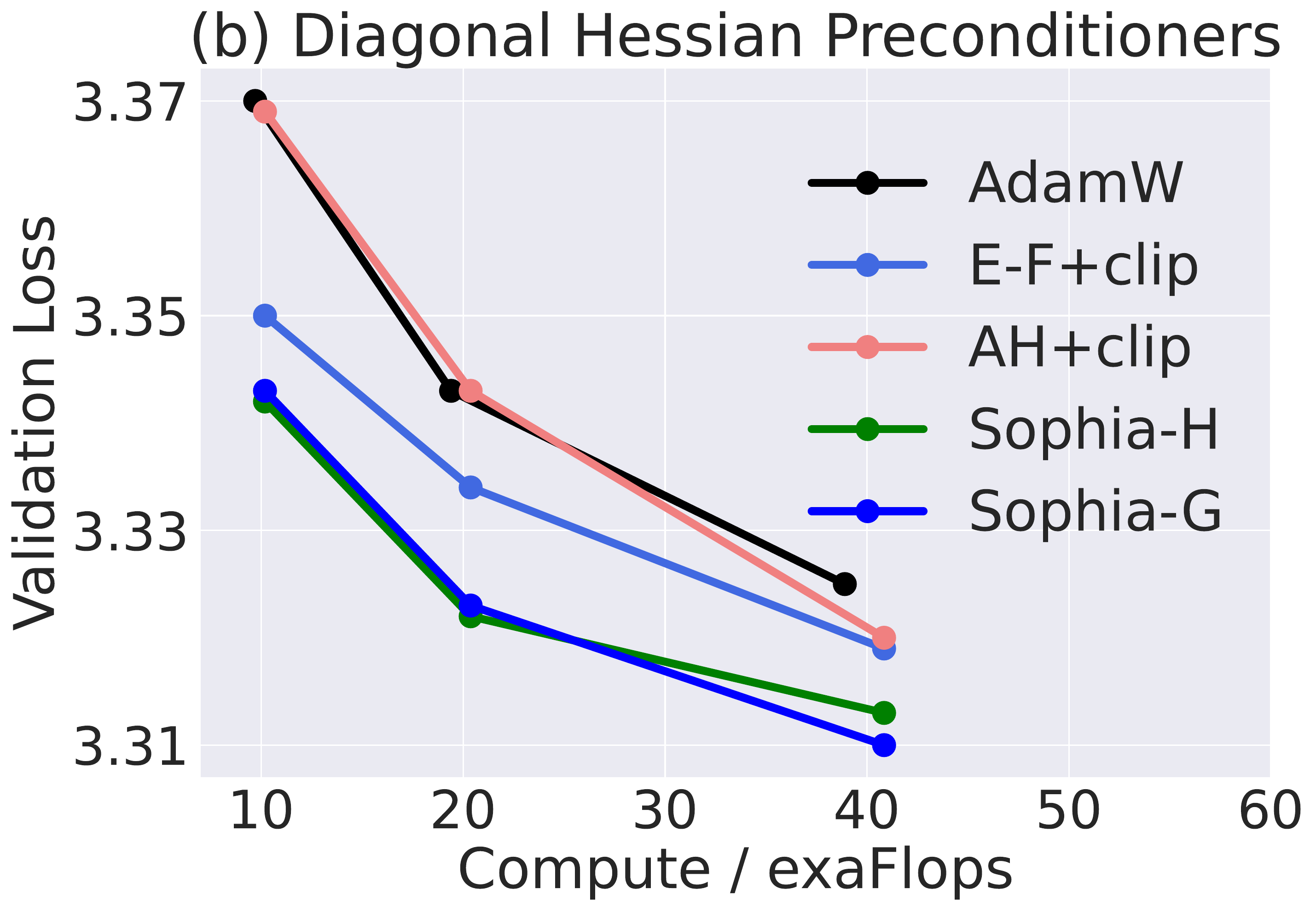}
\includegraphics[width=0.323\textwidth]{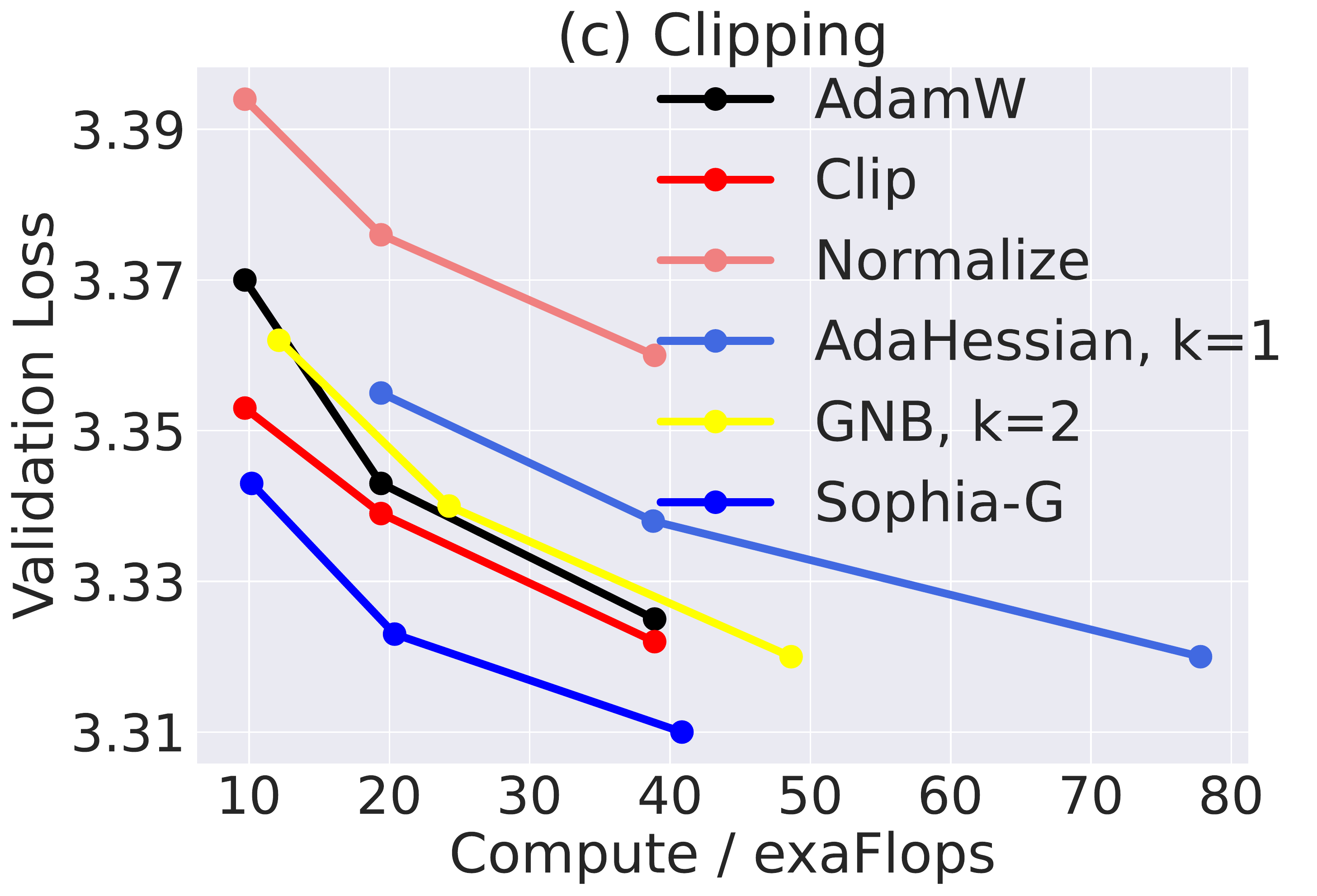}
\caption{Ablation study. (a) Hessian update frequency $k$. (b) Diagonal Hessian pre-conditioners. (c) Element-wise Update clipping. \label{fig:ablation}}
\end{center}
\end{figure}

\subsection{Ablation Study}\label{sec:ablation}

\textbf{Choices of Hessian update frequency $k$.} We study the effect of Hessian update frequency $k$ of Sophia-G on computational overhead and validation loss on a 30M GPT-2 model. We consider $k=1,10,100$ and run each method for 100k, 200k, and 400k steps. All other hyperparameters are fixed, and we tune the peak learning rate with a grid search. We plot the amount of compute spent and the validation loss of each run in Figure~\ref{fig:ablation} (a). While $k=1$ has better validation loss with the same number of steps, the computational overhead is 50$\%$ and the convergence speed with respect to amount of compute is worse than $k=10$. The choice of $k=100$ still outperforms AdamW, but is not as good as $k=10$.

\textbf{Diagonal Hessian pre-conditioners.} We compare different diagonal Hessian pre-conditioners (with the same $k=10$ and $\gamma$ found by grid search): Empirical Fisher (E-F+clip), AdaHessian (AH+clip), Hutchinson (Sophia-H), and GNB (Sophia-G). Note that empirical Fisher is the EMA of squared gradients, which differs from GNB in label sampling. We run each method for 100k, 200k, and 400k steps and plot the results in Figure~\ref{fig:ablation} (b). Results indicate that GNB is better than Empirical Fisher, which is consistent with \citet{kunstner2019limitations}. Sophia-H is also consistently better than AdaHessian. We hypothesize that the difference stems from that the EMA of the diagonal Hessian estimates (used in Sophia-H ) has more denoising effect than the EMA of the second moment of Hessian estimates (used in AdaHessian). 

\textbf{Element-wise Clipping.} We compare the role of different update clipping strategy in Figure~\ref{fig:ablation} (c). We include element-wise clipping without pre-conditioners (Clip), update normalization without pre-conditioners (Normalize), AdaHessian and Sophia-G without clipping (GNB). The learning rate is found by grid search. Note that clipping without pre-conditioner is essentially the same as sign momentum, or Lion with a single $\beta$. Without element-wise clipping, we find that AdaHessian will diverge with $k=2$ and GNB will diverge with $k=5$, thus we use $k=1$ for AdaHessian and $k=2$ for GNB. Results indicate that per-coordinate clipping itself is already better than AdamW. Further adding the GNB pre-conditioner makes Sophia-G much better than baselines.

\section{Theoretical Analysis}\label{sec:theoretical_analysis}

\newcommand{\sub}[1]{_{[#1]}}

This section provides runtime bounds for the deterministic version of Sophia that does not depend on the local condition number (the ratio between maximum and minimum curvature at the local minimum) and the worst-case curvature (that is, the smoothness parameter), demonstrating the advantage of Sophia in adapting to heterogeneous curvatures across parameter dimensions.   

We start with standard assumptions on the differentiability and uniqueness of the minimizer.
\begin{assumption}\label{assum:existence_local_minimizer_w_hessian_pd}
$L:\R^d\to\R$ is a twice continuously differentiable, strictly convex function with $\theta^*$ being its minimizer. For convenience, we denote $\lambda_{\min}(\nabla^2 L(\theta^*))$ by $\mu$.
\end{assumption}
The following assumptions state that the Hessian has a certain form of continuity---within a neighborhood of size $R$, the ratio between the Hessians, $\nabla^2 L(\theta')^{-1} \nabla^2 L(\theta)$, is assumed to be bounded by a constant 2. 
\begin{assumption}\label{assum:hessian_multiplicative_lipschitz}
	There exists a constant $R>0$, such that 
	\begin{align}
		\forall \theta,\theta'\in \R^d, \norm{\theta-\theta'}_2\le R \implies \norm{\nabla^2 L(\theta')^{-1} \nabla^2 L(\theta)}_2\le 2
	\end{align}
\end{assumption}

We analyze the convergence rate of the deterministic version of the Sophia on convex functions,  
\begin{align}\label{eq:clipped_newton_for_convex_loss}
	\theta_{t+1} = \theta_t - \eta V_t^\top\clip(V_t(\nabla^2L(\theta_t))^{-1}\nabla L(\theta_t),\rho),
\end{align}
where $\nabla^2 L(\theta_t) = V_t^\top \Sigma_t V_t$ is an eigendecomposition of $\nabla^2 L(\theta_t)$. Here, we use the full Hessian as the pre-conditioner because the diagonal Hessian pre-conditioner cannot always work for general functions which may not have any alignment with the natural coordinate system. Moreover, the  matrix $V_t$ transforms $(\nabla^2L(\theta_t))^{-1}\nabla L(\theta_t)$ into eigenspace and thus the clipping can be done element-wise in the eigenspace. We do not need the max between Hessian and $\epsilon$ in the original version of Sophia because the Hessian is always PSD for convex functions. Finally, the matrix $V_t^\top$ transforms the update back to the original coordinate system for the parameter update. 

\begin{theorem}\label{thm:convex_main}
Under \Cref{assum:existence_local_minimizer_w_hessian_pd} and \Cref{assum:hessian_multiplicative_lipschitz},  let $\eta=1/2,\rho = \frac{R}{2\sqrt{d}}$, 
the update in~\Cref{eq:clipped_newton_for_convex_loss} reaches a loss at most $\epsilon$ in $
	T\lesssim d\cdot \frac{ L(\theta_0)-\min L}{\mu R^2} + \ln \frac{\mu R^2}{32d\eps}
	$ steps. 
\end{theorem}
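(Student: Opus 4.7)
The plan is a classical two-phase analysis. Let $\tau \asymp \mu R^2/d$ be a threshold. In a \emph{descent phase} where $L(\theta_t)-\min L \geq \tau$, I would show a uniform per-step decrease of $\Omega(\mu R^2/d)$, producing the $d(L(\theta_0)-\min L)/(\mu R^2)$ term. In a \emph{local phase} where $L(\theta_t)-\min L < \tau$, I would show a geometric contraction $L(\theta_{t+1}) - \min L \leq \tfrac12(L(\theta_t)-\min L)$, producing the $\ln(\mu R^2/(32 d\epsilon))$ term. The crossover is calibrated to the trust-region scale $\rho = R/(2\sqrt d)$.

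First I would establish a uniform per-step descent inequality. Since $V_t$ is orthogonal and $\|u_t\|_\infty \leq \rho$, the step size obeys $\|\theta_{t+1}-\theta_t\|_2 = \eta\|V_t^\top u_t\|_2 \leq \eta\sqrt d\,\rho = R/4 < R$, so Assumption~\ref{assum:hessian_multiplicative_lipschitz} yields $\nabla^2 L(\xi) \preceq 2\nabla^2 L(\theta_t)$ for every $\xi \in [\theta_t,\theta_{t+1}]$. Writing $H_t = \nabla^2 L(\theta_t)$, $g_t = \nabla L(\theta_t)$, $\tilde g_t = V_t^\top u_t$, a second-order Taylor expansion combined with this Hessian comparison gives
\[
  L(\theta_{t+1}) \leq L(\theta_t) - \eta\,\langle g_t,\tilde g_t\rangle + \eta^2\,\tilde g_t^\top H_t \tilde g_t.
\]
The key coordinate-wise lemma $\langle g_t,\tilde g_t\rangle \geq \tilde g_t^\top H_t \tilde g_t$ follows by rotating into the eigenbasis of $H_t$: for each $i$, $u_t[i] = \clip(\hat g_t[i]/\sigma_i,\rho)$ shares its sign with $\hat g_t[i]$ and has magnitude at most $|\hat g_t[i]|/\sigma_i$, so $\hat g_t[i]\,u_t[i] \geq \sigma_i u_t[i]^2$. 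Setting $\eta = 1/2$ collapses this to $L(\theta_{t+1}) \leq L(\theta_t) - \tfrac14 \langle g_t,\tilde g_t\rangle$.

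For the local phase, when $\|V_t H_t^{-1} g_t\|_\infty \leq \rho$ no clipping is active, and the update becomes a damped Newton step $\theta_{t+1} = \theta_t - \tfrac12 H_t^{-1} g_t$ with $\langle g_t,\tilde g_t\rangle = g_t^\top H_t^{-1} g_t$ the squared Newton decrement. Using Assumption~\ref{assum:hessian_multiplicative_lipschitz} as a form of multiplicative self-concordance, one shows that inside the trust region $g_t^\top H_t^{-1} g_t \gtrsim L(\theta_t)-\min L$, which together with the descent lemma gives the desired $\tfrac12$-contraction. For the descent phase, at least one coordinate $j$ satisfies $|\hat g_t[j]|/\sigma_j > \rho$ and contributes $\rho|\hat g_t[j]|$ to $\langle g_t,\tilde g_t\rangle$. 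The goal is to argue that this contribution alone is $\Omega(\mu R^2/d)=\Omega(\mu\rho^2)$, so that summing the descent inequality over the descent phase drives $L(\theta_t)-\min L$ below $\tau$ in $O(d(L(\theta_0)-\min L)/(\mu R^2))$ steps.

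The hard part will be justifying the descent-phase lower bound $\langle g_t,\tilde g_t\rangle = \Omega(\mu\rho^2)$: the naive estimate $\rho|\hat g_t[j]| > \sigma_j \rho^2$ is not automatically $\Omega(\mu\rho^2)$, since $\sigma_j$ can be much smaller than $\mu$ at points far from $\theta^*$. The saving I have in mind is a tradeoff intrinsic to clipping: when $\sigma_j$ is small, the unclipped Newton coordinate $|\hat g_t[j]|/\sigma_j$ is correspondingly large, so $\rho|\hat g_t[j]|$ itself, not just $\sigma_j\rho^2$, can be lower bounded. Making this quantitative will likely require tracking a Hessian-weighted distance such as $(\theta_t-\theta^*)^\top H_t(\theta_t-\theta^*)$ and using Assumption~\ref{assum:hessian_multiplicative_lipschitz} inductively to transfer the curvature bound $\lambda_{\min}(\nabla^2 L(\theta^*)) \geq \mu$ along the trajectory. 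Combining the two phase bounds at threshold $\tau \asymp \mu R^2/(32 d)$ then assembles the theorem.
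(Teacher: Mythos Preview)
Your two-phase skeleton, the descent lemma $L(\theta_{t+1})\le L(\theta_t)-\tfrac14\langle g_t,\tilde g_t\rangle$, and the local-phase contraction all match the paper. The genuine gap is exactly where you flag it: the Phase~1 lower bound $\langle g_t,\tilde g_t\rangle=\Omega(\mu\rho^2)$.

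Your proposed fix does not work. Chaining Assumption~\ref{assum:hessian_multiplicative_lipschitz} ``along the trajectory'' only compares Hessians at points within distance $R$ of each other; iterating from $\theta_t$ back toward $\theta^*$ over many steps gives $\nabla^2 L(\theta_t)\succeq 2^{-k}\nabla^2 L(\theta^*)$ with $k$ the number of hops, an exponentially useless bound. And the ``tradeoff intrinsic to clipping'' you describe is circular: from $|\hat g_t[j]|/\sigma_j>\rho$ you only get $\rho|\hat g_t[j]|>\sigma_j\rho^2$, which is precisely the naive estimate you already rejected.

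The paper instead proves the contrapositive (its Lemma~\ref{lem:small_decrement_imply_small_loss}): if the decrement $\sum_i\min\{\rho|v_i^\top\nabla L(\theta)|,\sigma_i^{-1}|v_i^\top\nabla L(\theta)|^2\}\le\Delta$ is small, then $L(\theta)-\min L$ is small. The argument splits the eigenbasis into unclipped coordinates $I_\theta$ and clipped ones $I_\theta^c$. On the unclipped subspace one restricts $L$ and minimizes to a point $\theta+V_{I_\theta}^\top w^*$, with the loss drop bounded by the unclipped part of $\Delta$ (this is your no-clipping Newton analysis, applied in a subspace). At that partial minimizer the gradient lives entirely in the clipped directions, and an ODE computation bounds $\|\nabla L(\theta+V_{I_\theta}^\top w^*)\|_2\lesssim\Delta/\rho$. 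The key global ingredient is then a \emph{small-gradient} PL inequality: $\|\nabla L(\theta')\|_2\le R\mu/2$ forces $\|\theta'-\theta^*\|_2\le R$ (proved by monotonicity of the directional derivative along the segment $[\theta^*,\theta']$, not by chaining), after which $L(\theta')-\min L\le\mu^{-1}\|\nabla L(\theta')\|_2^2$. This is how the paper reaches $\theta^*$ from an arbitrary $\theta$ without ever needing a global curvature lower bound at $\theta$ itself.
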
	
The first term in the runtime bound is a burn-in time before reaching a local region, where the error decays exponentially fast so that the runtime bound is logarithmic in $1/\epsilon$ as the second term in the runtime bound shows. We remark that the bound does not depend on the condition number (the ratio between the maximum and minimum eigenvalue of Hessian), as opposed to the typical dependency on the maximum eigenvalue of the Hessian (or the smoothness parameter) in standard analysis of gradient descent in convex optimization~\citep{boyd2004convex}. Moreover, even on simple quadratic functions, the convergence rate of simplified Adam (SignGD) depends on the condition number (Appendix~\ref{sec:proofs:lowbound}). This demonstrates the advantage of Sophia in adapting to heterogeneous curvatures across parameter dimensions.
\section{Related work} 
\label{sec:rw}
\textbf{Stochastic Adaptive First-order Optimizers in Deep Learning.} The idea of adaptive first-order optimizers dates back to RProp~\citep{braun1992rprop}. AdaGrad~\citep{duchi2011adaptive} adapted the learning rate of features by estimated geometry and assign larger learning rate to infrequent features. 
RMSProp~\citep{hinton2012neural} generalized RProp and is capable to work with smaller batch sizes. Adam~\citep{kingma2014Adam} improved RMSProp by introducing a running average of gradients, and has so far become the dominant approach to solve optimization problems in deep learning, especially for training Transformers~\citep{vaswani2017attention}. Many follow-up works proposed variants of Adam~\citep{dozat2016incorporating,shazeer2018adafactor,reddi2019convergence,loshchilov2017decoupled, zhuang2020adabelief, you2019large}. \citet{chen2023symbolic} performed a search over adaptive first-order algorithms and discovered Lion, which is a improved version of sign momentum SGD.

\textbf{Second-order Optimizers in Deep Learning.} Second-order optimizers are believed to have the potential to outperform adaptive first-order optimizers. Classical second-order optimization algorithms pre-condition the gradient with curvature information~\citep{broyden1970convergence,nesterov2006cubic,conn2000trust}. Over the years, people have developed numerous ways to adapt these methods to deep learning. To the best of our knowledge, \citet{becker1988improving} was the first to use diagonal Hessian as the pre-conditioner. \citet{martens2010deep} approximated the Hessian with conjugate gradient. \citet{schaul2013no} automatically tuned learning rate of SGD by considering diagonal Hessian. \citet{pascanu2013revisiting} considered Gaussian Newton's approximation of Hessian and Fisher information matrix. \citet{martens2015optimizing} and follow-up works \citep{ba2017distributed,george2018fast,martens2018kronecker,zhang2022eva} proposed to approximate the Hessian based on the structure of neural networks. \citet{yao2021adahessian, jahani2021doubly} proposed to use the EMA of diagonal Hessian estimator as the pre-conditioner. 

Despite these progress on deep learning applications, for decoder-only large language models, Adam still appears to the most popular optimizer. The authors of this paper suspect that many previous second-order optimizers face the challenge that the computational / memory overhead due to frequent Hessian computation hinders improvements in wall-clock time~\citep{martens2015optimizing,gupta2018shampoo}. Some of them also depend on specific model architecture or hardware structures, e.g., \citet{anil2020scalable} offloads hessian computation to CPUs, and \citet{george2018fast} needs ResNets and very large batch size to approximate the Fisher information matrix. To the best of our knowledge, there was no previous report that second-order optimizers can achieve a speed-up on large language models in total compute.

\textbf{Gradient Clipping.} Global gradient clipping has been a standard practice in pre-training language models~\citep{merity2017regularizing,radford2019language,izsak2021train,zhang2022opt}. It helps stabilizes training and avoids the effect of rare examples and large gradient noise. \citet{zhang2019gradient,mai2021stability} showed that global gradient clipping is faster than standard SGD when global smoothness does not hold. \citet{zhang2020adaptive,crawshaw2022robustness} found out per-coordinate gradient clipping can function as adaptivity. In addition to gradient clipping, \ours~is the first to clip the update (coordinate-wise) in second-order methods to avoid the effect of Hessian's changing along the trajectory and the inaccuracy of Hessian approximation. 

\textbf{Optimization Algorithms in LM Pre-training.} Adam~\citep{kingma2014Adam} (with decoupled weight decay~\citep{loshchilov2017decoupled}) has become the dominant approach for language model pre-training~\citep{vaswani2017attention,devlin2018bert,radford2019language,brown2020language,zhang2022opt,touvron2023llama}. Different from vision tasks with CNNs~\citep{he2016deep} where models trained with SGD generalize better than models trained with Adam, Adam outperforms SGD by a huge margin on language modeling tasks with Transformers~\citep{anil2019memory,liu2020understanding,kunstner2023noise}. \citet{raffel2020exploring,chowdhery2022palm} trained Transformers with AdaFactor~\citep{shazeer2018adafactor}, which is a low rank version of Adam. \citet{you2019large} proposed to make the update of Adam proportional to per-layer paramter norm to stably train LLMs. 

\section{Conclusion}
We introduced Sophia, a scalable second-order optimizer for language model pre-training. Sophia converges in fewer steps than first-order adaptive methods, while maintaining almost the same per-step cost. On language modeling with GPT models, Sophia achieves a 2x speed-up compared with AdamW in the number of steps, total compute, and wall-clock time.

\subsection*{Acknowledgements}

We thank Jeff Z. HaoChen, Neil Band, Garrett Thomas for valuable feedbacks.
HL is supported by Stanford Graduate Fellowship. The authors
would like to thank the support from NSF IIS 2211780.

\bibliography{sample,all}

\begin{thebibliography}{81}
\providecommand{\natexlab}[1]{#1}
\providecommand{\url}[1]{\texttt{#1}}
\expandafter\ifx\csname urlstyle\endcsname\relax
  \providecommand{\doi}[1]{doi: #1}\else
  \providecommand{\doi}{doi: \begingroup \urlstyle{rm}\Url}\fi

\bibitem[Anil et~al.(2019)Anil, Gupta, Koren, and Singer]{anil2019memory}
Anil, R., Gupta, V., Koren, T., and Singer, Y.
\newblock Memory efficient adaptive optimization.
\newblock \emph{Advances in Neural Information Processing Systems}, 32, 2019.

\bibitem[Anil et~al.(2020)Anil, Gupta, Koren, Regan, and
  Singer]{anil2020scalable}
Anil, R., Gupta, V., Koren, T., Regan, K., and Singer, Y.
\newblock Scalable second order optimization for deep learning.
\newblock \emph{arXiv preprint arXiv:2002.09018}, 2020.

\bibitem[Ba et~al.(2017)Ba, Grosse, and Martens]{ba2017distributed}
Ba, J., Grosse, R., and Martens, J.
\newblock Distributed second-order optimization using kronecker-factored
  approximations.
\newblock In \emph{International Conference on Learning Representations}, 2017.

\bibitem[Balles \& Hennig(2018)Balles and Hennig]{balles2018dissecting}
Balles, L. and Hennig, P.
\newblock Dissecting adam: The sign, magnitude and variance of stochastic
  gradients.
\newblock In \emph{International Conference on Machine Learning}, pp.\
  404--413. PMLR, 2018.

\bibitem[Bartlett(1953)]{bartlett1953approximate}
Bartlett, M.
\newblock Approximate confidence intervals.
\newblock \emph{Biometrika}, 40\penalty0 (1/2):\penalty0 12--19, 1953.

\bibitem[Becker \& Le~Cun(1988)Becker and Le~Cun]{becker1988improving}
Becker, S. and Le~Cun, Y.
\newblock Improving the convergence of back-propagation learning with.
\newblock 1988.

\bibitem[Bernstein et~al.(2018)Bernstein, Wang, Azizzadenesheli, and
  Anandkumar]{bernstein2018signsgd}
Bernstein, J., Wang, Y.-X., Azizzadenesheli, K., and Anandkumar, A.
\newblock signsgd: Compressed optimisation for non-convex problems.
\newblock In \emph{International Conference on Machine Learning}, pp.\
  560--569. PMLR, 2018.

\bibitem[Black et~al.(2022)Black, Biderman, Hallahan, Anthony, Gao, Golding,
  He, Leahy, McDonell, Phang, et~al.]{black2022gpt}
Black, S., Biderman, S., Hallahan, E., Anthony, Q., Gao, L., Golding, L., He,
  H., Leahy, C., McDonell, K., Phang, J., et~al.
\newblock Gpt-neox-20b: An open-source autoregressive language model.
\newblock \emph{arXiv preprint arXiv:2204.06745}, 2022.

\bibitem[Botev et~al.(2017)Botev, Ritter, and Barber]{botev2017practical}
Botev, A., Ritter, H., and Barber, D.
\newblock Practical gauss-newton optimisation for deep learning.
\newblock In \emph{International Conference on Machine Learning}, pp.\
  557--565. PMLR, 2017.

\bibitem[Boyd \& Vandenberghe(2004)Boyd and Vandenberghe]{boyd2004convex}
Boyd, S.~P. and Vandenberghe, L.
\newblock \emph{Convex optimization}.
\newblock Cambridge university press, 2004.

\bibitem[Bradbury et~al.(2018)Bradbury, Frostig, Hawkins, Johnson, Leary,
  Maclaurin, Necula, Paszke, Vander{P}las, Wanderman-{M}ilne, and
  Zhang]{jax2018github}
Bradbury, J., Frostig, R., Hawkins, P., Johnson, M.~J., Leary, C., Maclaurin,
  D., Necula, G., Paszke, A., Vander{P}las, J., Wanderman-{M}ilne, S., and
  Zhang, Q.
\newblock {JAX}: composable transformations of {P}ython+{N}um{P}y programs,
  2018.
\newblock URL \url{http://github.com/google/jax}.

\bibitem[Braun \& Riedmiller(1992)Braun and Riedmiller]{braun1992rprop}
Braun, H. and Riedmiller, M.
\newblock Rprop: a fast adaptive learning algorithm.
\newblock In \emph{Proceedings of the International Symposium on Computer and
  Information Science VII}, 1992.

\bibitem[Brown et~al.(2020)Brown, Mann, Ryder, Subbiah, Kaplan, Dhariwal,
  Neelakantan, Shyam, Sastry, Askell, et~al.]{brown2020language}
Brown, T., Mann, B., Ryder, N., Subbiah, M., Kaplan, J.~D., Dhariwal, P.,
  Neelakantan, A., Shyam, P., Sastry, G., Askell, A., et~al.
\newblock Language models are few-shot learners.
\newblock \emph{Advances in neural information processing systems},
  33:\penalty0 1877--1901, 2020.

\bibitem[Broyden(1970)]{broyden1970convergence}
Broyden, C.~G.
\newblock The convergence of a class of double-rank minimization algorithms 1.
  general considerations.
\newblock \emph{IMA Journal of Applied Mathematics}, 6\penalty0 (1):\penalty0
  76--90, 1970.

\bibitem[Chapelle et~al.(2011)Chapelle, Erhan, et~al.]{chapelle2011improved}
Chapelle, O., Erhan, D., et~al.
\newblock Improved preconditioner for hessian free optimization.
\newblock In \emph{NIPS Workshop on Deep Learning and Unsupervised Feature
  Learning}, volume 201. Citeseer, 2011.

\bibitem[Chen(2011)]{chen2011hessian}
Chen, P.
\newblock Hessian matrix vs. gauss--newton hessian matrix.
\newblock \emph{SIAM Journal on Numerical Analysis}, 49\penalty0 (4):\penalty0
  1417--1435, 2011.

\bibitem[Chen et~al.(2023)Chen, Liang, Huang, Real, Wang, Liu, Pham, Dong,
  Luong, Hsieh, et~al.]{chen2023symbolic}
Chen, X., Liang, C., Huang, D., Real, E., Wang, K., Liu, Y., Pham, H., Dong,
  X., Luong, T., Hsieh, C.-J., et~al.
\newblock Symbolic discovery of optimization algorithms.
\newblock \emph{arXiv preprint arXiv:2302.06675}, 2023.

\bibitem[Chowdhery et~al.(2022)Chowdhery, Narang, Devlin, Bosma, Mishra,
  Roberts, Barham, Chung, Sutton, Gehrmann, et~al.]{chowdhery2022palm}
Chowdhery, A., Narang, S., Devlin, J., Bosma, M., Mishra, G., Roberts, A.,
  Barham, P., Chung, H.~W., Sutton, C., Gehrmann, S., et~al.
\newblock Palm: Scaling language modeling with pathways.
\newblock \emph{arXiv preprint arXiv:2204.02311}, 2022.

\bibitem[Conn et~al.(2000)Conn, Gould, and Toint]{conn2000trust}
Conn, A.~R., Gould, N., and Toint, P.~L.
\newblock Trust-region methods, siam.
\newblock \emph{MPS, Philadelphia}, 2000.

\bibitem[Crawshaw et~al.(2022)Crawshaw, Liu, Orabona, Zhang, and
  Zhuang]{crawshaw2022robustness}
Crawshaw, M., Liu, M., Orabona, F., Zhang, W., and Zhuang, Z.
\newblock Robustness to unbounded smoothness of generalized signsgd.
\newblock \emph{arXiv preprint arXiv:2208.11195}, 2022.

\bibitem[Dennis~Jr \& Schnabel(1996)Dennis~Jr and
  Schnabel]{dennis1996numerical}
Dennis~Jr, J.~E. and Schnabel, R.~B.
\newblock \emph{Numerical methods for unconstrained optimization and nonlinear
  equations}.
\newblock SIAM, 1996.

\bibitem[Devlin et~al.(2018)Devlin, Chang, Lee, and Toutanova]{devlin2018bert}
Devlin, J., Chang, M.-W., Lee, K., and Toutanova, K.
\newblock Bert: Pre-training of deep bidirectional transformers for language
  understanding.
\newblock \emph{arXiv preprint arXiv:1810.04805}, 2018.

\bibitem[Dozat(2016)]{dozat2016incorporating}
Dozat, T.
\newblock Incorporating nesterov momentum into adam.
\newblock 2016.

\bibitem[Duchi et~al.(2011)Duchi, Hazan, and Singer]{duchi2011adaptive}
Duchi, J., Hazan, E., and Singer, Y.
\newblock Adaptive subgradient methods for online learning and stochastic
  optimization.
\newblock \emph{Journal of Machine Learning Research}, 12\penalty0
  (Jul):\penalty0 2121--2159, 2011.

\bibitem[Gao et~al.(2020)Gao, Biderman, Black, Golding, Hoppe, Foster, Phang,
  He, Thite, Nabeshima, Presser, and Leahy]{gao2022pile}
Gao, L., Biderman, S., Black, S., Golding, L., Hoppe, T., Foster, C., Phang,
  J., He, H., Thite, A., Nabeshima, N., Presser, S., and Leahy, C.
\newblock The {P}ile: An 800gb dataset of diverse text for language modeling.
\newblock \emph{arXiv preprint arXiv:2101.00027}, 2020.

\bibitem[Gargiani et~al.(2020)Gargiani, Zanelli, Diehl, and
  Hutter]{gargiani2020promise}
Gargiani, M., Zanelli, A., Diehl, M., and Hutter, F.
\newblock On the promise of the stochastic generalized gauss-newton method for
  training dnns.
\newblock \emph{arXiv preprint arXiv:2006.02409}, 2020.

\bibitem[George et~al.(2018)George, Laurent, Bouthillier, Ballas, and
  Vincent]{george2018fast}
George, T., Laurent, C., Bouthillier, X., Ballas, N., and Vincent, P.
\newblock Fast approximate natural gradient descent in a kronecker factored
  eigenbasis.
\newblock \emph{Advances in Neural Information Processing Systems}, 31, 2018.

\bibitem[Ghorbani et~al.(2019)Ghorbani, Krishnan, and
  Xiao]{ghorbani2019investigation}
Ghorbani, B., Krishnan, S., and Xiao, Y.
\newblock An investigation into neural net optimization via hessian eigenvalue
  density.
\newblock In \emph{International Conference on Machine Learning}, pp.\
  2232--2241. PMLR, 2019.

\bibitem[Gokaslan \& Cohen(2019)Gokaslan and Cohen]{Gokaslan2019OpenWeb}
Gokaslan, A. and Cohen, V.
\newblock Openwebtext corpus, 2019.

\bibitem[Grosse(2022)]{grosse2022neural}
Grosse, R.
\newblock \emph{Neural Network Training Dynamics}.
\newblock 2022.

\bibitem[Grosse \& Martens(2016)Grosse and Martens]{grosse2016kronecker}
Grosse, R. and Martens, J.
\newblock A kronecker-factored approximate fisher matrix for convolution
  layers.
\newblock In \emph{International Conference on Machine Learning}, pp.\
  573--582. PMLR, 2016.

\bibitem[Gupta et~al.(2018)Gupta, Koren, and Singer]{gupta2018shampoo}
Gupta, V., Koren, T., and Singer, Y.
\newblock Shampoo: Preconditioned stochastic tensor optimization.
\newblock In \emph{International Conference on Machine Learning}, pp.\
  1842--1850. PMLR, 2018.

\bibitem[He et~al.(2016)He, Zhang, Ren, and Sun]{he2016deep}
He, K., Zhang, X., Ren, S., and Sun, J.
\newblock Deep residual learning for image recognition.
\newblock In \emph{Proceedings of the IEEE conference on computer vision and
  pattern recognition}, pp.\  770--778, 2016.

\bibitem[Hinton et~al.(2012)Hinton, Srivastava, and Swersky]{hinton2012neural}
Hinton, G., Srivastava, N., and Swersky, K.
\newblock Neural networks for machine learning lecture 6a overview of
  mini-batch gradient descent.
\newblock \emph{Cited on}, 14\penalty0 (8):\penalty0 2, 2012.

\bibitem[Hoffmann et~al.(2022)Hoffmann, Borgeaud, Mensch, Buchatskaya, Cai,
  Rutherford, Casas, Hendricks, Welbl, Clark, et~al.]{hoffmann2022training}
Hoffmann, J., Borgeaud, S., Mensch, A., Buchatskaya, E., Cai, T., Rutherford,
  E., Casas, D. d.~L., Hendricks, L.~A., Welbl, J., Clark, A., et~al.
\newblock Training compute-optimal large language models.
\newblock \emph{arXiv preprint arXiv:2203.15556}, 2022.

\bibitem[Hutchinson(1989)]{hutchinson1989stochastic}
Hutchinson, M.~F.
\newblock A stochastic estimator of the trace of the influence matrix for
  laplacian smoothing splines.
\newblock \emph{Communications in Statistics-Simulation and Computation},
  18\penalty0 (3):\penalty0 1059--1076, 1989.

\bibitem[Izsak et~al.(2021)Izsak, Berchansky, and Levy]{izsak2021train}
Izsak, P., Berchansky, M., and Levy, O.
\newblock How to train bert with an academic budget.
\newblock \emph{arXiv preprint arXiv:2104.07705}, 2021.

\bibitem[Jahani et~al.(2021)Jahani, Rusakov, Shi, Richt{\'a}rik, Mahoney, and
  Tak{\'a}{\v{c}}]{jahani2021doubly}
Jahani, M., Rusakov, S., Shi, Z., Richt{\'a}rik, P., Mahoney, M.~W., and
  Tak{\'a}{\v{c}}, M.
\newblock Doubly adaptive scaled algorithm for machine learning using
  second-order information.
\newblock \emph{arXiv preprint arXiv:2109.05198}, 2021.

\bibitem[Kaplan et~al.(2020)Kaplan, McCandlish, Henighan, Brown, Chess, Child,
  Gray, Radford, Wu, and Amodei]{kaplan2020scaling}
Kaplan, J., McCandlish, S., Henighan, T., Brown, T.~B., Chess, B., Child, R.,
  Gray, S., Radford, A., Wu, J., and Amodei, D.
\newblock Scaling laws for neural language models.
\newblock \emph{arXiv preprint arXiv:2001.08361}, 2020.

\bibitem[Karamcheti et~al.(2021)Karamcheti, Orr, Bolton, Zhang, Goel, Narayan,
  Bommasani, Narayanan, Hashimoto, Jurafsky, Manning, Potts, R{\'e}, and
  Liang]{mistral}
Karamcheti, S., Orr, L., Bolton, J., Zhang, T., Goel, K., Narayan, A.,
  Bommasani, R., Narayanan, D., Hashimoto, T., Jurafsky, D., Manning, C.~D.,
  Potts, C., R{\'e}, C., and Liang, P.
\newblock Mistral -- a journey towards reproducible language model training.
\newblock \url{https://crfm.stanford.edu/2021/08/26/mistral.html}, 2021.

\bibitem[Kingma \& Ba(2014)Kingma and Ba]{kingma2014Adam}
Kingma, D.~P. and Ba, J.
\newblock Adam: A method for stochastic optimization.
\newblock \emph{arXiv preprint arXiv:1412.6980}, 2014.

\bibitem[Kunstner et~al.(2019)Kunstner, Hennig, and
  Balles]{kunstner2019limitations}
Kunstner, F., Hennig, P., and Balles, L.
\newblock Limitations of the empirical fisher approximation for natural
  gradient descent.
\newblock \emph{Advances in neural information processing systems}, 32, 2019.

\bibitem[Kunstner et~al.(2023)Kunstner, Chen, Lavington, and
  Schmidt]{kunstner2023noise}
Kunstner, F., Chen, J., Lavington, J.~W., and Schmidt, M.
\newblock Noise is not the main factor behind the gap between sgd and adam on
  transformers, but sign descent might be.
\newblock \emph{arXiv preprint arXiv:2304.13960}, 2023.

\bibitem[Liu et~al.(2020)Liu, Liu, Gao, Chen, and Han]{liu2020understanding}
Liu, L., Liu, X., Gao, J., Chen, W., and Han, J.
\newblock Understanding the difficulty of training transformers.
\newblock \emph{arXiv preprint arXiv:2004.08249}, 2020.

\bibitem[Loshchilov \& Hutter(2016)Loshchilov and Hutter]{loshchilov2016sgdr}
Loshchilov, I. and Hutter, F.
\newblock Sgdr: Stochastic gradient descent with warm restarts.
\newblock \emph{arXiv preprint arXiv:1608.03983}, 2016.

\bibitem[Loshchilov \& Hutter(2017)Loshchilov and
  Hutter]{loshchilov2017decoupled}
Loshchilov, I. and Hutter, F.
\newblock Decoupled weight decay regularization.
\newblock \emph{arXiv preprint arXiv:1711.05101}, 2017.

\bibitem[Mai \& Johansson(2021)Mai and Johansson]{mai2021stability}
Mai, V.~V. and Johansson, M.
\newblock Stability and convergence of stochastic gradient clipping: Beyond
  lipschitz continuity and smoothness.
\newblock In \emph{International Conference on Machine Learning}, pp.\
  7325--7335. PMLR, 2021.

\bibitem[Martens(2020)]{martens2020new}
Martens, J.
\newblock New insights and perspectives on the natural gradient method.
\newblock \emph{The Journal of Machine Learning Research}, 21\penalty0
  (1):\penalty0 5776--5851, 2020.

\bibitem[Martens \& Grosse(2015)Martens and Grosse]{martens2015optimizing}
Martens, J. and Grosse, R.
\newblock Optimizing neural networks with kronecker-factored approximate
  curvature.
\newblock In \emph{International conference on machine learning}, pp.\
  2408--2417. PMLR, 2015.

\bibitem[Martens et~al.(2018)Martens, Ba, and Johnson]{martens2018kronecker}
Martens, J., Ba, J., and Johnson, M.
\newblock Kronecker-factored curvature approximations for recurrent neural
  networks.
\newblock In \emph{International Conference on Learning Representations}, 2018.

\bibitem[Martens et~al.(2010)]{martens2010deep}
Martens, J. et~al.
\newblock Deep learning via hessian-free optimization.
\newblock In \emph{ICML}, volume~27, pp.\  735--742, 2010.

\bibitem[Merity et~al.(2017)Merity, Keskar, and Socher]{merity2017regularizing}
Merity, S., Keskar, N.~S., and Socher, R.
\newblock Regularizing and optimizing lstm language models.
\newblock \emph{arXiv preprint arXiv:1708.02182}, 2017.

\bibitem[Nesterov \& Polyak(2006)Nesterov and Polyak]{nesterov2006cubic}
Nesterov, Y. and Polyak, B.~T.
\newblock Cubic regularization of newton method and its global performance.
\newblock \emph{Mathematical Programming}, 108\penalty0 (1):\penalty0 177--205,
  2006.

\bibitem[OpenAI(2023)]{openai2023gpt}
OpenAI.
\newblock Gpt-4 technical report.
\newblock \emph{arXiv}, 2023.

\bibitem[Ortega \& Rheinboldt(2000)Ortega and Rheinboldt]{ortega2000iterative}
Ortega, J.~M. and Rheinboldt, W.~C.
\newblock \emph{Iterative solution of nonlinear equations in several
  variables}.
\newblock SIAM, 2000.

\bibitem[Pascanu \& Bengio(2013)Pascanu and Bengio]{pascanu2013revisiting}
Pascanu, R. and Bengio, Y.
\newblock Revisiting natural gradient for deep networks.
\newblock \emph{arXiv preprint arXiv:1301.3584}, 2013.

\bibitem[Paszke et~al.(2019)Paszke, Gross, Massa, Lerer, Bradbury, Chanan,
  Killeen, Lin, Gimelshein, Antiga, Desmaison, Kopf, Yang, DeVito, Raison,
  Tejani, Chilamkurthy, Steiner, Fang, Bai, and Chintala]{pytorch2019}
Paszke, A., Gross, S., Massa, F., Lerer, A., Bradbury, J., Chanan, G., Killeen,
  T., Lin, Z., Gimelshein, N., Antiga, L., Desmaison, A., Kopf, A., Yang, E.,
  DeVito, Z., Raison, M., Tejani, A., Chilamkurthy, S., Steiner, B., Fang, L.,
  Bai, J., and Chintala, S.
\newblock Pytorch: An imperative style, high-performance deep learning library.
\newblock In Wallach, H., Larochelle, H., Beygelzimer, A., d\textquotesingle
  Alch\'{e}-Buc, F., Fox, E., and Garnett, R. (eds.), \emph{Advances in Neural
  Information Processing Systems 32}, pp.\  8024--8035. Curran Associates,
  Inc., 2019.
\newblock URL
  \url{http://papers.neurips.cc/paper/9015-pytorch-an-imperative-style-high-performance-deep-learning-library.pdf}.

\bibitem[Radford et~al.(2019)Radford, Wu, Child, Luan, Amodei, Sutskever,
  et~al.]{radford2019language}
Radford, A., Wu, J., Child, R., Luan, D., Amodei, D., Sutskever, I., et~al.
\newblock Language models are unsupervised multitask learners.
\newblock \emph{OpenAI blog}, 1\penalty0 (8):\penalty0 9, 2019.

\bibitem[Rae et~al.(2021)Rae, Borgeaud, Cai, Millican, Hoffmann, Song,
  Aslanides, Henderson, Ring, Young, et~al.]{rae2021scaling}
Rae, J.~W., Borgeaud, S., Cai, T., Millican, K., Hoffmann, J., Song, F.,
  Aslanides, J., Henderson, S., Ring, R., Young, S., et~al.
\newblock Scaling language models: Methods, analysis \& insights from training
  gopher.
\newblock \emph{arXiv preprint arXiv:2112.11446}, 2021.

\bibitem[Raffel et~al.(2020)Raffel, Shazeer, Roberts, Lee, Narang, Matena,
  Zhou, Li, and Liu]{raffel2020exploring}
Raffel, C., Shazeer, N., Roberts, A., Lee, K., Narang, S., Matena, M., Zhou,
  Y., Li, W., and Liu, P.~J.
\newblock Exploring the limits of transfer learning with a unified text-to-text
  transformer.
\newblock \emph{Journal of Machine Learning Research}, 21:\penalty0 1--67,
  2020.

\bibitem[Reddi et~al.(2019)Reddi, Kale, and Kumar]{reddi2019convergence}
Reddi, S.~J., Kale, S., and Kumar, S.
\newblock On the convergence of adam and beyond.
\newblock \emph{arXiv preprint arXiv:1904.09237}, 2019.

\bibitem[Roosta-Khorasani \& Ascher(2015)Roosta-Khorasani and
  Ascher]{roosta2015improved}
Roosta-Khorasani, F. and Ascher, U.
\newblock Improved bounds on sample size for implicit matrix trace estimators.
\newblock \emph{Foundations of Computational Mathematics}, 15\penalty0
  (5):\penalty0 1187--1212, 2015.

\bibitem[Sagun et~al.(2016)Sagun, Bottou, and LeCun]{sagun2016eigenvalues}
Sagun, L., Bottou, L., and LeCun, Y.
\newblock Eigenvalues of the hessian in deep learning: Singularity and beyond.
\newblock \emph{arXiv preprint arXiv:1611.07476}, 2016.

\bibitem[Sankar et~al.(2021)Sankar, Khasbage, Vigneswaran, and
  Balasubramanian]{sankar2021deeper}
Sankar, A.~R., Khasbage, Y., Vigneswaran, R., and Balasubramanian, V.~N.
\newblock A deeper look at the hessian eigenspectrum of deep neural networks
  and its applications to regularization.
\newblock In \emph{Proceedings of the AAAI Conference on Artificial
  Intelligence}, volume~35, pp.\  9481--9488, 2021.

\bibitem[Schaul et~al.(2013)Schaul, Zhang, and LeCun]{schaul2013no}
Schaul, T., Zhang, S., and LeCun, Y.
\newblock No more pesky learning rates.
\newblock In \emph{International conference on machine learning}, pp.\
  343--351. PMLR, 2013.

\bibitem[Schraudolph(2002)]{schraudolph2002fast}
Schraudolph, N.~N.
\newblock Fast curvature matrix-vector products for second-order gradient
  descent.
\newblock \emph{Neural computation}, 14\penalty0 (7):\penalty0 1723--1738,
  2002.

\bibitem[Shazeer \& Stern(2018)Shazeer and Stern]{shazeer2018adafactor}
Shazeer, N. and Stern, M.
\newblock Adafactor: Adaptive learning rates with sublinear memory cost.
\newblock In \emph{International Conference on Machine Learning}, pp.\
  4596--4604. PMLR, 2018.

\bibitem[Srivastava et~al.(2014)Srivastava, Hinton, Krizhevsky, Sutskever, and
  Salakhutdinov]{srivastava2014dropout}
Srivastava, N., Hinton, G., Krizhevsky, A., Sutskever, I., and Salakhutdinov,
  R.
\newblock Dropout: a simple way to prevent neural networks from overfitting.
\newblock \emph{The journal of machine learning research}, 15\penalty0
  (1):\penalty0 1929--1958, 2014.

\bibitem[Touvron et~al.(2023)Touvron, Lavril, Izacard, Martinet, Lachaux,
  Lacroix, Rozi{\`e}re, Goyal, Hambro, Azhar, et~al.]{touvron2023llama}
Touvron, H., Lavril, T., Izacard, G., Martinet, X., Lachaux, M.-A., Lacroix,
  T., Rozi{\`e}re, B., Goyal, N., Hambro, E., Azhar, F., et~al.
\newblock Llama: Open and efficient foundation language models.
\newblock \emph{arXiv preprint arXiv:2302.13971}, 2023.

\bibitem[Vaswani et~al.(2017)Vaswani, Shazeer, Parmar, Uszkoreit, Jones, Gomez,
  Kaiser, and Polosukhin]{vaswani2017attention}
Vaswani, A., Shazeer, N., Parmar, N., Uszkoreit, J., Jones, L., Gomez, A.~N.,
  Kaiser, L., and Polosukhin, I.
\newblock Attention is all you need.
\newblock \emph{arXiv preprint arXiv:1706.03762}, 2017.

\bibitem[Wang et~al.(2019)Wang, Pruksachatkun, Nangia, Singh, Michael, Hill,
  Levy, and Bowman]{wang2019superglue}
Wang, A., Pruksachatkun, Y., Nangia, N., Singh, A., Michael, J., Hill, F.,
  Levy, O., and Bowman, S.
\newblock Superglue: A stickier benchmark for general-purpose language
  understanding systems.
\newblock \emph{Advances in neural information processing systems}, 32, 2019.

\bibitem[Wei et~al.(2020)Wei, Kakade, and Ma]{wei2020implicit}
Wei, C., Kakade, S., and Ma, T.
\newblock The implicit and explicit regularization effects of dropout.
\newblock \emph{arXiv preprint arXiv:2002.12915}, 2020.

\bibitem[Wolf et~al.(2020)Wolf, Debut, Sanh, Chaumond, Delangue, Moi, Cistac,
  Rault, Louf, Funtowicz, Davison, Shleifer, von Platen, Ma, Jernite, Plu, Xu,
  Scao, Gugger, Drame, Lhoest, and Rush]{wolf-etal-2020-transformers}
Wolf, T., Debut, L., Sanh, V., Chaumond, J., Delangue, C., Moi, A., Cistac, P.,
  Rault, T., Louf, R., Funtowicz, M., Davison, J., Shleifer, S., von Platen,
  P., Ma, C., Jernite, Y., Plu, J., Xu, C., Scao, T.~L., Gugger, S., Drame, M.,
  Lhoest, Q., and Rush, A.~M.
\newblock Transformers: State-of-the-art natural language processing.
\newblock In \emph{Proceedings of the 2020 Conference on Empirical Methods in
  Natural Language Processing: System Demonstrations}, pp.\  38--45, Online,
  October 2020. Association for Computational Linguistics.
\newblock URL \url{https://www.aclweb.org/anthology/2020.emnlp-demos.6}.

\bibitem[Yao et~al.(2020)Yao, Gholami, Keutzer, and Mahoney]{yao2020pyhessian}
Yao, Z., Gholami, A., Keutzer, K., and Mahoney, M.~W.
\newblock Pyhessian: Neural networks through the lens of the hessian.
\newblock In \emph{2020 IEEE international conference on big data (Big data)},
  pp.\  581--590. IEEE, 2020.

\bibitem[Yao et~al.(2021)Yao, Gholami, Shen, Mustafa, Keutzer, and
  Mahoney]{yao2021adahessian}
Yao, Z., Gholami, A., Shen, S., Mustafa, M., Keutzer, K., and Mahoney, M.
\newblock Adahessian: An adaptive second order optimizer for machine learning.
\newblock In \emph{proceedings of the AAAI conference on artificial
  intelligence}, volume~35, pp.\  10665--10673, 2021.

\bibitem[You et~al.(2019)You, Li, Reddi, Hseu, Kumar, Bhojanapalli, Song,
  Demmel, Keutzer, and Hsieh]{you2019large}
You, Y., Li, J., Reddi, S., Hseu, J., Kumar, S., Bhojanapalli, S., Song, X.,
  Demmel, J., Keutzer, K., and Hsieh, C.-J.
\newblock Large batch optimization for deep learning: Training bert in 76
  minutes.
\newblock \emph{arXiv preprint arXiv:1904.00962}, 2019.

\bibitem[Zhang et~al.(2019)Zhang, He, Sra, and Jadbabaie]{zhang2019gradient}
Zhang, J., He, T., Sra, S., and Jadbabaie, A.
\newblock Why gradient clipping accelerates training: A theoretical
  justification for adaptivity.
\newblock \emph{arXiv preprint arXiv:1905.11881}, 2019.

\bibitem[Zhang et~al.(2020)Zhang, Karimireddy, Veit, Kim, Reddi, Kumar, and
  Sra]{zhang2020adaptive}
Zhang, J., Karimireddy, S.~P., Veit, A., Kim, S., Reddi, S., Kumar, S., and
  Sra, S.
\newblock Why are adaptive methods good for attention models?
\newblock \emph{Advances in Neural Information Processing Systems},
  33:\penalty0 15383--15393, 2020.

\bibitem[Zhang et~al.(2022{\natexlab{a}})Zhang, Shi, and Li]{zhang2022eva}
Zhang, L., Shi, S., and Li, B.
\newblock Eva: Practical second-order optimization with kronecker-vectorized
  approximation.
\newblock In \emph{The Eleventh International Conference on Learning
  Representations}, 2022{\natexlab{a}}.

\bibitem[Zhang et~al.(2022{\natexlab{b}})Zhang, Roller, Goyal, Artetxe, Chen,
  Chen, Dewan, Diab, Li, Lin, et~al.]{zhang2022opt}
Zhang, S., Roller, S., Goyal, N., Artetxe, M., Chen, M., Chen, S., Dewan, C.,
  Diab, M., Li, X., Lin, X.~V., et~al.
\newblock Opt: Open pre-trained transformer language models.
\newblock \emph{arXiv preprint arXiv:2205.01068}, 2022{\natexlab{b}}.

\bibitem[Zhuang et~al.(2020)Zhuang, Tang, Ding, Tatikonda, Dvornek,
  Papademetris, and Duncan]{zhuang2020adabelief}
Zhuang, J., Tang, T., Ding, Y., Tatikonda, S.~C., Dvornek, N., Papademetris,
  X., and Duncan, J.
\newblock Adabelief optimizer: Adapting stepsizes by the belief in observed
  gradients.
\newblock \emph{Advances in neural information processing systems},
  33:\penalty0 18795--18806, 2020.

\end{thebibliography}
\bibliographystyle{icml2021}

\appendix

\newpage
\section{Additional Experiment Results}\label{sec:add_result}

\textbf{Dynamics of \ours~in training.} We measure the $\ell_2$ norm of the EMA of the diagonal Hessian $h_t$, and the proportion of parameters where clipping happens (that is, $|m_t[i]/\max\{\gamma h_t[i], \epsilon\}|$ is larger than 1) during pre-training in Figure~\ref{fig:dynamics}. After the initial stage, the norm of the Hessian steadily grows. The proportion of parameters where clipping happens approaches 60$\%$, which corroborates the importance of per-coordinate clipping in the algorithm. We also note that the clipping proportion generally should be between 50\% and 90\% for Sophia to be effective, and the users are recommended to tune $\gamma$ to achieve some a clipping proportion (See Section~\ref{sec:exp_setup} for details). 

\begin{figure}[h]
\begin{center}
\includegraphics[width=0.325\textwidth]{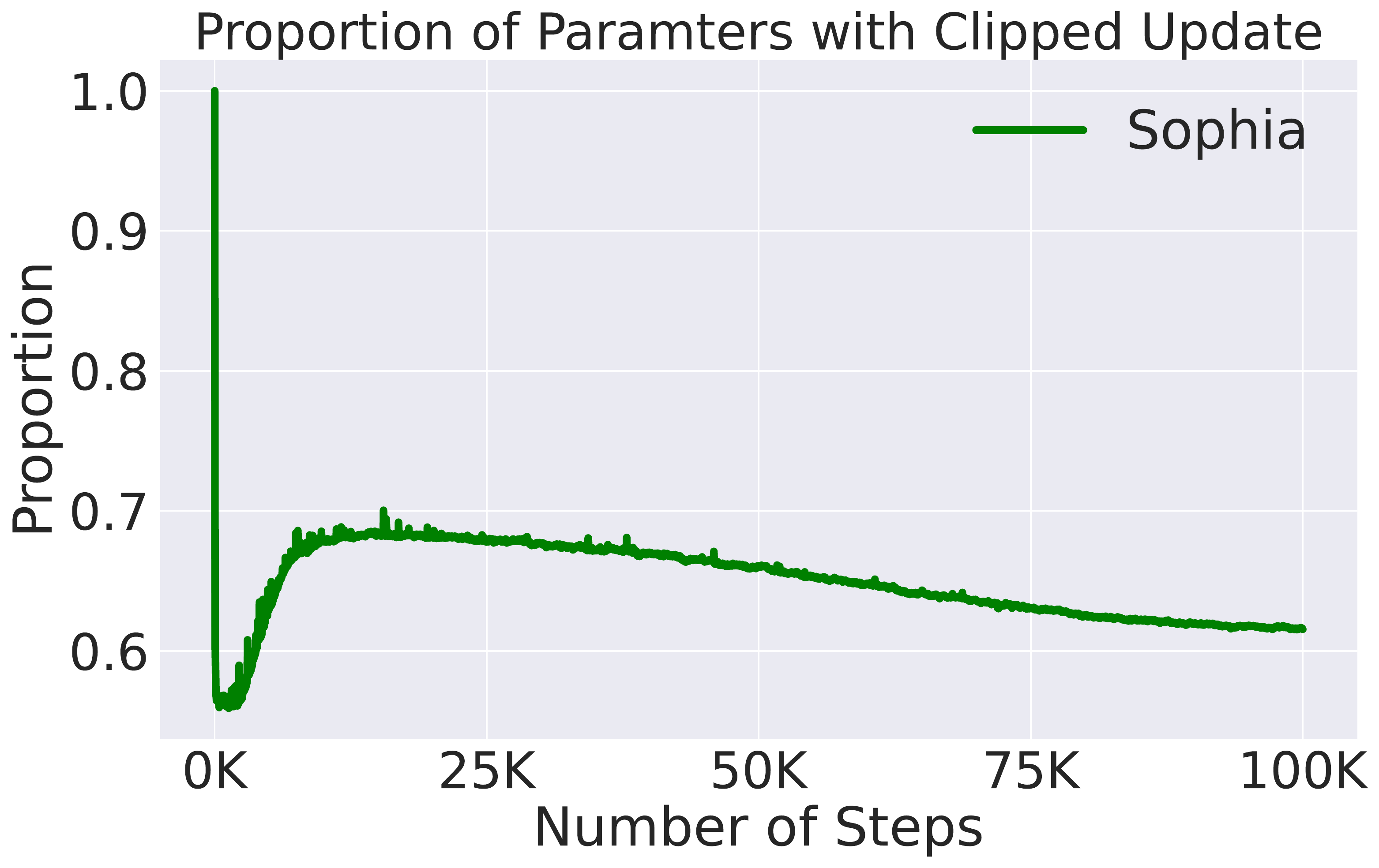}
\includegraphics[width=0.33\textwidth]{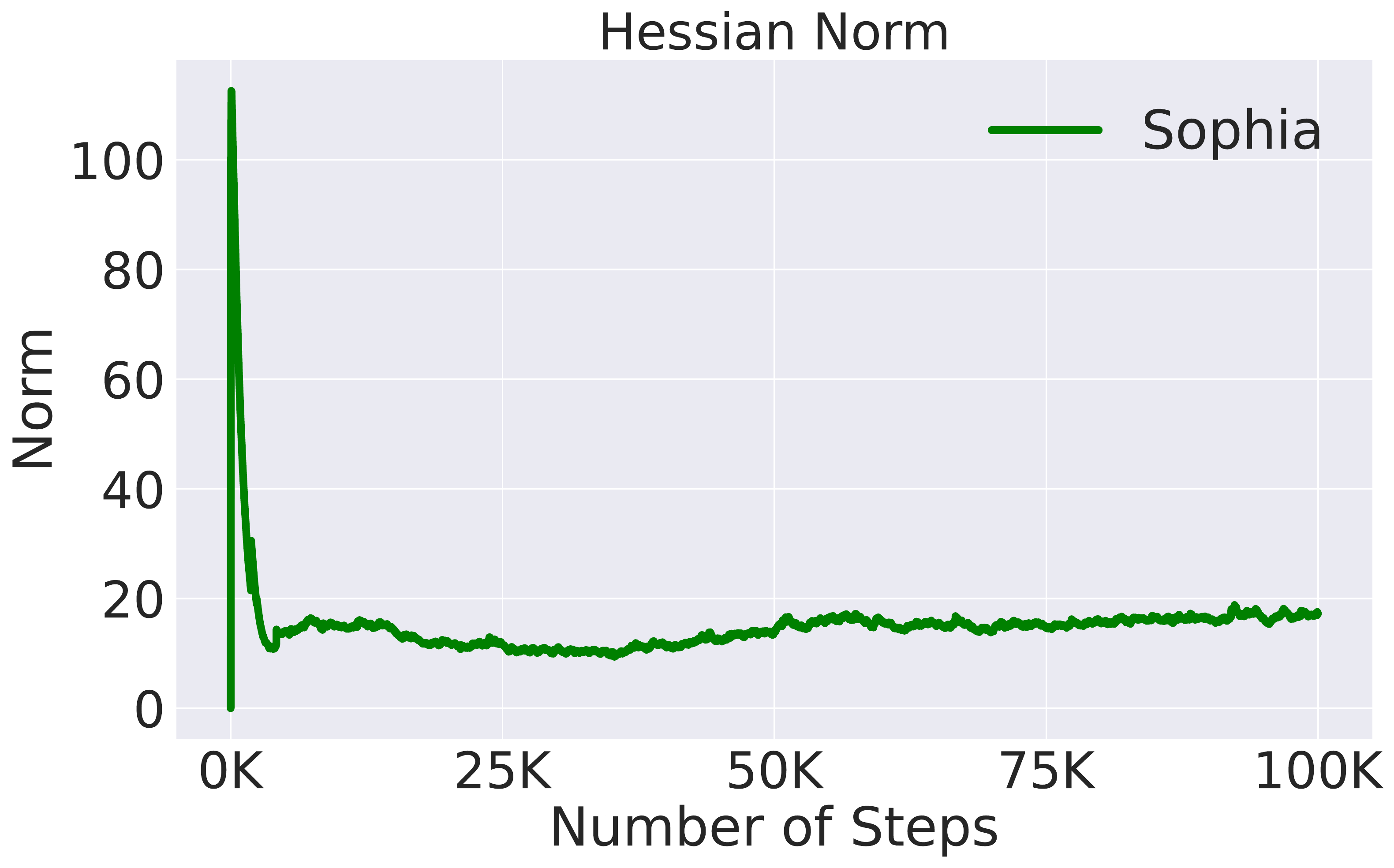}
%\vspace{-10pt}
\caption{Visualization of training statistics. (a) The proportion of parameters whose update is clipped. (b) $\ell_2$ norm of the EMA of Hessian $h_t$. \label{fig:dynamics} 
}
%\vspace{-20pt}
\end{center}
\end{figure}

\textbf{Results with different number of steps.} Runs with different number of steps and their comparison are provided in Figure~\ref{fig:add_res}. Across different choices of the total number of steps, Sophia outperforms AdamW and Lion with a large margin as the main experiments we presented in Section~\ref{sec:results}.

\begin{figure}[h]
\begin{center}
\includegraphics[width=0.452\textwidth]{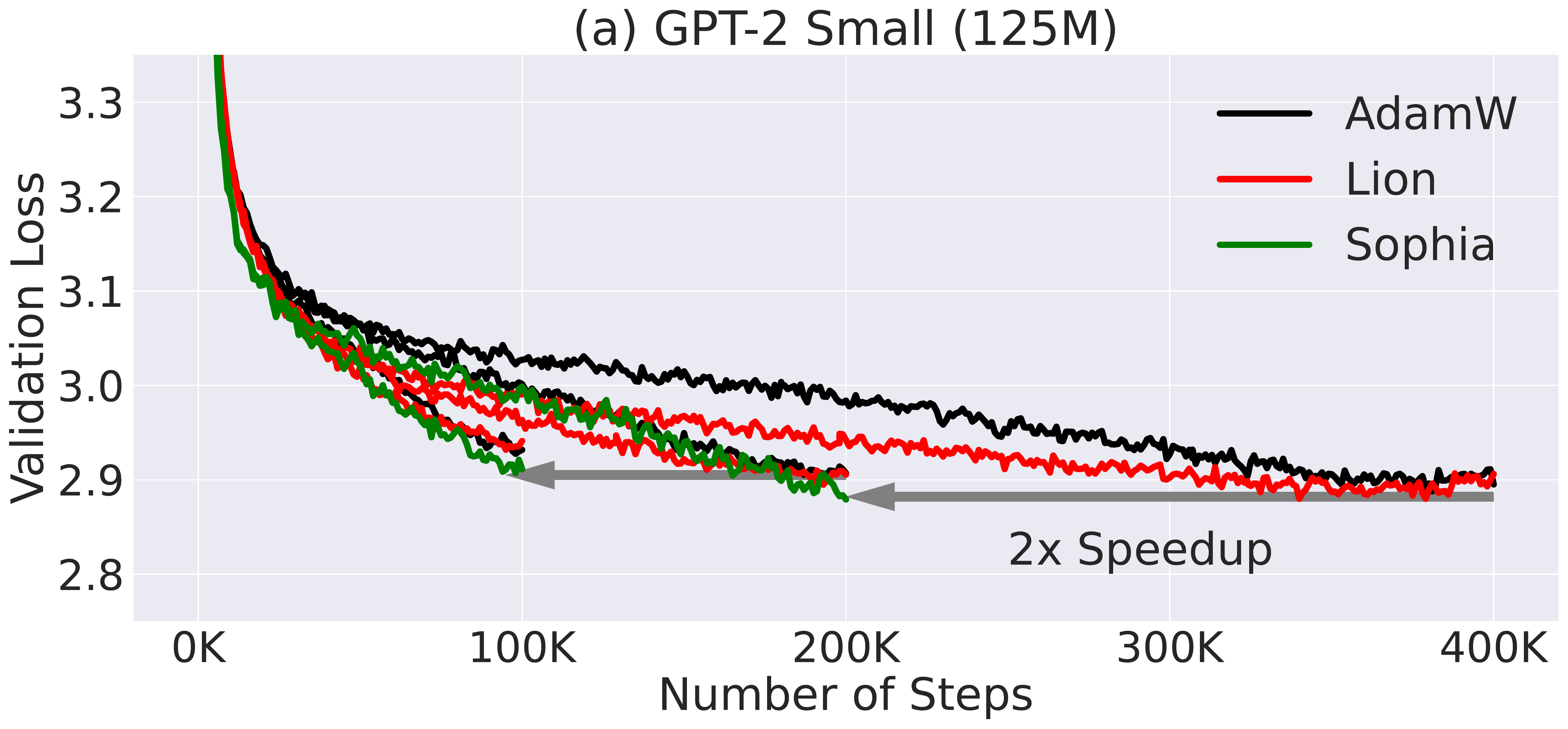}
\includegraphics[width=0.33\textwidth]{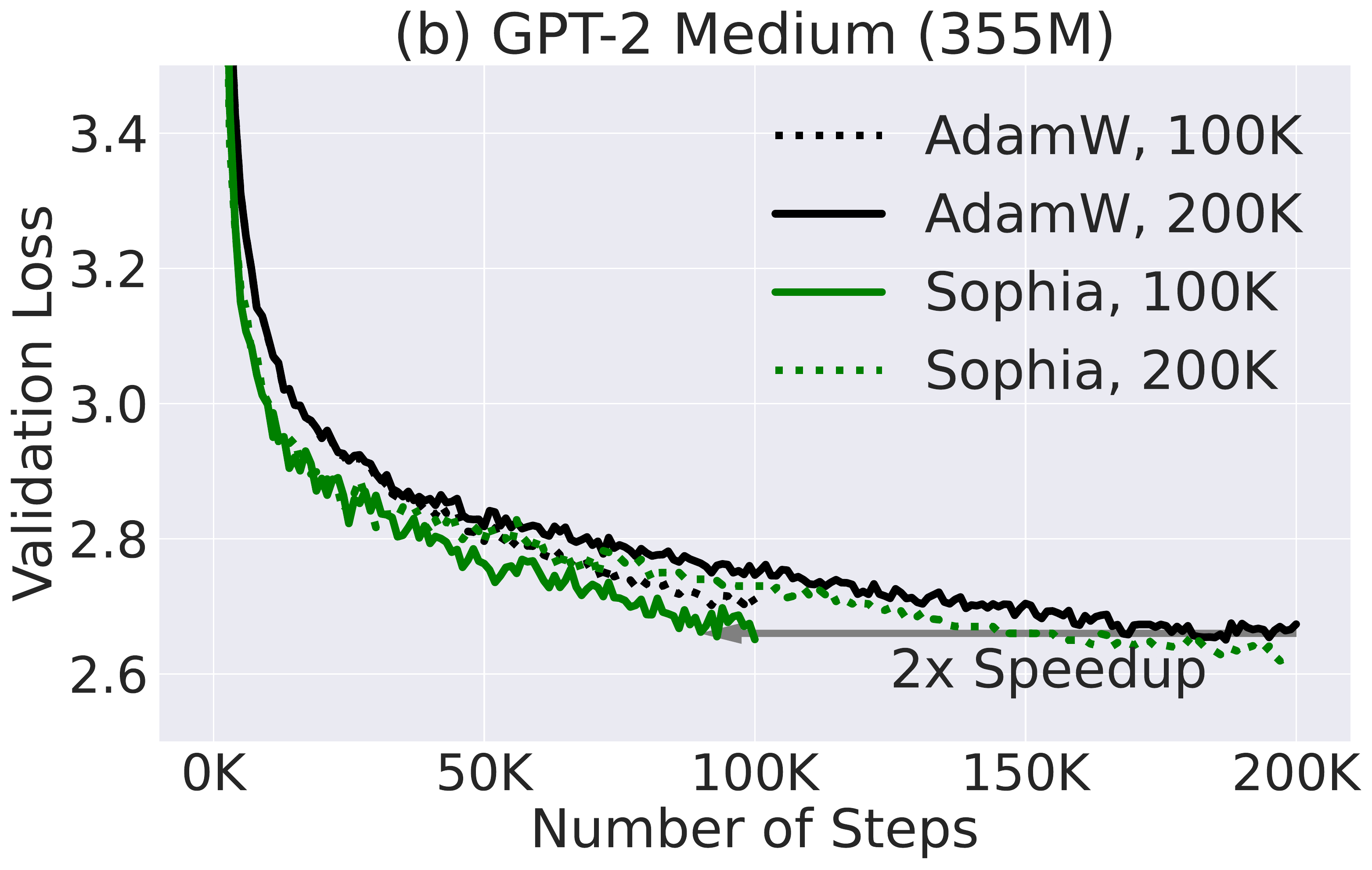}
%\vspace{-10pt}
\caption{Results of training for different steps. \label{fig:add_res} 
}
%\vspace{-20pt}
\end{center}
\end{figure}

\section{Additional Experiment Details}\label{sec:add_detail}
\subsection{Hyperparamter Tuning}\label{sec:hyper_param}
The hyperparameters we consider for baselines are as follows: peak learning rate, $\beta_1$, $\beta_2$ and weight decay. All hyperparameters except the peak learning rate are tuned with grid search on a 30M GPT-2 trained for 50K steps. The peak learning rate is tuned on models of different sizes with grid search separately. We search $\beta_1$ in $[0.8,0.9, 0.95, 0.96, 0.99]$ and $\beta_2$ in $[0.9,0.95,0.98,0.99,0.995]$. Weight decay is chosen from $0.05, 0.1,0.2,0.5$. For Lion, we also include $\beta_1 = 0.95, \beta_2=0.98$ as suggested by~\citet{chen2023symbolic}. On 30M models, we found AdamW is sensitive to the choice of $\beta_1$ but not $\beta_2$. $\beta_1=0.9$ works the best for AdamW while $\beta_1=0.95$ works the best for Lion. We use $\beta_2=0.95$ for AdamW since this is the dominantly used configuration in the LLM pre-training literature, and $\beta_2=0.98$ for Lion as it is recommended by~\citet{chen2023symbolic}. We found that weight decay 0.1 works the best for AdamW, while 0.2 works the best for Lion and Sophia. 

For peak learning rate on 125M and 355M, we perform a grid search. For larger models, we search for the largest possible learning rate with which training does not blow up for each model in the following list: [6e-4, 4e-4, 3e-4, 2e-4, 1.5e-4, 1.2e-4, 1e-4, 8e-5, 6e-5, 4e-5]. For example, a 6.6B GPT NeoX model work with 1.2e-4 peak learning rate, but the loss will blow up if we increase the learning rate to 1.5e-4 as shown in Figure~\ref{fig:66b}. The result of grid search of peak learning rate is provided in Table~\ref{table3}. We provide additional fine-grained tuning of peak learning rate of the 355M GPT-2 and 1.5B GPT Neo-X in Figure~\ref{fig:66b}. Concretely, we tried 3.6e-4, 4.2e-4, 4.5e-4 and 4.8e-4 for the 355M model, 1.8e-4 for the 1.5B GPT Neo-X. Results indicate that the learning rate choices in Table~\ref{table3} is indeed the optimal for all the values we have tried.

We use $\beta_1=0.96$, $\beta_2=0.99$, $\epsilon =$1e-12 and $k=10$ for Sophia. We adopt the following procedure to obtain these hyper parameters. We first fix $\gamma = 0.01$, $k=10$, and tune $\beta_1$ and $\beta_2$ with grid search on a 30M model, and directly use $\beta_1$ and $\beta_2$ from the 30M model on models of larger sizes. Similar to AdamW, we find that Sophia is not sensitive to $\beta_2$. We then fix $\beta_1=0.96$, $\beta_2=0.99$ and tuning $k=10$. As shown in in Figure \ref{fig:ablation} (a), $k=10$ is better than $k=1$ or $k=100$ in terms of the balance between convergence speed and the computation overhead.

After finding out $\beta_1=0.96$, $\beta_2=0.99$, $\epsilon =1e-12$ and $k=10$ with the method above, we tune $\gamma$ and peak learning rate jointly. We first tune $\gamma$ to make the proportion of coordinates where the update is not clipped (i.e., $|m_t / \max\{\gamma \cdot h_t,\epsilon\}| < 1$) in the range of $10\%-50\%$. We search for $\gamma$ in the list of [0.005,0.01,0.02,0.05,0.1,0.2]. As a result we find out $\gamma = 0.01$ works the best for Sophia-H while $\gamma = 0.05$ works the best for Sophia-G. We then fix $\beta_1, \beta_2=0.99, \epsilon, \gamma, k$ for all larger models. 

To tune the peak learning rate, we adopt the same procedure as we use for baseline methods. The result of grid search of peak learning rate is also provided in Table~\ref{table3}. 
\begin{table}
\vspace{-20pt}
	\centering
	%\vspace{-18pt}
	\caption{\small Model Configurations and Peak Learning Rate.}
	\label{table3}
	\begin{small}
 \addtolength{\tabcolsep}{-3pt} 
	\begin{tabular}{l|c|c|c|c|c|c|c|c}
	\toprule
	Acronym & Size & d$\_$model & n$\_$head & depth & AdamW lr & Lion lr & Sophia-H lr & Sophia-G lr\\
	\midrule
-- & 30M & 384 & 6 & 6 & 1.2e-3 & 4e-4 & 1e-3 & 1e-3 \\
Small & 125M & 768 & 12 & 12 & 6e-4 & 1.5e-4 & 6e-4 & 6e-4\\
Medium & 355M & 1024 & 16 & 24 & 3e-4 & 6e-5 & 4e-4 & 4e-4\\
-- & 540M & 1152 & 18 & 30 & 3e-4 & -- & 4e-4 & 4e-4\\
Large & 770M & 1280 & 20 & 36 & 2e-4 & -- & 3e-4 & 3e-4\\
NeoX 1.5B & 1.5B & 1536 & 24 & 48 & 1.5e-4 & -- & -- & 1.2e-4\\
NeoX 6.6B & 6.6B & 4096 & 32 & 32 & 1.2e-4 & -- & -- & 6e-5\\
 \bottomrule
	\end{tabular}
	\end{small}
	\vspace{-5pt}
\end{table}

\subsection{Model and Implementation Details}\label{sec:imp_detail}
We consider three sizes of GPT-2 corresponding to small, medium, and large in~\citet{radford2019language}. We also introduce a 30M model for efficient hyperparameter grid search and a 540M model for scaling law visualization. We provide the model specifications in Table~\ref{table3}. We use the nanoGPT (\url{https://github.com/karpathy/nanoGPT/}) code base. Following nanoGPT, we use GELU activations and disable bias and Dropout~\cite{srivastava2014dropout} during pre-training.  

GPT-2 models are trained on OpenWebText~\citep{Gokaslan2019OpenWeb}. The text is tokenized with the GPT-2 tokenizer~\citep{radford2019language}. We use the train and validation split from nanoGPT. The training set contains 9B tokens, and the validation set contains 4.4M tokens.

We use distributed data parallel with gradient accumulation to enable a batch size of 480. All models are trained with bfloat16. The 125M and 355M models are trained on machines with 10 A5000 GPUs, while the 770M models are trained on an AWS p4d.24xlarge instance with 8 A100 GPUs. 

We consider 1.5B and 6.6B GPT NeoX~\citep{black2022gpt} models trained on the Pile~\citep{gao2022pile}. The models use GPT NeoX tokenizers. We use levanter (\url{https://github.com/stanford-crfm/levanter/tree/main}) for GPT NeoX. We use fully sharded data parallel with gradient accumulation to enable a batch size of 512 for the 1.5B model and 1024 for the 6.6B model. These models are trained on a TPU v3-128 slice.

We observed AdamW and Lion does not perform well on standard transformers which are larger than 355M. The iterates become unstable when the learning rate is close to the choice of~\citet{radford2019language}. We introduce scaling attention by the inverse of layer index to address this issue following~\citet{mistral,wolf-etal-2020-transformers}. Note that Sophia does not need this trick as mentioned in Section~\ref{sec:analysis}.

\subsection{Downtream Evaluation}\label{sec:eval}
We perform few-shot evaluation of the models on 4 subtasks of SuperGLUE. We use 2-shot prompting and greedy decoding. The prompt consists of an instruction followed by two examples. The examples are sampled from the train split while we report the accuracy on validation split averaged over 5 selection of exemplars. Prompts for each subtask are illustrated in Figure~\ref{fig:prompts}.

\begin{figure}[t]
\begin{center}
\includegraphics[width=1.0\textwidth]{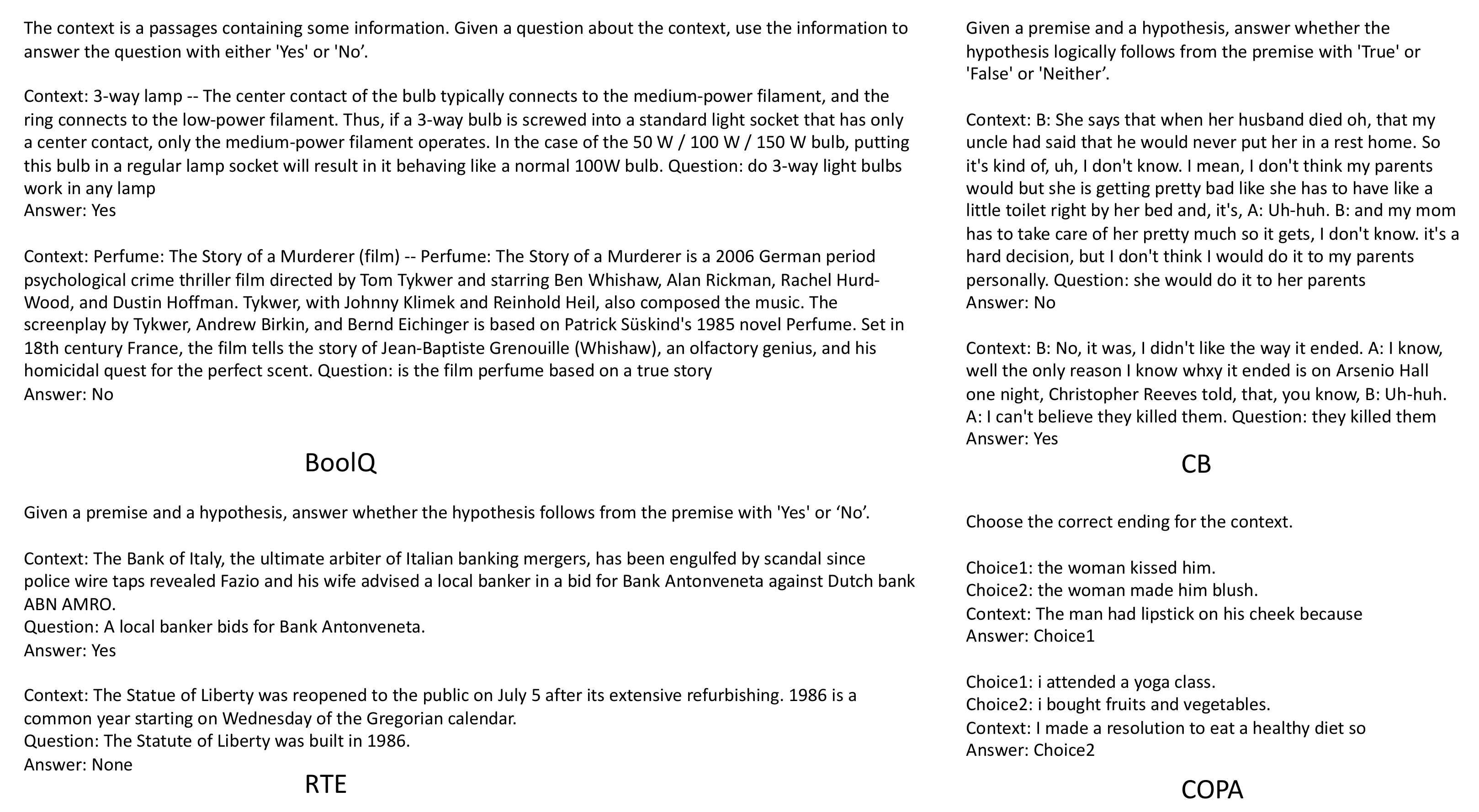}
\caption{\small{Prompts for SuperGLUE downstream evaluation. \label{fig:prompts}}
}
\vspace{-20pt}
\end{center}
\end{figure}

\begin{figure}[h]
\begin{center}
\includegraphics[width=0.252\textwidth]{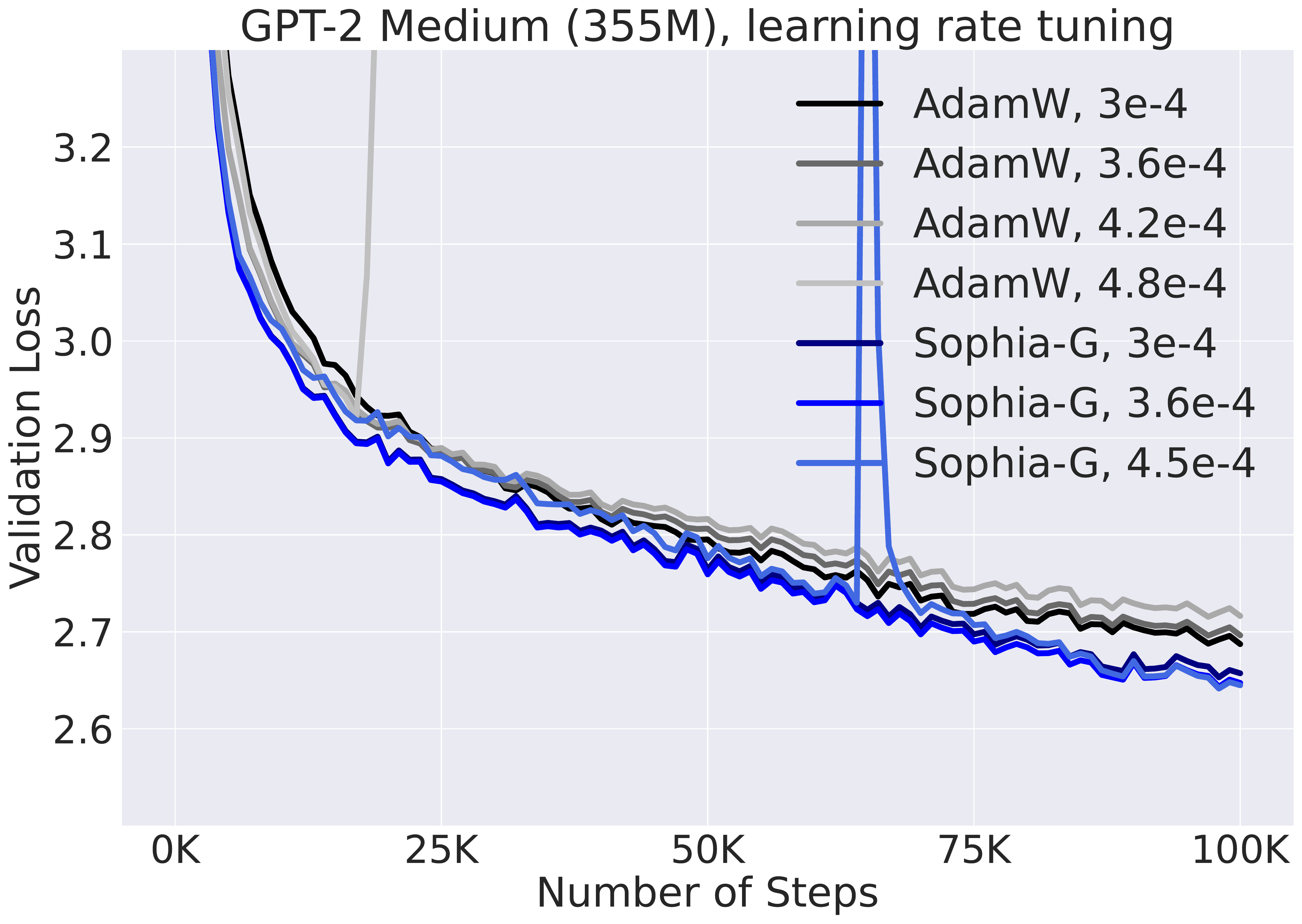}
\includegraphics[width=0.252\textwidth]{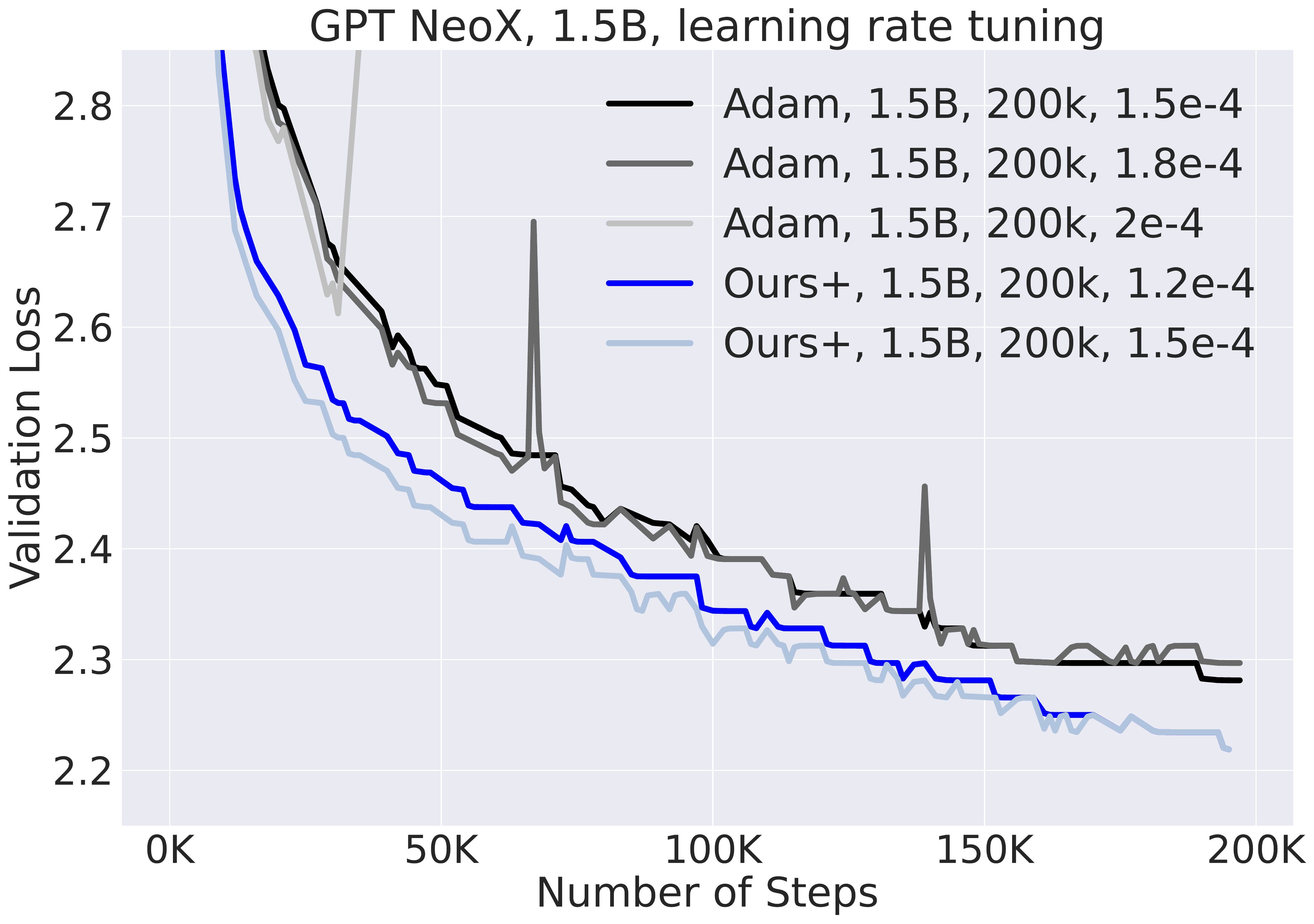}
\includegraphics[width=0.302\textwidth]{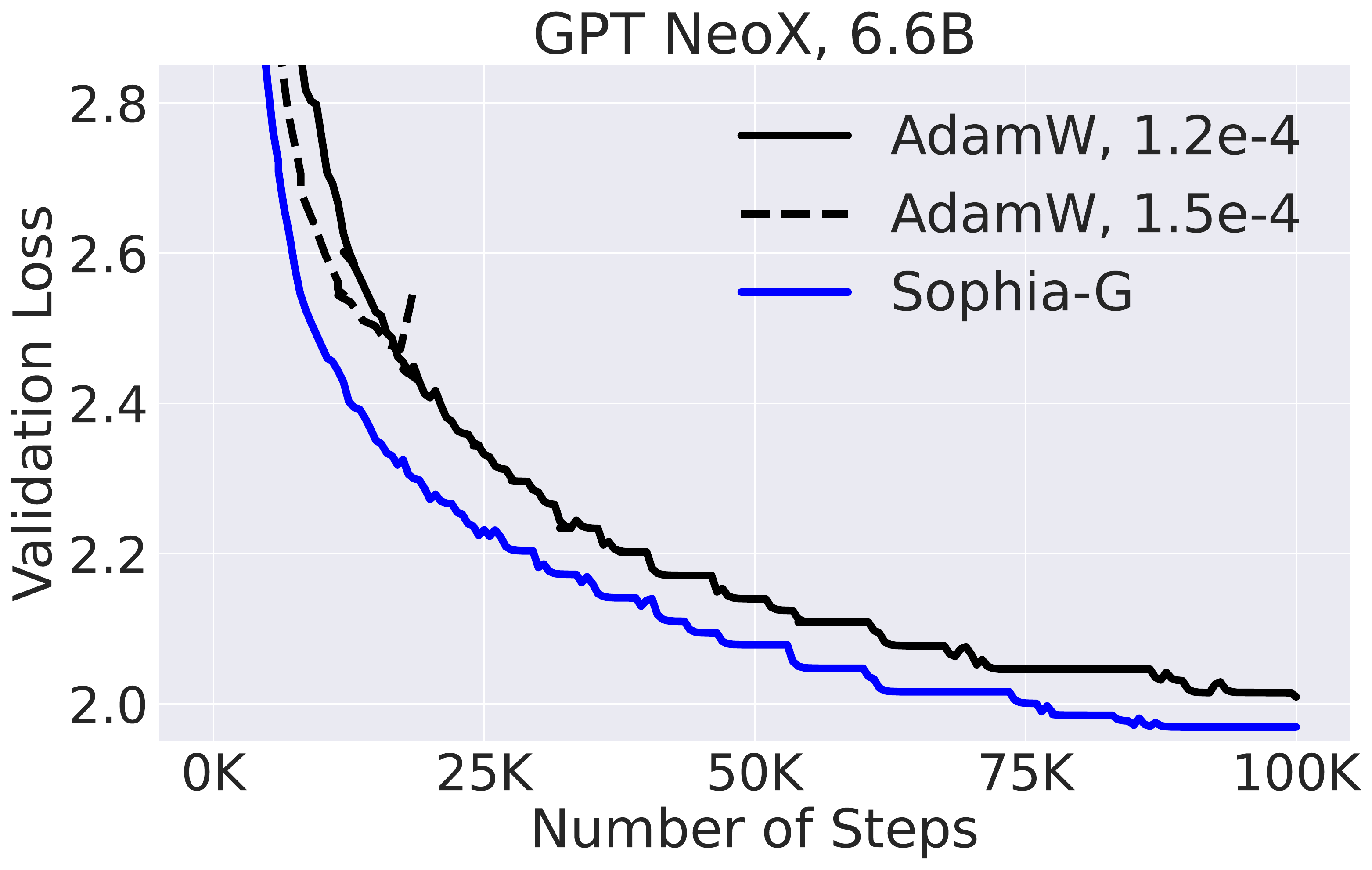}
%\vspace{-10pt}
\caption{Results peak learning rate tuning. \label{fig:66b} 
}
%\vspace{-20pt}
\end{center}
\end{figure}

\newpage

\section{Limitations}
\textbf{Scaling up to larger models and datasets.} Although \ours~demonstrates scalability up to 6.6B-parameter models, and there is no essential constraints from further scaling up, we do not compare with AdamW and Lion on larger models due to limited resources. We believe \ours~is faster than AdamW and Lion on larger models given the improvement in scaling laws and better pre-training stability.

\textbf{Holistic downstream evaluation.} We evaluate pre-trained checkpoints on the pre-training validation losses and only four SuperGLUE subtasks.
The limited evaluation in downstream evaluation is partly due to the limited model size, because language models at this scale do not have enough capabilities such as in-context learning, and mathematical reasoning.

\textbf{Evaluation on other domains.} While this paper focuses on optimizers for large language modeling, a more general optimizer should also be evaluated in other domains such as computer vision,  reinforcement learning, and multimodal tasks. Due to the limitation of computation resources, we leave the application to other domains and models to future works. 

\section{Theoretical Analyses: Details of Section~\ref{sec:theoretical_analysis}\label{sec:proofs}}

\Cref{thm:convex_main} is  a direct combination of the \Cref{lem:descent_lemma} (Descent Lemma), \Cref{lem:small_decrement_imply_small_loss} and \Cref{lem:2nd_phase_convex}. In the analysis, there will be two phases. In the first phase decrease loss to $\frac{\mu\rho^2}{8}$ in $8\frac{ L(\theta(0))-\min L}{\eta\mu\rho^2}$ steps. In the second phase, there will be an exponential decay of error.

\begin{lemma}\label{lem:strictly_convex_loss}
	Under \Cref{assum:existence_local_minimizer_w_hessian_pd}, we have that  $L(\theta)\to \infty$ whenever $\|\theta\|_2\to\infty$.
\end{lemma}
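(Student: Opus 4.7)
The plan is to promote local strong convexity near $\theta^*$ (obtained from $\mu=\lambda_{\min}(\nabla^2 L(\theta^*))>0$ together with continuity of $\nabla^2 L$) to a global lower bound that grows linearly in $\|\theta-\theta^*\|_2$, via a convex-combination argument that only uses the global convexity of $L$. Since $\theta^*$ is fixed, $\|\theta\|_2\to\infty$ is equivalent to $\|\theta-\theta^*\|_2\to\infty$, so linear growth suffices.

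First, because $\nabla^2 L$ is continuous and $\nabla^2 L(\theta^*)\succeq \mu I$ with $\mu>0$, there exists $r>0$ such that $\nabla^2 L(\theta)\succeq \tfrac{\mu}{2} I$ for every $\theta\in \overline{B(\theta^*,r)}$. Using the second-order Taylor expansion with integral remainder and $\nabla L(\theta^*)=0$, for every $\theta$ with $\|\theta-\theta^*\|_2=r$ we obtain
\begin{align*}
L(\theta)-L(\theta^*)\;\ge\;\tfrac{\mu}{4}\,r^2\,.
\end{align*}

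Second, I use convexity to push this lower bound outward. Given any $\theta$ with $t:=\|\theta-\theta^*\|_2>r$, define $\theta_r:=\theta^*+\tfrac{r}{t}(\theta-\theta^*)$, so that $\theta_r=(1-\tfrac{r}{t})\theta^*+\tfrac{r}{t}\theta$ and $\|\theta_r-\theta^*\|_2=r$. Convexity of $L$ yields $L(\theta_r)\le (1-\tfrac{r}{t})L(\theta^*)+\tfrac{r}{t}L(\theta)$, which rearranges to
\begin{align*}
L(\theta)-L(\theta^*)\;\ge\;\tfrac{t}{r}\bigl(L(\theta_r)-L(\theta^*)\bigr)\;\ge\;\tfrac{t}{r}\cdot \tfrac{\mu}{4}\,r^2\;=\;\tfrac{\mu r}{4}\,\|\theta-\theta^*\|_2\,.
\end{align*}
Letting $\|\theta\|_2\to\infty$ forces $\|\theta-\theta^*\|_2\to\infty$, and the bound above then forces $L(\theta)\to\infty$.

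I do not anticipate a real obstacle here; the only place one needs to be careful is that strict convexity alone does not imply coercivity (e.g.\ $e^{-x}$ on $\mathbb{R}$ without a minimizer), so the argument must genuinely exploit both the existence of the minimizer $\theta^*$ and the strict positive definiteness $\mu>0$ at $\theta^*$. Continuity of $\nabla^2 L$ is what converts the pointwise bound $\nabla^2 L(\theta^*)\succeq \mu I$ into the uniform neighborhood bound $\nabla^2 L\succeq \tfrac{\mu}{2}I$ on $\overline{B(\theta^*,r)}$, which is the crucial ingredient that bridges Assumption~\ref{assum:existence_local_minimizer_w_hessian_pd} and the convex-combination step.
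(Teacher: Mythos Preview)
Your proof is correct and follows the same architecture as the paper's: obtain a uniform positive lower bound for $L-L(\theta^*)$ on a sphere around $\theta^*$, then use the convex-combination trick to push this outward into a linear lower bound in $\|\theta-\theta^*\|_2$. The convexity step is essentially identical in both proofs.

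The only difference is how the sphere bound is established. The paper works on the unit sphere and simply invokes strict convexity together with compactness: since $\theta^*$ is the unique minimizer, $L(\theta^*+\bar\theta)>L(\theta^*)$ for every $\bar\theta$ with $\|\bar\theta\|_2=1$, and continuity on the compact sphere yields $\Delta:=\min_{\|\bar\theta\|_2=1}L(\theta^*+\bar\theta)-L(\theta^*)>0$. You instead use continuity of $\nabla^2 L$ to get $\nabla^2 L\succeq \tfrac{\mu}{2}I$ on a small ball, and then Taylor's theorem to get the explicit bound $\tfrac{\mu}{4}r^2$ on the sphere of radius $r$. Your route yields a quantitative constant in terms of $\mu$, while the paper's route is slightly leaner in that it never touches the Hessian or $\mu$ at all (only strict convexity and the existence of $\theta^*$ are used). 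Either way the conclusion follows.
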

\begin{proof}[Proof of \Cref{lem:strictly_convex_loss}]
By convexity of $L$, we have $\forall \theta\in\R^d$ with $\norm{\theta-\theta^*}_2\ge 1$,
\begin{align}
	\frac{1}{\norm{\theta-\theta^*}_2}L(\theta) + \frac{\norm{\theta-\theta^*}_2 - 1}{\norm{\theta-\theta^*}_2}L(\theta^*) \ge L(\theta^* + \frac{\theta - \theta^*}{\norm{\theta - \theta^*}_2}) \ge \min_{\norm{\bar\theta}_2 =1  } L(\theta^* +\bar\theta).
\end{align} 

Since $L$ is strictly convex, $\Delta \triangleq \min_{\norm{\bar\theta}_2 =1  } L(\theta^* +\bar\theta) - L(\theta^*)>0$. Thus we conclude that 
\begin{align}
	L(\theta) \ge \norm{\theta-\theta^*}_2\Delta + L(\theta^*)\ge 	(\norm{\theta}_2 - \norm{\theta^*}_2) \Delta + L(\theta^*).
\end{align}
Therefore when $\norm{\theta}_2\to \infty$, $L(\theta)\to\infty$ as well.
\end{proof}

Note that we don't assume the Hessian of loss is Lipschitz. \Cref{assum:hessian_multiplicative_lipschitz} only assumes the Hessian in a neighborhood of constant radius only differs by a constant in the multiplicative sense. 
\begin{lemma}\label{lem:small_loss_imply_small_distance}
For any $\theta\in\R^d$ satisfying $L(\theta)-\min L\le \frac{\mu R^2}{4}$, it holds that $\norm{\theta-\theta^*}_2\le 2\sqrt{\frac{L(\theta) - \min L}{\mu}}\le R$.
\end{lemma}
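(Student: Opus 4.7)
The plan is to first reduce the statement to a local quadratic-growth bound and then bootstrap from a convexity/rescaling argument to justify the locality hypothesis. The inequality $2\sqrt{(L(\theta)-\min L)/\mu}\le R$ is immediate from the assumption $L(\theta)-\min L\le \mu R^2/4$, so the real content is the bound $\|\theta-\theta^*\|_2\le 2\sqrt{(L(\theta)-\min L)/\mu}$.

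First, I will unpack \Cref{assum:hessian_multiplicative_lipschitz}. Swapping the roles of $\theta,\theta'$ gives $\|\nabla^2 L(\theta)^{-1}\nabla^2 L(\theta')\|_2\le 2$ whenever $\|\theta-\theta'\|_2\le R$. Since $\nabla^2 L(\theta)^{-1}\nabla^2 L(\theta')$ is similar to the symmetric PSD matrix $\nabla^2 L(\theta')^{1/2}\nabla^2 L(\theta)^{-1}\nabla^2 L(\theta')^{1/2}$, whose spectral radius coincides with its operator norm and is bounded by the operator norm of the original product, I obtain $\nabla^2 L(\theta')^{1/2}\nabla^2 L(\theta)^{-1}\nabla^2 L(\theta')^{1/2}\preceq 2I$, equivalently $\nabla^2 L(\theta)\succeq \tfrac{1}{2}\nabla^2 L(\theta')$. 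Specializing to $\theta'=\theta^*$ yields the local strong-convexity bound $\nabla^2 L(\tilde\theta)\succeq \tfrac{\mu}{2}I$ for every $\tilde\theta$ with $\|\tilde\theta-\theta^*\|_2\le R$.

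Next, I want to apply Taylor's theorem with integral remainder along the segment from $\theta^*$ to $\theta$. Using $\nabla L(\theta^*)=0$ and the Hessian lower bound above, I would get $L(\theta)-L(\theta^*)\ge \tfrac{\mu}{4}\|\theta-\theta^*\|_2^2$, which rearranges into the desired inequality. The obstacle, and the main step of the proof, is that this Taylor estimate requires the entire segment to lie in the ball of radius $R$ around $\theta^*$, so I first need to argue $\|\theta-\theta^*\|_2\le R$.

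I handle that by contradiction: suppose $\|\theta-\theta^*\|_2>R$, and set $s=R/\|\theta-\theta^*\|_2\in(0,1)$, so $\theta_s:=\theta^*+s(\theta-\theta^*)$ has $\|\theta_s-\theta^*\|_2=R$. The segment from $\theta^*$ to $\theta_s$ lies entirely within the ball of radius $R$, so the Taylor argument applies at $\theta_s$ and gives $L(\theta_s)-L(\theta^*)\ge \tfrac{\mu}{4}R^2$. By convexity of $L$, $L(\theta_s)-L(\theta^*)\le s(L(\theta)-L(\theta^*))$, which forces $L(\theta)-L(\theta^*)\ge \tfrac{\mu R^2}{4s}>\tfrac{\mu R^2}{4}$, contradicting the hypothesis. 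Hence $\|\theta-\theta^*\|_2\le R$, the Taylor estimate now applies directly on the full segment from $\theta^*$ to $\theta$, and the claim follows. I anticipate the subtlety about passing from the assumed operator-norm bound to a two-sided Loewner ordering on Hessians to be the only nonroutine ingredient; everything else is Taylor expansion plus a one-parameter rescaling.
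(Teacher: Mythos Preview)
Your proposal is correct and follows essentially the same approach as the paper: both arguments rescale along the segment from $\theta^*$ to $\theta$ to land inside the radius-$R$ ball, apply a second-order Taylor lower bound there using the Hessian comparison $\nabla^2 L(\tilde\theta)\succeq \tfrac12\nabla^2 L(\theta^*)\succeq \tfrac{\mu}{2}I$ extracted from \Cref{assum:hessian_multiplicative_lipschitz}, and combine this with convexity along the line to reach a contradiction. The only organizational difference is that you first establish $\|\theta-\theta^*\|_2\le R$ and then apply Taylor directly, whereas the paper runs a single contradiction against the sharper inequality $\|\theta-\theta^*\|_2\le 2\sqrt{(L(\theta)-\min L)/\mu}$; your explicit derivation of the Loewner ordering from the operator-norm bound is also a step the paper leaves implicit.
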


\begin{proof}[Proof of \Cref{lem:small_loss_imply_small_distance}]
	We will prove by contradiction. Suppose there exists such $\theta$ with $L(\theta)\le \frac{\mu R^2}{4}$ but $\norm{\theta-\theta^*}_2> 2\sqrt{\frac{L(\theta)-\min L}{\mu}}$. We consider $\theta'\triangleq \theta^* + \sqrt{\frac{2L(\theta)}{\mu}}\cdot \frac{\theta-\theta^*}{\norm{\theta-\theta^*}_2}$. Since $\theta'$ is between $\theta$ and $\theta^*$ and that $L$ is strictly convex, we know that $L(\theta')<L(\theta)$. However, by Taylor expansion on function $f(t)\triangleq L(\theta^* + t(\theta'-\theta^*))$, we have that 
	\begin{align}
	f(1) = f(0)+f'(0)+ \frac{f''(t) 	}{2}, \quad \text{for some $t\in[0,1]$.}
	\end{align}
Note that $\norm{\theta' - \theta^*}_2\le \norm{\theta - \theta^*}_2 \le R$, by  \Cref{assum:hessian_multiplicative_lipschitz} and \Cref{assum:existence_local_minimizer_w_hessian_pd},
 we have $f''(t)  = (\theta'-\theta^*)^\top \nabla^2 L(t\theta' +(1-t) \theta^*)(\theta'-\theta^*) \ge \frac{1}{2}(\theta'-\theta^*)^\top \nabla^2 L(\theta^*)(\theta'-\theta^*) \ge \frac{\mu}{2} \norm{\theta'-\theta^*}_2^2 =  2(L(\theta)-\min L))$.
  Also note that $f(1) = L(\theta'), f(0)=L(\theta^*)$ and $ f'(0)=0$, we conclude that $L(\theta')-L(\theta^*) \ge L(\theta) - L(\theta^*)$, namely $(\theta')\ge L(\theta)$. Contradiction! 
\end{proof}

\begin{lemma}\label{lem:small_gradient_imply_small_loss}
	For any $\theta\in\R^d$ satisfying that $\norm{\nabla L(\theta)}_2\le \frac{R\mu}{2}$, it holds that $\norm{\theta-\theta^*}_2\le \frac{2\norm{\nabla L(\theta)}}{\mu}\le R$.
\end{lemma}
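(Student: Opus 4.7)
The plan is to establish the two claims in sequence: first use strict convexity and the local curvature bound to certify the containment $\norm{\theta-\theta^*}_2\le R$, and then upgrade this to the sharper bound $\norm{\theta-\theta^*}_2\le 2\norm{\nabla L(\theta)}_2/\mu$ via the fundamental theorem of calculus applied to $\nabla L$ along the segment $[\theta^*,\theta]$.

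For the first claim, I will argue by contradiction: suppose $\norm{\theta-\theta^*}_2 > R$. Let $u = (\theta-\theta^*)/\norm{\theta-\theta^*}_2$ and consider the one-dimensional restriction $g(t) = L(\theta^*+tu)$. Then $g$ is convex (so $g'$ is non-decreasing), $g'(0)=0$ since $\theta^*$ is a minimizer, and for $t\in[0,R]$ the point $\theta^*+tu$ lies within distance $R$ of $\theta^*$, so Assumption~\ref{assum:hessian_multiplicative_lipschitz} (used in the same PSD form as in the proof of Lemma~\ref{lem:small_loss_imply_small_distance}) gives $\nabla^2 L(\theta^*+tu)\succeq \tfrac{1}{2}\nabla^2 L(\theta^*)\succeq \tfrac{\mu}{2} I$, and hence $g''(t)\ge \mu/2$. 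Integrating from $0$ to $R$ yields $g'(R)\ge R\mu/2$, and monotonicity of $g'$ gives $\nabla L(\theta)^\top u = g'(\norm{\theta-\theta^*}_2) \ge R\mu/2$, so $\norm{\nabla L(\theta)}_2\ge R\mu/2$, contradicting the hypothesis.

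For the second claim, once $\norm{\theta-\theta^*}_2 \le R$ is established, the entire segment from $\theta^*$ to $\theta$ stays inside the $R$-neighborhood of $\theta^*$, so I can write
\begin{equation*}
\nabla L(\theta) = \nabla L(\theta) - \nabla L(\theta^*) = H(\theta-\theta^*), \qquad H = \int_0^1 \nabla^2 L(\theta^* + t(\theta-\theta^*))\,dt.
\end{equation*}
By the same PSD consequence of Assumption~\ref{assum:hessian_multiplicative_lipschitz}, every integrand is $\succeq \tfrac{\mu}{2}I$, so $H \succeq \tfrac{\mu}{2}I$ and $\norm{H^{-1}}_2\le 2/\mu$. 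Solving for $\theta-\theta^*$ and taking norms gives $\norm{\theta-\theta^*}_2 \le \tfrac{2}{\mu}\norm{\nabla L(\theta)}_2$, and the hypothesis $\norm{\nabla L(\theta)}_2\le R\mu/2$ then yields $\norm{\theta-\theta^*}_2\le R$ again as the final clause of the statement.

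The only subtle step is the first one: since Assumption~\ref{assum:hessian_multiplicative_lipschitz} only gives curvature control inside balls of radius $R$, we cannot directly invoke it at $\theta$ itself when $\theta$ might be far from $\theta^*$. Using the convex one-dimensional slice $g$ and monotonicity of $g'$ lets us convert a lower bound on $g'(R)$ (obtained purely from local information around $\theta^*$) into a lower bound on $g'(\norm{\theta-\theta^*}_2)$, and thereby on $\norm{\nabla L(\theta)}_2$. The rest is routine.
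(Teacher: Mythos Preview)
Your proof is correct and follows essentially the same approach as the paper: both take the one-dimensional slice $f(t)=\langle u,\nabla L(\theta^*+tu)\rangle$ (your $g'$) along $u=(\theta-\theta^*)/\|\theta-\theta^*\|$, use the curvature lower bound $f'(t)\ge\mu/2$ on $[0,R]$ coming from Assumptions~\ref{assum:existence_local_minimizer_w_hessian_pd} and~\ref{assum:hessian_multiplicative_lipschitz}, and contradict the gradient hypothesis via monotonicity of $f$. The paper handles both inequalities in a single contradiction (assuming $\|\theta-\theta^*\|>2\|\nabla L(\theta)\|/\mu$ directly and integrating up to $2\|\nabla L(\theta)\|/\mu\le R$), whereas you split into two steps and use the integrated-Hessian identity for the second; one small point is that your first step only yields $\|\nabla L(\theta)\|\ge R\mu/2$, so to close the contradiction at the boundary you should invoke \emph{strict} convexity to make $g'$ strictly increasing, as the paper does.
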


\begin{proof}[Proof of \Cref{lem:small_gradient_imply_small_loss}]
	We will prove by contradiction. We consider function $f(t)\triangleq \inner{\frac{\theta-\theta^*}{\norm{\theta-\theta^*}_2}}{\nabla L(\theta^*+ t\cdot \frac{\theta-\theta^*}{\norm{\theta-\theta^*}_2})}$. Because of the strict convexity of $L$, $f$ is a strict monotone increasing function. If $\norm{\theta-\theta^*}> \frac{2\norm{\nabla L(\theta)}}{\mu}$ but $\norm{\nabla L(\theta)}_2\le \frac{R\mu}{2}$, then we have $f(R)<f(\norm{\theta-\theta^*}_2)\le \norm{\nabla L(\theta)}_2 $. On the other hand, by \Cref{assum:hessian_multiplicative_lipschitz} and \Cref{assum:existence_local_minimizer_w_hessian_pd}, $f'(t)\ge \frac{\mu}{2}$ for $t\in[0,R]$. Thus $f(R)\ge f(0) + \int_{t=0}^{ \frac{2\norm{\nabla L(\theta)}}{\mu}} f'(t)\diff t =  \norm{\nabla L(\theta)} $. Contradiction! 
\end{proof}

\begin{lemma}\label{lem:soln_existence_mirror_ode}
	For any $\theta\in\R^d$, the following differential equation has at least one solution on interval $[0,1]$:
	\begin{align}\label{eq:mirror_ode}
		\frac{\diff \theta(t)}{\diff t} = - (\nabla^2 L(\theta(t)))^{-1} \nabla L(\theta), \quad \theta(0)=\theta,
	\end{align}
	and the solution satisfies that $\nabla L(\theta(t)) = (1-t)\nabla L(\theta)$ for all $t\in[0,1]$ and $\theta(0)=\theta^*$.
\end{lemma}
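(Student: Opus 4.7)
My plan is to bypass standard ODE existence theorems and construct the solution explicitly via the gradient inverse. The key observation is that the desired curve is characterized by the identity $\nabla L(\theta(t)) = (1-t)\nabla L(\theta)$: if such a curve exists and is $C^1$, then differentiating this identity in $t$ immediately produces the stated ODE. So I would (i) show $\nabla L:\R^d\to\R^d$ is a bijection, (ii) show $\nabla^2 L$ is positive definite everywhere (so $\nabla L$ is a $C^1$-diffeomorphism by the inverse function theorem), and (iii) set $\theta(t) := (\nabla L)^{-1}\big((1-t)\nabla L(\theta)\big)$ and verify the ODE by differentiation.

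\textbf{Key steps.} For (i), combine the coercivity from \Cref{lem:strictly_convex_loss} with the strict convexity in \Cref{assum:existence_local_minimizer_w_hessian_pd}: for every $v\in\R^d$, the function $\theta\mapsto L(\theta) - \langle v,\theta\rangle$ is strictly convex and coercive, hence attains a unique minimizer, and first-order optimality identifies that minimizer with the unique solution of $\nabla L(\cdot)=v$. For (ii), \Cref{assum:hessian_multiplicative_lipschitz} implicitly requires $\nabla^2 L(\theta')$ to be invertible whenever $\|\theta-\theta'\|_2\le R$, and $\|\nabla^2 L(\theta')^{-1}\nabla^2 L(\theta)\|_2\le 2$ (for PSD matrices) gives $\nabla^2 L(\theta)\succeq \tfrac12 \nabla^2 L(\theta')$. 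Chaining this bound along a polygonal path of step size $\le R$ from $\theta^*$, where $\nabla^2 L(\theta^*)\succeq \mu I$, to an arbitrary $\theta\in\R^d$ yields $\nabla^2 L(\theta)\succ 0$ everywhere. For (iii), set $\theta(t) := (\nabla L)^{-1}\big((1-t)\nabla L(\theta)\big)$; applying the implicit function theorem to $\Phi(\theta,t) := \nabla L(\theta) - (1-t)\nabla L(\theta) = 0$, with $\partial_\theta \Phi = \nabla^2 L(\theta)$ invertible by (ii), shows $\theta(\cdot)\in C^1([0,1];\R^d)$. Differentiating the defining identity gives $\nabla^2 L(\theta(t))\,\theta'(t) = -\nabla L(\theta)$, i.e.\ equation~\eqref{eq:mirror_ode}. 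Finally, $\theta(0)=\theta$ is immediate, and $\nabla L(\theta(1))=0$ combined with strict convexity forces $\theta(1)=\theta^*$ (so the lemma's conclusion ``$\theta(0)=\theta^*$'' appears to be a typo for $\theta(1)=\theta^*$).

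\textbf{Main obstacle.} The most delicate step is (ii), the global positive-definiteness of $\nabla^2 L$, since this is not an explicit hypothesis but only an implicit consequence of how \Cref{assum:hessian_multiplicative_lipschitz} is phrased; the chaining argument above is the natural way to extract it, and once it is in hand the remainder of the proof is essentially the inverse/implicit function theorem. A secondary subtlety is the boundary behavior at $t=1$, where one should verify that $(\nabla L)^{-1}$ is still $C^1$ in a neighborhood of $0$ so that $\theta(t)$ remains $C^1$ on the closed interval $[0,1]$; this is immediate from $\nabla^2 L(\theta^*)\succeq \mu I\succ 0$ together with continuity.
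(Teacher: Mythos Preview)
Your construction via the gradient inverse is a genuinely different route from the paper's. The paper instead invokes local ODE existence and then rules out finite-time blow-up: along any solution one computes $\tfrac{\diff}{\diff t}\nabla L(\theta(t))=-\nabla L(\theta)$, hence $\tfrac{\diff}{\diff t}L(\theta(t))=-(1-t)\,\nabla L(\theta)^\top(\nabla^2 L(\theta(t)))^{-1}\nabla L(\theta)\le 0$, so $L(\theta(t))\le L(\theta(0))$, and coercivity (\Cref{lem:strictly_convex_loss}) bounds $\|\theta(t)\|$, forcing the maximal existence time to be at least $1$. Your explicit formula $\theta(t)=(\nabla L)^{-1}\big((1-t)\nabla L(\theta)\big)$ is more constructive and avoids the ODE machinery altogether; once the range issue below is handled, the two approaches are equally valid.

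However, step (i) as stated has a real gap. The claim that $L(\cdot)-\langle v,\cdot\rangle$ is coercive for \emph{every} $v\in\R^d$ does not follow from the hypotheses: \Cref{lem:strictly_convex_loss} only yields linear growth of $L$, and a sufficiently large linear tilt destroys coercivity. Concretely, $L(x)=\sqrt{1+x^2}$ on $\R$ is strictly convex, has $L''>0$ everywhere, and satisfies \Cref{assum:hessian_multiplicative_lipschitz} for small $R$, yet $L'(\R)=(-1,1)$; in particular $L(x)-2x\to-\infty$ and $\nabla L$ is not surjective. So the global bijection claim is false under the given assumptions.

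The fix is easy and local to your argument: you only need $(1-t)\nabla L(\theta)\in\mathrm{range}(\nabla L)$ for $t\in[0,1]$, not full surjectivity. For $t=0$ this is trivial. For $t\in(0,1]$, write
\[
L(\theta')-(1-t)\langle\nabla L(\theta),\theta'\rangle \;=\; t\,L(\theta') + (1-t)\big[L(\theta')-\langle\nabla L(\theta),\theta'\rangle\big],
\]
and observe that the bracketed function is strictly convex with critical point $\theta'=\theta$, hence bounded below by a constant. The right-hand side is therefore $\ge t\,L(\theta')+\text{const}$, which is coercive because $t>0$ and $L$ is coercive; a minimizer exists and satisfies $\nabla L(\cdot)=(1-t)\nabla L(\theta)$. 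With this patch, steps (ii) and (iii) go through exactly as you describe, and your reading that ``$\theta(0)=\theta^*$'' is a typo for ``$\theta(1)=\theta^*$'' is correct.
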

\begin{proof}[Proof of \Cref{lem:soln_existence_mirror_ode}]
	Since $\nabla^2 L$ is continuous and positive definite by \Cref{assum:existence_local_minimizer_w_hessian_pd} , $(\nabla^2 L)^{-1}$ is continuous and thus the above ODE~\eqref{eq:mirror_ode} has a solution over interval $[0,T)$ for some positive $T$ and we let $T_{\max}$ be the largest positive number such that the solution exists (or $T_{\max}=\infty$). Now we claim $T_{\max}\ge 1$, otherwise $\norm{\theta(t)-\theta^*}_2$ must diverge to infinity when $t\to T_{\max}$. However, for any $t\le 1$, we have 
	\begin{align}
	\frac{\diff \nabla L(\theta(t))}{\diff t} = -  \nabla L(\theta),
	\end{align}
	which implies that $\nabla L(\theta(t)) = (1-t)\nabla L(\theta)$ for all $t\in[0,1]$. Therefore,
	\begin{align}
	\frac{\diff L(\theta(t))}{\diff t} = - 	(\nabla L(\theta(t)))^\top (\nabla^2 L(\theta(t)))^{-1} \nabla L(\theta) =(1-t)	(\nabla L(\theta))^\top (\nabla^2 L(\theta(t)))^{-1} \nabla L(\theta)\le 0.
	\end{align}
	Thus $L(\theta(t))\le L(\theta(0))$. By \Cref{lem:strictly_convex_loss}, we know that $\|\theta(t)\|$ remains bounded for all $t\in[0,T_{\max}]$, thus $T_{\max}\ge 1$. Note that  $\theta(1)$ has zero gradient, $\theta(1)$ must be $\theta^*$. This completes the proof. 
\end{proof}

\begin{lemma}\label{lem:small_loss_or_gradient_imply_loss_approx}
For any $\theta\in\R^d$ satisfying (1) $L(\theta)-\min L\le \frac{\mu R^2}{16}$ or (2) $\norm{\nabla L (\theta)}_2 \le \frac{R\mu}{4}$, it holds that \begin{align}
L(\theta) - \min L	\le \nabla L(\theta)^\top (\nabla^2 L(\theta))^{-1}\nabla L(\theta) \le  4(L(\theta) - \min L).
 \end{align}
\end{lemma}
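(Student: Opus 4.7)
The plan is to use the auxiliary ODE constructed in \Cref{lem:soln_existence_mirror_ode}, which provides a path $\theta(t)$ with $\theta(0)=\theta$, $\theta(1)=\theta^*$, and the key identity $\nabla L(\theta(t)) = (1-t)\nabla L(\theta)$. Along this path we have the differential identity
\begin{align}
\frac{\diff L(\theta(t))}{\diff t} = \inner{\nabla L(\theta(t))}{-(\nabla^2 L(\theta(t)))^{-1}\nabla L(\theta)} = -(1-t)\,\nabla L(\theta)^\top (\nabla^2 L(\theta(t)))^{-1}\nabla L(\theta),
\end{align}
so integrating from $0$ to $1$ yields
\begin{align}
L(\theta) - \min L \;=\; \int_0^1 (1-t)\,\nabla L(\theta)^\top (\nabla^2 L(\theta(t)))^{-1}\nabla L(\theta)\,\diff t.
\end{align}
The strategy is then to sandwich the integrand against $\nabla L(\theta)^\top (\nabla^2 L(\theta))^{-1}\nabla L(\theta)$ using \Cref{assum:hessian_multiplicative_lipschitz}, and then note that $\int_0^1(1-t)\,\diff t = 1/2$, which converts the factor-of-$2$ multiplicative control on Hessians into the factor-of-$4$ gap claimed in the lemma.

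The next step is to verify that the entire path $\{\theta(t)\}_{t\in[0,1]}$ lies within distance $R$ of $\theta$, so that \Cref{assum:hessian_multiplicative_lipschitz} applies pointwise. I would handle the two hypotheses separately. Under hypothesis~(1), $L(\theta)-\min L\le \mu R^2/16$, so \Cref{lem:small_loss_imply_small_distance} gives $\norm{\theta-\theta^*}_2\le R/2$; since $L(\theta(t))$ is monotonically decreasing along the ODE (already shown inside the proof of \Cref{lem:soln_existence_mirror_ode}), the same lemma applied at every $t$ gives $\norm{\theta(t)-\theta^*}_2\le R/2$, whence $\norm{\theta(t)-\theta}_2\le R$ by the triangle inequality. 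Under hypothesis~(2), $\norm{\nabla L(\theta)}_2\le R\mu/4$, so $\norm{\nabla L(\theta(t))}_2=(1-t)\norm{\nabla L(\theta)}_2\le R\mu/4$; applying \Cref{lem:small_gradient_imply_small_loss} both at $t=0$ and at general $t$ gives the same conclusion.

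With the path confined, \Cref{assum:hessian_multiplicative_lipschitz} yields $\norm{\nabla^2 L(\theta)^{-1}\nabla^2 L(\theta(t))}_2\le 2$ and, by swapping roles, $\norm{\nabla^2 L(\theta(t))^{-1}\nabla^2 L(\theta)}_2\le 2$. Conjugating by $\nabla^2 L(\theta)^{1/2}$ turns these spectral-norm bounds on the non-symmetric products into PSD sandwiches
\begin{align}
\tfrac{1}{2}\,\nabla^2 L(\theta)\;\preceq\;\nabla^2 L(\theta(t))\;\preceq\; 2\,\nabla^2 L(\theta),
\end{align}
and inverting preserves the ordering, giving $\tfrac12(\nabla^2 L(\theta))^{-1}\preceq (\nabla^2 L(\theta(t)))^{-1}\preceq 2(\nabla^2 L(\theta))^{-1}$. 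Plugging these into the integral representation above and using $\int_0^1(1-t)\diff t=1/2$ gives
\begin{align}
\tfrac14\,\nabla L(\theta)^\top (\nabla^2 L(\theta))^{-1}\nabla L(\theta) \;\le\; L(\theta)-\min L \;\le\; \nabla L(\theta)^\top (\nabla^2 L(\theta))^{-1}\nabla L(\theta),
\end{align}
which is exactly the claim after rearrangement.

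The main obstacle is the step connecting the asymmetric operator-norm bound in \Cref{assum:hessian_multiplicative_lipschitz} to a symmetric PSD sandwich between $\nabla^2 L(\theta)$ and $\nabla^2 L(\theta(t))$; a careless reading of the assumption only gives control of a non-symmetric matrix product, so one has to invoke the similarity between $\nabla^2 L(\theta)^{-1}\nabla^2 L(\theta(t))$ and $\nabla^2 L(\theta)^{-1/2}\nabla^2 L(\theta(t))\nabla^2 L(\theta)^{-1/2}$ (and use symmetry of the assumption in $\theta,\theta'$ for the lower bound). Everything else is routine calculus along the ODE plus the two preceding lemmas for trapping the trajectory.
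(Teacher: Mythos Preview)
Your proposal is correct and follows essentially the same route as the paper: invoke the ODE from \Cref{lem:soln_existence_mirror_ode}, use the identity $\nabla L(\theta(t))=(1-t)\nabla L(\theta)$ to write $L(\theta)-\min L$ as an integral of $(1-t)\,\nabla L(\theta)^\top(\nabla^2 L(\theta(t)))^{-1}\nabla L(\theta)$, confine the trajectory within distance $R$ of $\theta$ via \Cref{lem:small_loss_imply_small_distance} or \Cref{lem:small_gradient_imply_small_loss} (using monotonicity of the loss, respectively of the gradient norm, along the path), and then apply \Cref{assum:hessian_multiplicative_lipschitz} plus $\int_0^1(1-t)\,\diff t=1/2$. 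You are in fact more explicit than the paper about the one genuinely delicate point---deducing the PSD sandwich $\tfrac12\nabla^2 L(\theta)\preceq\nabla^2 L(\theta(t))\preceq 2\nabla^2 L(\theta)$ from the asymmetric operator-norm bound in the assumption via similarity with the symmetrized product---which the paper simply asserts.
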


\begin{proof}[Proof of \Cref{lem:small_loss_or_gradient_imply_loss_approx}]

Let $\{\theta(t)\}_{t=0}^1$ be the solution of \Cref{eq:mirror_ode}.
We know that $\nabla L(\theta(t)) = (1-t)\nabla L(\theta)$ for all $t\in[0,1]$  and that $\theta(1) = \theta^*$  by \Cref{lem:soln_existence_mirror_ode}.
 For case (1), by \Cref{lem:small_loss_imply_small_distance}, we know that for any $t\in[0,1]$, $\norm{\theta(t)-\theta^*}_2\le R/2$. 
 For case (2), by \Cref{lem:small_gradient_imply_small_loss}, we know that for any $t\in[0,1]$, $\norm{\theta(t)-\theta^*}_2\le R/2$. Thus in both two cases, $\norm{\theta(t)-\theta}_2=\norm{\theta(t)-\theta(0)}_2=\le \norm{\theta(t)-\theta^*} + \norm{\theta(0)-\theta^*}\le R$. By~\Cref{assum:hessian_multiplicative_lipschitz}, it holds that 
\begin{align}\label{eq:small_loss_hessian_sanwidch}
	2(\nabla^2 L(\theta))^{-1} \succeq (\nabla^2 L(\theta(t)))^{-1}\succeq \frac{1}{2}(\nabla^2 L(\theta))^{-1}.
\end{align}
for all $t\in[0,1]$. Therefore, we have that 
\begin{align}\label{eq:small_loss_loss_integral}
L(\theta) - \min L = L(\theta(0))-L(\theta(1)) 
=&\int_{t=0}^1  	(\nabla L(\theta(t)))^\top (\nabla^2 L(\theta(t)))^{-1} \nabla L(\theta)\notag\\
=&\int_{t=0}^1 	(1-t)	(\nabla L(\theta))^\top (\nabla^2 L(\theta(t)))^{-1} \nabla L(\theta).
\end{align}
The proof is completed by plugging \Cref{eq:small_loss_hessian_sanwidch} into \Cref{eq:small_loss_loss_integral} and noting that $\int_{t=0}^1(1-t) = 1/2$.
\end{proof}

\begin{lemma}\label{lem:small_loss_or_gradient_imply_PL}
For any $\theta\in\R^d$ satisfying (1) $L(\theta)-\min L\le \frac{\mu R^2}{4}$ or (2) $\norm{\nabla L (\theta)}_2 \le \frac{R\mu}{2}$, it holds that \begin{align}
L(\theta) - \min L	\le \mu^{-1}\norm{\nabla L(\theta)}_2^2  \end{align}
\end{lemma}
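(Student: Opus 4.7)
The plan is to follow the same route as \Cref{lem:small_loss_or_gradient_imply_loss_approx}, using the mirror-descent ODE from \Cref{lem:soln_existence_mirror_ode}. Let $\{\theta(t)\}_{t\in[0,1]}$ solve $\frac{\diff \theta(t)}{\diff t} = -(\nabla^2 L(\theta(t)))^{-1}\nabla L(\theta)$ with $\theta(0)=\theta$, so that $\theta(1)=\theta^*$ and $\nabla L(\theta(t)) = (1-t)\nabla L(\theta)$. Differentiating $L(\theta(t))$ and integrating from $0$ to $1$ gives the identity
\begin{align*}
L(\theta)-\min L \;=\; \int_{0}^{1}(1-t)\,\nabla L(\theta)^{\top}(\nabla^2 L(\theta(t)))^{-1}\nabla L(\theta)\,\diff t,
\end{align*}
exactly as in \Cref{eq:small_loss_loss_integral}. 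The goal is then to upper-bound $(\nabla^2 L(\theta(t)))^{-1}$ by $\frac{2}{\mu}I$ uniformly on $[0,1]$, after which $\int_0^1(1-t)\,\diff t=\tfrac12$ produces the claimed $\mu^{-1}\|\nabla L(\theta)\|_2^2$.

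First I would show that the entire trajectory lies in the ball of radius $R$ around $\theta^*$. In case (1), the loss is nonincreasing along the ODE (the computation is done already in the proof of \Cref{lem:soln_existence_mirror_ode}), so $L(\theta(t))-\min L \le L(\theta)-\min L \le \mu R^2/4$ for all $t$, and \Cref{lem:small_loss_imply_small_distance} applied at $\theta(t)$ yields $\|\theta(t)-\theta^*\|_2\le R$. In case (2), $\|\nabla L(\theta(t))\|_2 = (1-t)\|\nabla L(\theta)\|_2 \le R\mu/2$, so \Cref{lem:small_gradient_imply_small_loss} applied at $\theta(t)$ again gives $\|\theta(t)-\theta^*\|_2\le R$.

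Next I would combine this with \Cref{assum:hessian_multiplicative_lipschitz}, but applied with the pair $(\theta(t),\theta^*)$ (in the opposite order from \Cref{lem:small_loss_or_gradient_imply_loss_approx}): $\|\nabla^2 L(\theta(t))^{-1}\nabla^2 L(\theta^*)\|_2\le 2$, which for symmetric positive-definite matrices is equivalent to $\nabla^2 L(\theta(t)) \succeq \tfrac12 \nabla^2 L(\theta^*) \succeq \tfrac{\mu}{2}I$ by \Cref{assum:existence_local_minimizer_w_hessian_pd}. Inverting yields $(\nabla^2 L(\theta(t)))^{-1}\preceq \tfrac{2}{\mu}I$, and plugging this bound into the integral identity above gives $L(\theta)-\min L \le \tfrac{2}{\mu}\|\nabla L(\theta)\|_2^2\int_0^1(1-t)\,\diff t = \mu^{-1}\|\nabla L(\theta)\|_2^2$.

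The only subtlety (and the step I would double-check) is the direction of the multiplicative Hessian comparison: \Cref{assum:hessian_multiplicative_lipschitz} as stated is symmetric in its arguments, so swapping them to get a lower bound on $\nabla^2 L(\theta(t))$ (rather than the upper bound used in \Cref{lem:small_loss_or_gradient_imply_loss_approx}) is harmless. Beyond this, the proof is essentially a one-line variant of the previous lemma: we no longer need a sandwich bound relating $\nabla^2 L(\theta(t))$ to $\nabla^2 L(\theta)$, only a single-sided bound against $\nabla^2 L(\theta^*)$, which is why the weaker hypotheses $L(\theta)-\min L\le \mu R^2/4$ and $\|\nabla L(\theta)\|_2\le R\mu/2$ suffice here (versus the factor-of-4 tighter hypotheses in \Cref{lem:small_loss_or_gradient_imply_loss_approx}).
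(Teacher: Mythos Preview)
Your proposal is correct and matches what the paper intends: the paper's proof is omitted with the remark that it is ``almost the same'' as that of \Cref{lem:small_loss_or_gradient_imply_loss_approx}, and your write-up is precisely the natural variant---using the same ODE integral identity but comparing $\nabla^2 L(\theta(t))$ to $\nabla^2 L(\theta^*)$ (so only $\|\theta(t)-\theta^*\|_2\le R$ is needed), which is exactly why the relaxed hypotheses $\mu R^2/4$ and $R\mu/2$ suffice. One tiny wording quibble: the relation $\|\nabla^2 L(\theta(t))^{-1}\nabla^2 L(\theta^*)\|_2\le 2\Rightarrow \nabla^2 L(\theta(t))\succeq \tfrac12\nabla^2 L(\theta^*)$ is the implication you need (via spectral radius $\le$ operator norm), not a full equivalence, but this does not affect the argument.
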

\begin{proof}[Proof of \Cref{lem:small_loss_or_gradient_imply_PL}]
	The proof of \Cref{lem:small_loss_or_gradient_imply_PL} is almost the same as that of \Cref{lem:small_loss_or_gradient_imply_loss_approx} and thus omitted.
\end{proof}

\begin{lemma}\label{lem:small_loss_imply_no_clipping}
	For any $\theta\in\R^d$ satisfying $L(\theta)-\min L\le \frac{\mu R^2}{16}$, it holds that \begin{align}
 \norm{(\nabla^2 L(\theta))^{-1}\nabla L(\theta)}_2 \le \sqrt{\frac{8(L(\theta) - \min L)}{\mu}}.
 \end{align}

\end{lemma}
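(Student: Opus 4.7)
The plan is to bound the Newton direction $\norm{(\nabla^2 L(\theta))^{-1}\nabla L(\theta)}_2$ by combining two ingredients already at our disposal: the Newton decrement bound supplied by \Cref{lem:small_loss_or_gradient_imply_loss_approx}, and a uniform operator-norm control of $(\nabla^2 L(\theta))^{-1}$ that follows from the multiplicative Hessian continuity assumption. The starting observation is the identity
\begin{align*}
\norm{(\nabla^2 L(\theta))^{-1}\nabla L(\theta)}_2^2 = \nabla L(\theta)^\top (\nabla^2 L(\theta))^{-2}\nabla L(\theta),
\end{align*}
which makes clear we only need to peel off \emph{one} factor of $(\nabla^2 L(\theta))^{-1}$ as a scalar in order to reduce to the already-controlled quadratic form $\nabla L(\theta)^\top (\nabla^2 L(\theta))^{-1}\nabla L(\theta)$.

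First I would use the hypothesis $L(\theta) - \min L \le \mu R^2/16 \le \mu R^2/4$ to apply \Cref{lem:small_loss_imply_small_distance}, obtaining $\norm{\theta - \theta^*}_2 \le R$. Invoking \Cref{assum:hessian_multiplicative_lipschitz} on the pair $(\theta,\theta^*)$ then yields $\nabla^2 L(\theta^*) \preceq 2\nabla^2 L(\theta)$, and combining this with \Cref{assum:existence_local_minimizer_w_hessian_pd} (which gives $\nabla^2 L(\theta^*) \succeq \mu I$) produces the desired lower bound $\nabla^2 L(\theta) \succeq (\mu/2) I$. In particular, $(\nabla^2 L(\theta))^{-1} \preceq (2/\mu) I$ in the PSD order, and since $(\nabla^2 L(\theta))^{-1}$ is symmetric PSD, multiplying both sides by it (via simultaneous diagonalization) gives $(\nabla^2 L(\theta))^{-2} \preceq (2/\mu)(\nabla^2 L(\theta))^{-1}$.

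Putting the pieces together, I would chain these inequalities as
\begin{align*}
\nabla L(\theta)^\top (\nabla^2 L(\theta))^{-2}\nabla L(\theta) \le \frac{2}{\mu}\, \nabla L(\theta)^\top (\nabla^2 L(\theta))^{-1}\nabla L(\theta) \le \frac{2}{\mu}\cdot 4(L(\theta) - \min L),
\end{align*}
where the last step is \Cref{lem:small_loss_or_gradient_imply_loss_approx} (which applies precisely under the hypothesis $L(\theta)-\min L \le \mu R^2/16$). Taking square roots then gives the claim. There is no real obstacle: the lemma is essentially a two-line assembly of earlier results, and the only point worth verifying is that the single quantitative hypothesis $L(\theta) - \min L \le \mu R^2/16$ is simultaneously strong enough to trigger both \Cref{lem:small_loss_imply_small_distance} (which only needs the weaker $\mu R^2/4$) and \Cref{lem:small_loss_or_gradient_imply_loss_approx} (which needs exactly $\mu R^2/16$), which is indeed the case.
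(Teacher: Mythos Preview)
Your proposal is correct and follows essentially the same route as the paper: apply \Cref{lem:small_loss_imply_small_distance} to get $\|\theta-\theta^*\|_2\le R$, use \Cref{assum:hessian_multiplicative_lipschitz} and \Cref{assum:existence_local_minimizer_w_hessian_pd} to obtain $\nabla^2 L(\theta)\succeq \tfrac{\mu}{2}I$, then peel off one factor of $(\nabla^2 L(\theta))^{-1}$ from the Newton-decrement bound of \Cref{lem:small_loss_or_gradient_imply_loss_approx}. Your write-up is in fact cleaner than the paper's, which contains a redundant middle line ($(\nabla^2 L(\theta))^{-1}\nabla^2 L(\theta)(\nabla^2 L(\theta))^{-1}=(\nabla^2 L(\theta))^{-1}$) before arriving at the same conclusion.
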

\begin{proof}[Proof of \Cref{lem:small_loss_imply_no_clipping}]
	By \Cref{lem:small_loss_imply_small_distance}, we have that $\norm{\theta-\theta^*}_2\le R$. By \Cref{assum:hessian_multiplicative_lipschitz}, we have $\nabla^2 L(\theta)\succeq \frac{1}{2}\nabla^2 L(\theta^*) \succeq \frac{\mu}{2} I_d$. 
	By \Cref{lem:small_loss_or_gradient_imply_loss_approx}, we have that 
	\begin{align}
	4(L(\theta) - \min L) \ge &	\nabla L(\theta)^\top (\nabla^2 L(\theta))^{-1}\nabla L(\theta)\\
	\ge &\nabla L(\theta)^\top (\nabla^2 L(\theta))^{-1} \nabla^2 L(\theta)(\nabla^2 L(\theta))^{-1}\nabla L(\theta)\\
	\ge & \frac{\mu}{2}\norm{\nabla L(\theta)^\top (\nabla^2 L(\theta))^{-1}}_2^2.
		\end{align}
This completes the proof.
\end{proof}

\begin{lemma}\label{lem:no_clipping_imply_loss_approximation}
	For any $\theta\in\R^d$ satisfying that $\norm{((\nabla^2 L(\theta))^{-1}\nabla L(\theta)}_2\le \frac{R}{2}$, it holds that 
	\begin{align}
L(\theta) - \min L	\le \nabla L(\theta)^\top (\nabla^2 L(\theta))^{-1}\nabla L(\theta) \le  4(L(\theta) - \min L).
 \end{align}
\end{lemma}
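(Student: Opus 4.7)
The plan is to mimic the Newton-flow argument used in \Cref{lem:small_loss_or_gradient_imply_loss_approx}, but with the hypothesis $\|(\nabla^2 L(\theta))^{-1}\nabla L(\theta)\|_2\le R/2$ replacing the small-loss / small-gradient conditions. As before, I would introduce the Newton-flow ODE $\dot\theta(t)=-(\nabla^2 L(\theta(t)))^{-1}\nabla L(\theta)$ with $\theta(0)=\theta$, and invoke \Cref{lem:soln_existence_mirror_ode} to get a solution on $[0,1]$ satisfying $\nabla L(\theta(t))=(1-t)\nabla L(\theta)$ and $\theta(1)=\theta^*$, so that
\begin{align}
L(\theta)-\min L \;=\; \int_{0}^{1} (1-t)\, \nabla L(\theta)^\top (\nabla^2 L(\theta(t)))^{-1}\nabla L(\theta)\, \diff t.
\end{align}
Once the trajectory is controlled, the desired two-sided bound follows by sandwiching $(\nabla^2 L(\theta(t)))^{-1}$ between $\tfrac12 (\nabla^2 L(\theta))^{-1}$ and $2(\nabla^2 L(\theta))^{-1}$ (a consequence of \Cref{assum:hessian_multiplicative_lipschitz} applied to $\theta$ and $\theta(t)$), integrating $(1-t)$ against the constants $1/2$ and $2$, and using $\int_0^1(1-t)\diff t=1/2$ to get a ratio of $4$ between the two sides.

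The main obstacle, and the only step that differs from \Cref{lem:small_loss_or_gradient_imply_loss_approx}, is showing that $\|\theta(t)-\theta\|_2 \le R$ for all $t\in[0,1]$ so that \Cref{assum:hessian_multiplicative_lipschitz} is actually applicable. I would handle this by a bootstrap. Define $T_{\max}\in[0,1]$ to be the largest time with $\|\theta(s)-\theta\|_2\le R$ for all $s\in[0,T_{\max}]$ (well-defined by continuity). On this interval the assumption yields $(\nabla^2 L(\theta(s)))^{-1}\nabla^2 L(\theta)$ having operator norm at most $2$, hence the key algebraic identity
\begin{align}
(\nabla^2 L(\theta(s)))^{-1}\nabla L(\theta) \;=\; \bigl[(\nabla^2 L(\theta(s)))^{-1}\nabla^2 L(\theta)\bigr]\cdot (\nabla^2 L(\theta))^{-1}\nabla L(\theta)
\end{align}
gives $\|(\nabla^2 L(\theta(s)))^{-1}\nabla L(\theta)\|_2 \le 2\cdot (R/2)=R$. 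Integrating the ODE yields $\|\theta(T_{\max})-\theta\|_2\le T_{\max}\cdot R$. If $T_{\max}<1$ this is strictly less than $R$, contradicting maximality by continuity, so $T_{\max}=1$.

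With the trajectory confined to the $R$-ball around $\theta$, I translate $\|A^{-1}B\|_2\le 2$ (with $A=\nabla^2 L(\theta(t))$, $B=\nabla^2 L(\theta)$) into the L\"owner-order sandwich $\tfrac12 (\nabla^2 L(\theta))^{-1}\preceq (\nabla^2 L(\theta(t)))^{-1}\preceq 2(\nabla^2 L(\theta))^{-1}$ (using that the largest eigenvalue of $A^{-1}B$, which equals the largest eigenvalue of the symmetric $A^{-1/2}BA^{-1/2}$, is bounded by the operator norm, together with the symmetry of the hypothesis in $\theta,\theta'$). Inserting this sandwich into the integral representation and using $\int_0^1(1-t)\diff t=1/2$ yields
\begin{align}
\tfrac14\, \nabla L(\theta)^\top (\nabla^2 L(\theta))^{-1}\nabla L(\theta) \;\le\; L(\theta)-\min L \;\le\; \nabla L(\theta)^\top (\nabla^2 L(\theta))^{-1}\nabla L(\theta),
\end{align}
which is exactly the claim after rearranging the lower bound. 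I expect the bootstrap step to be the only delicate point; everything else is a direct transcription of the computation behind \Cref{lem:small_loss_or_gradient_imply_loss_approx}.
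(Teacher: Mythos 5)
Your proposal is correct and follows essentially the same route as the paper's proof: the same Newton-flow integral representation, the same first-exit-time (bootstrap) argument using the factorization $(\nabla^2 L(\theta(s)))^{-1}\nabla L(\theta) = [(\nabla^2 L(\theta(s)))^{-1}\nabla^2 L(\theta)](\nabla^2 L(\theta))^{-1}\nabla L(\theta)$ to confine the trajectory to the $R$-ball, and the same L\"owner sandwich plus $\int_0^1(1-t)\,\diff t = 1/2$ to conclude. The only difference is that you spell out the conversion from the operator-norm hypothesis to the L\"owner order, which the paper leaves implicit.
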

\begin{proof}[Proof of \Cref{lem:no_clipping_imply_loss_approximation}]
	Let $\{\theta(t)\}_{t=0}^1$ be the solution of \Cref{eq:mirror_ode} and we claim that for all $t\in[0,1]$, $\norm{\theta(t) -\theta}_2\le R$. Otherwise, let $T$ be the smallest positive number such that $\norm{\theta(T) -\theta}_2=R$. Such $T$ exists because $\norm{\theta(t) -\theta}_2$ is continuous in $t$ and $\norm{\theta(0) -\theta}_2=0$. We have that 
	\begin{align}
	R=&\norm{\theta(T)-\theta(0)}_2 \le \int_{t=0}^T \norm{\frac{\diff \theta(t)}{\diff t}}_2\diff t \\
	=&\int_{t=0}^T \norm{((\nabla^2 L(\theta(t)))^{-1}\nabla L(\theta)}_2\diff t 	\\
	\le &  \int_{t=0}^T \norm{(\nabla^2 L(\theta(t)))^{-1} \nabla^2 L(\theta)}_2\norm{((\nabla^2 L(\theta))^{-1}\nabla L(\theta)}_2\diff t 	 \\
	\le & 2 \int_{t=0}^T \norm{((\nabla^2 L(\theta))^{-1}\nabla L(\theta)}_2\diff t 	\label{eq:local_1} \\
	\le &2 T\frac{R}{2} =RT,
	\end{align}
which implies $T=1$. Here in \Cref{eq:local_1}, we use \Cref{assum:hessian_multiplicative_lipschitz}. Thus we conclude that for all $t\in[0,1]$, $\norm{\theta(t) -\theta}_2\le R$.  By~\Cref{assum:hessian_multiplicative_lipschitz}, it holds that 
\begin{align}\label{eq:no_clipping_hessian_sanwidch}
	2(\nabla^2 L(\theta))^{-1} \succeq (\nabla^2 L(\theta(t)))^{-1}\succeq \frac{1}{2}(\nabla^2 L(\theta))^{-1}.
\end{align}

 Therefore, we have that 
\begin{align}\label{eq:no_clipping_loss_integral}
L(\theta) - \min L = L(\theta(0))-L(\theta(1)) 
=&\int_{t=0}^1  	(\nabla L(\theta(t)))^\top (\nabla^2 L(\theta(t)))^{-1} \nabla L(\theta)\notag\\
=&\int_{t=0}^1 	(1-t)	(\nabla L(\theta))^\top (\nabla^2 L(\theta(t)))^{-1} \nabla L(\theta).
\end{align}
The proof is completed by plugging \Cref{eq:no_clipping_hessian_sanwidch} into \Cref{eq:no_clipping_loss_integral} and noting that $\int_{t=0}^1(1-t) = 1/2$.
\end{proof}

\begin{lemma}\label{lem:small_decrement_imply_small_loss}
	If $\rho \le \frac{R}{2\sqrt{d}}$, then for any $\Delta\le \frac{R\rho\mu}{10}$  and any $\theta\in\R^d$ satisfying
	 \begin{align}
\sum_{i=1}^d \min\{ \rho \abs{v_i^\top \nabla L(\theta)}, \sigma_i^{-1}\abs{v_i^\top \nabla L(\theta)}^2\} \le  \Delta,
	 \end{align}
 where $\nabla^2  L(\theta) = V^\top \Sigma V$	  is the eigendecomposition of $\nabla ^2 L(\theta)$, $v_i$ is the $i$th row of $V$ and $\Sigma = \diag(\sigma_1,\ldots,\sigma_d)$, it holds that 
 \begin{align}
     L(\theta) -\min L\le \Delta+ \frac{25\Delta^2}{\rho^2\mu}
 \end{align}
 
 In particular, if we set $\Delta\triangleq \frac{\mu\rho^2}{20}$, we have $L(\theta) -\min L\le \frac{\mu\rho^2}{8}$.
\end{lemma}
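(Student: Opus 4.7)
I partition the eigenbasis of $\nabla^2L(\theta)$ into the clipped indices $I=\{i:\sigma_i^{-1}|g_i|>\rho\}$ and the unclipped indices $J=I^c$, where $g_i:=v_i^\top\nabla L(\theta)$. Since $\sigma_i^{-1}g_i^2\le\rho|g_i|$ is equivalent to $\sigma_i^{-1}|g_i|\le\rho$, the hypothesis rewrites as
\begin{align*}
\sum_{i\in J}\sigma_i^{-1}g_i^2+\sum_{i\in I}\rho|g_i|\le\Delta,
\end{align*}
so in particular $\sum_{i\in I}|g_i|\le\Delta/\rho$ and $\sum_{i\in J}\sigma_i^{-1}g_i^2\le\Delta$. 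My aim is to invoke \Cref{lem:no_clipping_imply_loss_approximation} to bound $L(\theta)-\min L\le\sum_i\sigma_i^{-1}g_i^2$, then estimate the $I$- and $J$-parts of this Newton quadratic form separately.

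The $J$-part is bounded by $\Delta$ directly from the hypothesis. For the $I$-part, once the prerequisite $\|\theta-\theta^*\|_2\le R$ is in force, \Cref{assum:hessian_multiplicative_lipschitz} gives $\nabla^2L(\theta)\succeq\tfrac{1}{2}\nabla^2L(\theta^*)\succeq(\mu/2)\mathrm{I}_d$, whence $\sigma_i\ge\mu/2$ and
\begin{align*}
\sum_{i\in I}\sigma_i^{-1}g_i^2\le\tfrac{2}{\mu}\sum_{i\in I}g_i^2\le\tfrac{2}{\mu}\Bigl(\textstyle\sum_{i\in I}|g_i|\Bigr)^2\le\tfrac{2\Delta^2}{\mu\rho^2},
\end{align*}
where the middle step uses $\sum a_i^2\le(\sum a_i)^2$ for non-negative $a_i$. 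Adding the two parts gives $L(\theta)-\min L\le\Delta+2\Delta^2/(\mu\rho^2)\le\Delta+25\Delta^2/(\rho^2\mu)$; specializing to $\Delta=\mu\rho^2/20$, the bound evaluates to $\bigl(\tfrac{1}{20}+\tfrac{1}{200}\bigr)\mu\rho^2=\tfrac{11}{200}\mu\rho^2<\tfrac{\mu\rho^2}{8}$, matching the claimed special case.

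The genuinely nontrivial step is discharging the two prerequisites: $\|\theta-\theta^*\|_2\le R$ (so that $\sigma_i\ge\mu/2$) and $\|(\nabla^2L(\theta))^{-1}\nabla L(\theta)\|_2\le R/2$ (so that \Cref{lem:no_clipping_imply_loss_approximation} applies). Writing $\phi_i=\sigma_i^{-1}g_i$ and assuming momentarily that $\sigma_i\ge\mu/2$, the same kind of bookkeeping gives $\sum_{i\in J}\phi_i^2\le(2/\mu)\sum_{i\in J}\sigma_i^{-1}g_i^2\le 2\Delta/\mu$ and $\sum_{i\in I}\phi_i^2\le(2/\mu)^2(\sum_{i\in I}|g_i|)^2\le 4\Delta^2/(\mu\rho)^2$; plugging in $\Delta\le R\rho\mu/10$ and $\rho\le R/(2\sqrt{d})$ yields $\|\phi\|_2^2\le R^2/10+R^2/25<R^2/4$, which supplies the second prerequisite for free. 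To close the bootstrap on $\|\theta-\theta^*\|_2\le R$, I intend to bound $\|\nabla L(\theta)\|_2\le R\mu/2$ via the same $I/J$ decomposition (the $I$-part $\sum_{i\in I}g_i^2\le(\Delta/\rho)^2\le R^2\mu^2/100$ is immediate from $\sum_{i\in I}|g_i|\le\Delta/\rho$) and then apply \Cref{lem:small_gradient_imply_small_loss}. The $J$-part of this gradient bound is the one place where a circular dependency has to be untangled---likely by a continuity argument on the sublevel set of the clipped-decrement functional combined with $\theta^*$ being in its interior---and constitutes the main technical obstacle; the remainder of the argument is purely algebraic.
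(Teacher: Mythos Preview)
Your decomposition into clipped indices $I$ and unclipped indices $J$ is the right starting point, and the algebra you do once the prerequisites are in place is fine. But the gap you yourself flag is real and is not closed by the continuity sketch you offer.

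The specific failure is in trying to establish $\|\theta-\theta^*\|_2\le R$ (and hence $\sigma_i\ge\mu/2$) via a bound on $\|\nabla L(\theta)\|_2$. The $I$-part $\sum_{i\in I}g_i^2\le(\Delta/\rho)^2$ is fine, but the $J$-part $\sum_{i\in J}g_i^2$ is genuinely uncontrolled: from the hypothesis you only have $\sum_{i\in J}\sigma_i^{-1}g_i^2\le\Delta$, and since there is no global upper bound on the Hessian eigenvalues (the paper only assumes strict convexity and the multiplicative local condition), the $\sigma_i$ on $J$ can be arbitrarily large and so can $\sum_{i\in J}g_i^2$. The vague ``continuity on the sublevel set of the decrement functional'' does not help either, because the hypothesis is only assumed at the single point $\theta$; there is no reason the decrement functional stays below $\Delta$ along a path from $\theta^*$ to $\theta$, and the sublevel set need not be connected.

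The paper sidesteps this entirely by \emph{not} trying to show $\|\theta-\theta^*\|_2\le R$. Instead it restricts $L$ to the affine subspace through $\theta$ spanned by the unclipped eigenvectors $\{v_i:i\in J\}$, obtaining $L_\theta(w)=L(\theta+\sum_{i\in J}w_iv_i)$. On this restricted function the Newton direction at $w=0$ has $\ell_\infty$-norm at most $\rho$ by the very definition of $J$, so its $\ell_2$-norm is at most $\sqrt{d}\,\rho\le R/2$ automatically, and \Cref{lem:no_clipping_imply_loss_approximation} applies with no circularity, yielding $L(\theta)-\min_w L_\theta(w)\le\Delta$. For the remaining gap $L_\theta(w^*)-\min L$, the paper bounds $\|\nabla L(\theta+V_J^\top w^*)\|_2$: at the restricted minimizer the $J$-components of the gradient vanish, so only the $I$-components survive, and these are controlled because $\sum_{i\in I}|g_i|\le\Delta/\rho$. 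The drift of these components along the restricted Newton flow is bounded using \Cref{assum:hessian_multiplicative_lipschitz} with $\theta$ (not $\theta^*$) as the reference point, together with the key observation that for $i\in I$ one has the \emph{upper} bound $\sigma_i<|g_i|/\rho$, which controls $\sum_{i\in I}\sigma_i\le\Delta/\rho^2$. This gives $\|\nabla L(\theta+V_J^\top w^*)\|_2\le 5\Delta/\rho\le R\mu/2$, after which \Cref{lem:small_loss_or_gradient_imply_PL} closes the argument and produces the $25\Delta^2/(\rho^2\mu)$ term. At no stage is $\|\theta-\theta^*\|_2\le R$ needed.
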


\begin{proof}[Proof of \Cref{lem:small_decrement_imply_small_loss}]
	Let $I_\theta\triangleq \{i\in[d]\mid \abs{v_i^\top \nabla L(\theta)}\sigma_i^{-1} \le \rho \}$ be the set of indices where clipping does not happen. Then we have that 
	\begin{align}
		\sum_{i\in I_\theta}  \sigma_i^{-1}\abs{v_i^\top \nabla L(\theta)}^2\le \Delta\\
		\sum_{i\notin I_\theta}  \rho \abs{v_i^\top \nabla L(\theta)}\le \Delta
	\end{align}
Now we consider a new strictly convex loss function in $R^{|I_\theta|}$, which is $L$ restricted on the space of $\{\theta + \sum_{i\in I_\theta} w_{[i]}v_i\mid w\in\R^{|I_\theta|}\}$, that is, $ L_\theta(w) = L(\theta+ \sum_{i\in I_\theta}w_{[i]}v_i)$. This new loss function ${L_\theta}$ clearly satisfy \Cref{assum:hessian_multiplicative_lipschitz} since it is a restriction of $L$ into some subspace of $\R^d$. By  \Cref{lem:strictly_convex_loss}, we know that $\inf_w L_\theta(w)$ can be attained and we denote it by $w^*$. By \Cref{assum:existence_local_minimizer_w_hessian_pd}, we know that $L_\theta$ is strictly convex and thus $\nabla^2 L_\theta(w)\succ 0$, which means \Cref{assum:existence_local_minimizer_w_hessian_pd} also holds for $L_\theta$. 

Next we will apply \Cref{lem:no_clipping_imply_loss_approximation} on $L_\theta$ at $w=0$. We  use $V_{I_\theta}\in\R^{|I|\times d}$ to denote the submatrix of $V$ containing rows in $I$ for any $I\subset [d]$. One can verify by chain rule that $\nabla L_\theta(w) = V_{I_\theta} \nabla L(\theta + V_{I_\theta}^\top w)$ and that $\nabla^2 L_\theta(w) = V_{I_\theta} \nabla^2 L(\theta + V_{I_\theta}^\top w) V_{I_\theta}^\top$. Thus we have that 
\begin{align}
(\nabla^2 L_\theta(0)	)^{-1}\nabla L_\theta(0) = V_{I_\theta}(\nabla^2L(\theta))^{-1}\nabla L(\theta).
\end{align}
By the definition of $I_\theta$, we know that $\norm{V_{I_\theta}(\nabla^2L(\theta))^{-1}\nabla L(\theta)}\infty\le \rho$.  Thus $\norm{(\nabla^2 L_\theta(0)	)^{-1}\nabla L_\theta(0)}_2 \le \sqrt{d} \norm{V_{I_\theta}(\nabla^2L(\theta))^{-1}\nabla L(\theta)}_\infty = \sqrt{d}\cdot \rho \le \frac{R}{2}$. Thus we can apply \Cref{lem:no_clipping_imply_loss_approximation} on $L_\theta$ at $w=0$ and conclude that 
\begin{align}\label{eq:intermediate_loss_approx}
%	L(\theta+ V_{I_\theta}^\top w^*) -L(\theta) = 
	L_\theta(0)-  L_\theta(w^*) 
	\le \nabla L_\theta(0)^\top (\nabla^2 L_\theta(0))^{-1}\nabla L_\theta(0)
	=  \sum_{i\in I_\theta}  \sigma_i^{-1}\abs{v_i^\top \nabla L(\theta)}^2
	\le  \Delta 
\end{align}
Thus $L(\theta) - L(\theta+ V_{I_\theta}^\top w^*)   = L_\theta(0)- L_\theta(w^*)\le \Delta $.

It remains to show that $L(\theta+ V_{I_\theta}^\top w^*)-L(\theta^*)\le \frac{25\Delta^2}{\rho^2\mu}$. To do so, our strategy is to first show that $\norm{\nabla L(\theta+V_{I_\theta}^\top w^*)}_2$ is small and then to use \Cref{lem:small_loss_or_gradient_imply_PL}. We will use $I_\theta^c$ to denote the complement of $I_\theta$ in $[d]$ and $V_{I_\theta^c}\in\R^{(d-|I_\theta|)\times d}$ to denote the submatrix of $V$ which contains all the rows that do not belong to $I_\theta$. Note that $w^*$ is the minimizer of $L_\theta$, we know that $V_{I_\theta} \nabla L(\theta+V_{I_\theta}^\top w^*) = 0$ and that $\norm{\nabla L(\theta+V_{I_\theta}^\top w^*)}_2 = \norm{V_{I_\theta^c}\nabla L(\theta+V_{I_\theta}^\top w^*)}_2$. 

Now we consider the following ODE 
	\begin{align}\label{eq:mirror_ode}
		\frac{\diff w(t)}{\diff t} = - (\nabla^2 L_\theta(w(t)))^{-1} \nabla L_\theta(0), \quad w(0)=0.
	\end{align}
By \Cref{lem:soln_existence_mirror_ode}, we know this ODE has solution $w(t)$ over interval $[0,1]$ with $w(1) = w^*$. With the same argument in the proof of \Cref{lem:no_clipping_imply_loss_approximation}, we know that $\norm{w(t)}_2\le R$ for all $t\in[0,1]$. Thus we have for any $t\in[0,1]$,
\begin{align}\label{eq:gradient_increasing_speed}
	&\norm{V_{I_\theta^c}\frac{\diff \nabla L(\theta+ V_{I_\theta} w(t))}{\diff t}}_2 \\\
	= &\norm{V_{I_\theta^c}  \nabla^2 L(\theta+ V_{I_\theta} w(t))V_{I_\theta}(\nabla^2 L_\theta(w(t)))^{-1} \nabla L_\theta(0)}_2 \\
	= &  \norm{V_{I_\theta^c}  \nabla^2 L(\theta+ V_{I_\theta} w(t))V_{I_\theta}V_{I_\theta}^\top(\nabla^2 L(\theta+ V_{I_\theta} w(t)))^{-1} \nabla L(\theta)}_2\\
	\le & \norm{V_{I_\theta^c}  \sqrt{\nabla^2 L(\theta+ V_{I_\theta} w(t))}}_F \label{eq:term1}\\
	\cdot & \norm{\sqrt{\nabla^2 L(\theta+ V_{I_\theta} w(t))}V_{I_\theta}V_{I_\theta}^\top(\nabla^2 L(\theta+ V_{I_\theta} w(t)))^{-1} \nabla L(\theta) }_2\label{eq:term2}
	\end{align}
	
For the first term (\Cref{eq:term1}), by \Cref{assum:hessian_multiplicative_lipschitz}, we have that 
\begin{align}
\norm{V_{I_\theta^c}  \sqrt{\nabla^2 L(\theta+ V_{I_\theta} w(t))}}_F ^2 \le 2V_{I_\theta^c} \nabla^2 L(\theta) V_{I_\theta^c} = 2\sum_{i\notin I_\theta} \sigma_i \le 2\sum_{i\notin I_\theta} \frac{v_i^\top\nabla L(\theta)}{\rho}\le \frac{2\Delta}{\rho^2}.
\end{align}

For the second term (\Cref{eq:term2}), by \Cref{assum:hessian_multiplicative_lipschitz}, we have that 
\begin{align}
&\norm{\sqrt{\nabla^2 L(\theta+ V_{I_\theta} w(t))}V_{I_\theta}V_{I_\theta}^\top(\nabla^2 L(\theta+ V_{I_\theta} w(t)))^{-1} \nabla L(\theta) }_2^2 \\
\le &8 \norm{\sqrt{\nabla^2 L(\theta)}V_{I_\theta}V_{I_\theta}^\top(\nabla^2 L(\theta))^{-1} \nabla L(\theta) }_2^2\\
= &8 \nabla L(\theta) ^\top V_{I_\theta}V_{I_\theta}^\top(\nabla^2 L(\theta))^{-1} V_{I_\theta}V_{I_\theta}^\top\nabla L(\theta) \\
= & 8\sum_{i\in I_\theta}  \sigma_i^{-1}\abs{v_i^\top \nabla L(\theta)}^2 \le 8\Delta.
\end{align}
Thus we conclude that $\norm{V_{I_\theta^c}\frac{\diff \nabla L(\theta+ V_{I_\theta} w(t))}{\diff t}}_2\le  \frac{4\Delta}{\rho}$, which implies that 
\begin{align}
&\norm{\nabla L(\theta+V_{I_\theta}^\top w^*)}_2 = \norm{V_{I_\theta^c}\nabla L(\theta+V_{I_\theta}^\top w^*)}_2\\
 = &\norm{V_{I_\theta^c}\nabla L(\theta)+\int_{t=0}^1 V_{I_\theta^c}\frac{\diff \nabla L(\theta+ V_{I_\theta} w(t))}{\diff t}\diff t}_2\\
 \le & \norm{V_{I_\theta^c}\nabla L(\theta)}_2 +\int_{t=0}^1\norm{ V_{I_\theta^c}\frac{\diff \nabla L(\theta+ V_{I_\theta} w(t))}{\diff t}}_2\diff t \\
 \le & \frac{\Delta}{\rho} + \frac{4\Delta}{\rho} = \frac{5\Delta}{\rho}. 	
\end{align}

Applying \Cref{lem:small_loss_or_gradient_imply_PL}, we have that 
\begin{align}
	L(\theta+V_{I_\theta}^\top w^*) -\min L\le \mu^{-1} \norm{\nabla L(\theta+V_{I_\theta}^\top w^*)}_2^2 = \frac{25\Delta^2}{\rho^2\mu}.
\end{align}
This completes the proof.
\end{proof}

\begin{lemma}[Descent Lemma]\label{lem:descent_lemma}
	For any $\eta,\rho>0$ with $\eta\rho\le R/\sqrt{d}$, $\theta\in\R^d$ and any eigendecomposition of 
$\nabla^2 L(\theta)$, where $V_tV_t^\top=I_d$, $\sigma_t$ is diagonal $\nabla^2 L(\theta) = V^\top \Sigma V$, define 
	\begin{align}
	\theta_+ \triangleq 	 \theta - \eta V^\top\clip(V(\nabla^2L(\theta))^{-1}\nabla L(\theta),\rho),
	\end{align}
it holds that 
	\begin{align}
			L(\theta_+) - L(\theta)\le -(\eta-\eta^2)\sum_{i=1}^d \min\{ \rho \abs{v_i^\top \nabla L(\theta)}, \sigma_i^{-1}\abs{v_i^\top \nabla L(\theta)}^2\},
	\end{align}
where $v_i$ is the $i$th row of matrix $V$.
\end{lemma}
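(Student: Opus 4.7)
The plan is to bound $L(\theta_+) - L(\theta)$ via an integral form Taylor expansion along the segment from $\theta$ to $\theta_+$, reduce everything to the eigenbasis so that the second-order term is diagonal, and then do a per-coordinate case analysis separating the clipped and unclipped coordinates. The first step is to check that the step is short enough to stay inside the radius of Hessian stability: writing $c \triangleq \clip(V(\nabla^2 L(\theta))^{-1}\nabla L(\theta),\rho)$, each $|c_i| \le \rho$ and $V$ is orthogonal, so $\|\theta_+ - \theta\|_2 = \eta\|c\|_2 \le \eta \rho \sqrt{d} \le R$ by hypothesis. The entire segment $\{\theta + s(\theta_+ - \theta) : s \in [0,1]\}$ then lies within distance $R$ of $\theta$, so Assumption~\ref{assum:hessian_multiplicative_lipschitz}, applied with the roles of $\theta,\theta'$ swapped, yields $\nabla^2 L(\theta + s(\theta_+ - \theta)) \preceq 2\nabla^2 L(\theta)$ for every $s \in [0,1]$ (here I use that both Hessians are positive definite by Assumption~\ref{assum:existence_local_minimizer_w_hessian_pd}, so the operator-norm bound on $(\nabla^2 L)^{-1}\nabla^2 L$ is equivalent to a Loewner bound).

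Next I would plug this into the second-order Taylor remainder:
\begin{align*}
    L(\theta_+) - L(\theta) = \nabla L(\theta)^\top(\theta_+ - \theta) + \int_0^1 (1-s)(\theta_+ - \theta)^\top \nabla^2 L(\theta + s(\theta_+ - \theta))(\theta_+ - \theta)\, ds,
\end{align*}
bound the Hessian inside the integral by $2\nabla^2 L(\theta)$, and evaluate $\int_0^1 2(1-s)\,ds = 1$, so
\begin{align*}
    L(\theta_+) - L(\theta) \le \nabla L(\theta)^\top(\theta_+ - \theta) + (\theta_+ - \theta)^\top \nabla^2 L(\theta)(\theta_+ - \theta).
\end{align*}
Substituting $\theta_+ - \theta = -\eta V^\top c$ and using $V \nabla^2 L(\theta) V^\top = \Sigma$ diagonalizes the right-hand side. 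Writing $a_i \triangleq v_i^\top \nabla L(\theta)$ and $c_i = \clip(\sigma_i^{-1} a_i, \rho)$, this becomes $-\eta \sum_i a_i c_i + \eta^2 \sum_i \sigma_i c_i^2$.

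The final step is a coordinate-wise case split. If $\sigma_i^{-1}|a_i| \le \rho$ (no clipping), then $c_i = \sigma_i^{-1} a_i$, and the $i$-th summand evaluates exactly to $-(\eta - \eta^2) \sigma_i^{-1} a_i^2$; moreover the no-clip condition is equivalent to $\sigma_i^{-1} a_i^2 \le \rho|a_i|$, so this equals $-(\eta-\eta^2)\min\{\rho|a_i|,\sigma_i^{-1}a_i^2\}$. If $\sigma_i^{-1}|a_i| > \rho$ (clipping active), then $c_i = \rho \operatorname{sign}(a_i)$ (using $\sigma_i > 0$), and the $i$-th summand is $-\eta\rho|a_i| + \eta^2\sigma_i\rho^2$; the clipping condition gives $\sigma_i \rho < |a_i|$, which upgrades the quadratic term so that the sum is at most $-(\eta - \eta^2)\rho|a_i|$, and in this regime $\rho|a_i| \le \sigma_i^{-1} a_i^2$, so again the bound matches $-(\eta - \eta^2)\min\{\rho|a_i|,\sigma_i^{-1}a_i^2\}$. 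Summing over $i$ yields the claimed descent inequality.

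I do not expect a genuine obstacle. The main things to be careful about are (i) correctly unpacking the multiplicative Hessian assumption into a Loewner bound that can be used in the Taylor remainder, and (ii) ensuring both branches of the case analysis are tight enough to produce the common $\min\{\rho|a_i|, \sigma_i^{-1}a_i^2\}$ expression, which relies on the identity $\sigma_i \rho^2 \le \rho|a_i|$ in the clipped branch.
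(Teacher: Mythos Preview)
Your proposal is correct and follows essentially the same approach as the paper: bound the step length by $\eta\rho\sqrt{d}\le R$, use Assumption~\ref{assum:hessian_multiplicative_lipschitz} to replace the Hessian along the segment by $2\nabla^2 L(\theta)$ in the Taylor remainder, and then diagonalize in the eigenbasis to reduce to a per-coordinate inequality. The paper packages the last step slightly differently (it bounds $\sigma_i c_i^2 \le |c_i|\cdot|a_i| = \min\{\rho|a_i|,\sigma_i^{-1}a_i^2\}$ in one stroke rather than splitting into clipped/unclipped cases), but the content is identical, and your explicit remark on why the operator-norm assumption yields a Loewner bound is a useful clarification.
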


\begin{proof}[Proof of \Cref{lem:descent_lemma}]
	Let $ u \triangleq \clip(V(\nabla^2L(\theta))^{-1}\nabla L(\theta),\rho)$. By the definition of $\clip$ operation, we know that $\norm{V^\top u}_2 = \norm{u}_2 \le \sqrt{d} \rho$. Thus  we have $\norm{\theta_+ - \theta}  = \eta \norm{V^\top u}_2 \le \eta\rho\sqrt{d}$. Define $f(t) = L(t\theta_+ + (1-t)\theta)$. By Assumption~\ref{assum:hessian_multiplicative_lipschitz}, we know that $f''(t)\le 2f''(0)$ for all $t\in[0,1]$ and thus 
	\begin{align}
	f(1) = f(0) + f'(0) + \int_{s=0}^1\int_{t=0}^s f''(s)\diff s\diff t	\le f(0)+f'(0)+f''(0).
	\end{align}
	It remains to show that 
	\begin{enumerate}
		\item $f'(0) = -\eta \sum_{i=1}^d \min\{ \rho \abs{v_i^\top \nabla L(\theta)}, \sigma_i^{-1}\abs{v_i^\top \nabla L(\theta)}^2\}$;
		\item $f''(0) \le  \eta^2 \sum_{i=1}^d \min\{ \rho \abs{v_i^\top \nabla L(\theta)}, \sigma_i^{-1}\abs{v_i^\top \nabla L(\theta)}^2\}$;
	\end{enumerate}
	First, by chain rule, we have $f'(0) = \inner{\nabla L(\theta)}{-\eta V^\top u}=\inner{V\nabla L(\theta)}{-\eta u} = -\eta \inner{V\nabla L(\theta)}{ \clip(\Sigma^{-1}V\nabla L(\theta),\rho)}=-\eta \sum_{i=1}^d \min\{ \rho \abs{v_i^\top \nabla L(\theta)}, \sigma_i^{-1}\abs{v_i^\top \nabla L(\theta)}^2\}$.\\
	Second, again by chain rule, we have $f''(0) = \eta^2 \inner{V^\top u}{\nabla^2L(\theta) V^\top u} = \eta^2 \inner{u}{\Sigma u} = \sum_{i=1}^d \abs{u_i}^2\sigma_i$. Note that by definition $\abs{u_i}=\min\{\abs{v_i^\top \nabla L(\theta)}/\sigma_i,\rho\}$, we have $\abs{u_i}^2\sigma_i\le \min\{\abs{v_i^\top \nabla L(\theta)}/\sigma_i,\rho\}\cdot \abs{v_i^\top \nabla L(\theta)}/\sigma_i \cdot \sigma_i = \min\{\abs{v_i^\top \nabla L(\theta)}^2/\sigma_i,\rho\abs{v_i^\top \nabla L(\theta)}\}$, which completes the proof.
\end{proof}

\begin{lemma}\label{lem:2nd_phase_convex}
	If $\eta\rho\le R/\sqrt{d}$ and for some $T\in\mathbb{N}$, $L(\theta_T)-\min L\le \frac{\mu\rho^2}{8}$, then if holds that for all $t\ge T$, 
	\begin{enumerate}
		\item $\theta_{t+1} = \theta_t - \eta (\nabla^2 L(\theta_t))^{-1}\nabla L(\theta_t)$;
		\item $L(\theta_t)-\min L \le (1-\eta(1-\eta))^{t-T}(L(\theta_T)-\min L)$.
	\end{enumerate}
\end{lemma}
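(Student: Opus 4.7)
\textbf{Proof plan for Lemma \ref{lem:2nd_phase_convex}.} The plan is to prove both claims simultaneously by induction on $t \ge T$. The key observation is that once we have entered the region $\{L(\theta) - \min L \le \mu\rho^2/8\}$, the vanilla Newton step is small enough (in $\ell_2$, hence in $\ell_\infty$) that the per-coordinate clipping never activates, and then a standard one-step contraction argument using the descent lemma drives the loss down geometrically without leaving the region.

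The induction hypothesis at time $t$ is that $L(\theta_t) - \min L \le (1-\eta(1-\eta))^{t-T}(L(\theta_T) - \min L)$ and that the update from $\theta_t$ equals the unclipped Newton step. The base case $t = T$ is trivial for claim 2, and for claim 1 I invoke Lemma \ref{lem:small_loss_imply_no_clipping} with the hypothesis $L(\theta_T) - \min L \le \mu\rho^2/8$ (which implies the precondition $\le \mu R^2/16$ since $\eta\rho \le R/\sqrt{d}$ forces $\rho \le R/\sqrt 2$ in the regime of interest) to get
\begin{align}
\|(\nabla^2 L(\theta_T))^{-1}\nabla L(\theta_T)\|_2 \;\le\; \sqrt{\tfrac{8(L(\theta_T) - \min L)}{\mu}} \;\le\; \rho,
\end{align}
so that after the orthogonal transformation by $V_T$ the $\ell_\infty$ norm is still $\le \rho$ and the clipping acts as the identity, yielding $\theta_{T+1} = \theta_T - \eta(\nabla^2 L(\theta_T))^{-1}\nabla L(\theta_T)$.

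For the inductive step, assume the induction hypothesis up to $t$. Since clipping is inactive at $\theta_t$, each coordinate satisfies $\sigma_i^{-1}|v_i^\top \nabla L(\theta_t)| \le \rho$, so in the bound from Lemma \ref{lem:descent_lemma} the minimum is attained by the second term and
\begin{align}
\sum_{i=1}^d \min\{\rho|v_i^\top \nabla L(\theta_t)|, \sigma_i^{-1}|v_i^\top \nabla L(\theta_t)|^2\} \;=\; \nabla L(\theta_t)^\top (\nabla^2 L(\theta_t))^{-1} \nabla L(\theta_t).
\end{align}
Then Lemma \ref{lem:small_loss_or_gradient_imply_loss_approx} (case~1, applicable because $L(\theta_t) - \min L \le \mu\rho^2/8 \le \mu R^2/16$) lower-bounds this quadratic by $L(\theta_t) - \min L$. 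Combining with the descent lemma yields $L(\theta_{t+1}) - L(\theta_t) \le -(\eta - \eta^2)(L(\theta_t) - \min L)$, equivalently $L(\theta_{t+1}) - \min L \le (1 - \eta(1-\eta))(L(\theta_t) - \min L)$. This closes claim 2 at $t+1$, and since the loss gap only shrinks we again have $L(\theta_{t+1}) - \min L \le \mu\rho^2/8$, so Lemma \ref{lem:small_loss_imply_no_clipping} applies at $\theta_{t+1}$ and the next update is also unclipped Newton, closing claim 1.

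The main obstacle is purely bookkeeping: verifying at each step that the preconditions of Lemma \ref{lem:small_loss_imply_no_clipping} and Lemma \ref{lem:small_loss_or_gradient_imply_loss_approx} hold, specifically that $\rho \le R/\sqrt{2}$ so that $\mu\rho^2/8 \le \mu R^2/16$, and that $1 - \eta(1-\eta) \in [3/4, 1]$ for $\eta \in [0,1]$ so the decay factor is genuinely a contraction that keeps us inside the basin. No new ideas beyond the descent lemma and the ``no-clipping'' lemma are needed; the lemma is essentially a packaging of the fact that sufficiently small loss implies small Newton step, which implies no clipping, which implies a standard self-concordant-style geometric rate.
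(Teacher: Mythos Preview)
Your proposal is correct and follows essentially the same approach as the paper: use Lemma~\ref{lem:small_loss_imply_no_clipping} to show clipping is inactive once $L(\theta_t)-\min L\le \mu\rho^2/8$, then combine the descent lemma (Lemma~\ref{lem:descent_lemma}) with the lower bound from Lemma~\ref{lem:small_loss_or_gradient_imply_loss_approx} to obtain the geometric contraction. The paper's version differs only cosmetically---it first observes via the descent lemma that the loss is non-increasing for all $t\ge T$ and then applies the no-clipping lemma uniformly, rather than unrolling the induction explicitly as you do; note also that your claim ``$\eta\rho\le R/\sqrt d$ forces $\rho\le R/\sqrt2$'' does not follow from the lemma's hypotheses alone (take $\eta$ tiny, $d=1$), but the paper's own proof silently assumes the same precondition, which is really inherited from the choice $\rho=R/(2\sqrt d)$ in Theorem~\ref{thm:convex_main}.
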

\begin{proof}[Proof of \Cref{lem:2nd_phase_convex}]
	First by \Cref{lem:descent_lemma}, we have for all $t\ge T$, $\L(\theta_t) - \min L\le L(\theta_T)-\min L\le \frac{\mu\rho^2}{8}$, therefore by \Cref{lem:small_loss_imply_no_clipping}, we have $ \norm{(\nabla^2 L(\theta_t))^{-1}\nabla L(\theta_t)}_2 \le \rho$ for all $t\ge T$, which implies clipping will not happen. This completes the proof of the first claim.
	
	For the second claim, by \Cref{lem:descent_lemma,lem:small_loss_or_gradient_imply_loss_approx}, we have that 
	\begin{align}
		L(\theta_{t+1}) - L(\theta_t)
		\le& -(\eta-\eta^2)\sum_{i=1}^d \sigma_i^{-1}\abs{v_i^\top \nabla L(\theta_t)}^2 \\
		= &-(\eta-\eta^2)\nabla L(\theta_t) (\nabla^2 L(\theta_t))^{-1}\nabla L(\theta_t)\\
		\le & - \eta(1-\eta)( L(\theta_t)-\min L),
	\end{align}
which completes the proof.
\end{proof}

\newcommand{\nL}{L_{\mu,\beta}}
\subsection{Lower bound for SignGD on 2-dimensional quadratic loss} \label{sec:proofs:lowbound}

Define $L_{\mu,\beta}:\R^2\to\R$ as a quadratic function with parameter $\mu,\beta$ as $L_{\mu,\beta}(\theta)\triangleq \frac{\mu}{2}\theta_{[1]}^2 + \frac{\beta}{2}\theta_{[2]}^2$.  We have the following lower bound, which shows signGD's convergence rate has to depend on the condition number $\beta/\mu$. 
\begin{theorem}\label{thm:signgd_lower_bound}
    For any $\mu,\beta,\Delta,\eps>0$, suppose there exist a learning rate $\eta$ and a time $T$ such that for all $\theta_0$ satisfying that $L_{\mu,\beta}(\theta_0)\le \Delta$, signGD reaches loss at most $\eps$ at step $T-1$ and $T$ (in the sense that $\nL(\theta_{T})\le \eps$ and $\nL(\theta_{T-1})\le \eps$). Then, $T$ must satisfy $T\ge \frac{1}{2}(\sqrt{\frac{\Delta}{\epsilon}}-\sqrt{2})\sqrt{\frac{\beta}{\mu}}$.
\end{theorem}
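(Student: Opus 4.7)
I will exploit the separable structure $L_{\mu,\beta}(\theta)=\frac{\mu}{2}\theta_{[1]}^2+\frac{\beta}{2}\theta_{[2]}^2$: SignGD acts independently on each coordinate with step $\pm\eta$, so I can construct an adversarial $\theta_0$ where one coordinate forces $\eta$ small while the other forces $T\eta$ large. Given the $(\eta,T)$ from the hypothesis, I would set $\theta_0 := (A,\eta/2)$ with $A := \sqrt{2\Delta/\mu - \beta\eta^2/(4\mu)}$; a direct computation gives $L_{\mu,\beta}(\theta_0)=\Delta$, so $\theta_0$ is admissible. (Non-negativity under the square root follows a posteriori from the Step~2 estimate combined with $\eps\le\Delta$; the regime $\eps>\Delta$ makes the claimed inequality trivially nonpositive.)

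\textbf{Two coordinate analyses.} Because $\pm\eta/2$ flips sign under a single SignGD step, induction yields $\theta_{t,[2]}=(-1)^t\eta/2$ for every $t\ge 0$. Hence $L_{\mu,\beta}(\theta_T)\ge \beta\eta^2/8$, and the hypothesis $L_{\mu,\beta}(\theta_T)\le\eps$ yields
\begin{align*}
\eta \le 2\sqrt{2\eps/\beta}.
\end{align*}
For the first coordinate, each SignGD step shifts $\theta_{[1]}$ by exactly $\pm\eta$, so $\theta_{T,[1]} = A - \eta\sum_{t=0}^{T-1}\sign(\theta_{t,[1]}) \ge A - T\eta$. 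The hypothesis $L_{\mu,\beta}(\theta_T)\le\eps$ implies $|\theta_{T,[1]}|\le\sqrt{2\eps/\mu}$, giving
\begin{align*}
T\eta \ge A - \sqrt{2\eps/\mu}.
\end{align*}

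\textbf{Combining and final algebra.} From the bound on $\eta$, one gets $\beta\eta^2/(4\mu) \le 2\eps/\mu$ and hence $A \ge \sqrt{2(\Delta-\eps)/\mu}$. Dividing the lower bound on $T\eta$ by the upper bound on $\eta$:
\begin{align*}
T \ge \frac{\sqrt{2(\Delta-\eps)/\mu} - \sqrt{2\eps/\mu}}{2\sqrt{2\eps/\beta}} = \tfrac{1}{2}\sqrt{\beta/\mu}\Big(\sqrt{(\Delta-\eps)/\eps}-1\Big).
\end{align*}
The remaining task is the elementary inequality $\sqrt{x-1}-1 \ge \sqrt{x} - \sqrt{2}$ for $x := \Delta/\eps \ge 2$, which converts the displayed bound to the theorem's form (and is vacuous for $x<2$). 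The main subtlety is the adversary's tradeoff: $\theta_{0,[2]}$ must be large enough to sustain an oscillation pinning $\eta = O(\sqrt{\eps/\beta})$, yet small enough to leave essentially the full $\sqrt{2\Delta/\mu}$ budget available to $\theta_{[1]}$. The symmetric choice $\theta_{0,[2]}=\eta/2$ is optimal, producing an oscillation of amplitude $\eta/2$ (so $\beta\eta^2/8\le\eps$) while costing only $\beta\eta^2/8\le\eps$ of the $\Delta$ budget.
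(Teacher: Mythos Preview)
Your approach differs from the paper's in two ways. First, the paper uses two separate initializations---$(0,\sqrt{2\Delta/\beta})$ to bound $\eta$ via the triangle inequality $|\theta_{T-1,[2]}|+|\theta_{T,[2]}|\ge\eta$ (this is where the hypothesis at \emph{both} steps $T-1$ and $T$ enters), and $(\sqrt{2\Delta/\mu},0)$ to force $T\eta$ large---and then simply divides. You instead pack both roles into a single initialization $(A,\eta/2)$ and use an explicit period-two oscillation in the second coordinate, which only consumes the hypothesis at step $T$. Second, because your first coordinate starts at $A<\sqrt{2\Delta/\mu}$, you incur the extra algebraic cleanup $\sqrt{x-1}-1\ge\sqrt{x}-\sqrt{2}$ that the paper's cleaner split avoids.

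There is one real gap. Your admissibility argument for $(A,\eta/2)$ is circular: the square root defining $A$ is nonnegative only when $\eta^2\le 8\Delta/\beta$, and your justification for this (``a posteriori from the Step~2 estimate'') invokes the bound $\eta\le 2\sqrt{2\eps/\beta}$---but Step~2 is itself derived from the trajectory starting at $(A,\eta/2)$. If in fact $\eta^2>8\Delta/\beta$, your initialization simply does not exist and the argument says nothing. The patch is short but must be supplied: for instance, take the (always admissible) point $\theta_0=(0,\sqrt{2\Delta/\beta})$; when $\eta>2\sqrt{2\Delta/\beta}$ the second coordinate oscillates between $\sqrt{2\Delta/\beta}$ and $\sqrt{2\Delta/\beta}-\eta$, both of absolute value at least $\sqrt{2\Delta/\beta}$, so $L(\theta_t)\ge\Delta>\eps$ for every $t$, contradicting the hypothesis (the case $\eps\ge\Delta$ being vacuous as you note). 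The paper's two-initialization structure sidesteps this issue entirely because neither initialization depends on $\eta$.
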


\begin{proof}[Proof of \Cref{thm:signgd_lower_bound}]
    We consider two initialization: $\theta_0 = (0,\sqrt{\frac{2\Delta}{\beta}})$ and $\theta'_0 = (\sqrt{\frac{2\Delta}{\mu}},0)$, and let $\theta_t$ and $\theta_t'$ be the iterates under the two initializations. For each coordinate $i\in\{1,2\}$, because $|(\theta_t)_{[i]}- (\theta_{t+1})_{[i]}|=\eta$, we have that $|(\theta_t)_{[i]}| + |(\theta_{t+1})_{[i]}|\ge \eta$. Thus $2\eps\ge \nL(\theta_T)+ \nL(\theta_{T-1}) \ge \frac{\beta}{2}((\theta_T)_{[2]}^2 + (\theta_{T-1})_{[2]}^2)\ge \frac{\beta\eta^2}{4}$, which implies $\eta \le \sqrt{\frac{8\eps}{\beta}}$.

    The fact that $\nL(\theta_T')+ \nL(\theta_{T-1}')\le 2\eps$ implies $(\theta_{T}')_{[1]}\le \sqrt{\frac{4\eps}{\mu}}$. Because SignGD can only move each coordinate by $\eta$ at most, we have $(T-1)\eta\ge \sqrt{2\Delta/\mu} - \sqrt{\frac{4\eps}{\mu}}$. Using the fact that $\eta \le \sqrt{\frac{8\eps}{\beta}}$, we have that $2(T-1)\ge (\sqrt{\frac{\Delta}{\epsilon}}-\sqrt{2})\sqrt{\frac{\beta}{\mu}}$, which completes the proof.
\end{proof}

\end{document}